\theoremstyle{plain}
\newtheorem{thm}{Theorem}
\newtheorem{lem}[thm]{Lemma}
\newtheorem{prop}[thm]{Proposition}
\newtheorem{cor}{Corollary}
\theoremstyle{definition}
\theoremstyle{remark}
\newtheorem*{rem}{Remark}
\newcommand{\phiv}{\boldsymbol \phi}
\newcommand{\omegav}{\boldsymbol \omega}
\newcommand{\alphav}{\boldsymbol \alpha}
\newcommand{\Sigmav}{\boldsymbol \Sigma}
\newcommand{\muv}{\boldsymbol \mu}
\newcommand{\zerov}{\boldsymbol 0}
\newcommand{\fv}{\mathbf{f}}
\newcommand{\Jv}{\mathbf{J}}
\newcommand{\yv}{\mathbf{y}}
\newcommand{\nv}{\mathbf{n}}
\newcommand{\zv}{\mathbf{z}}
\newcommand{\xv}{\mathbf{x}}
\newcommand{\Rv}{\mathbf{R}}
\newcommand{\Tv}{\bm{T}}
\newcommand{\Gv}{\mathbf{G}}
\newcommand{\bv}{\mathbf{b}}
\newcommand{\Iv}{\mathbf{I}}
\newcommand{\argmax}{\operatornamewithlimits{argmax}}
\newcommand{\argmin}{\operatornamewithlimits{argmin}}
\newcommand{\tabincell}[2]{\begin{tabular}{@{}#1@{}}#2\end{tabular}}
\icmltitlerunning{Message Passing SVGD}
\begin{document}

\twocolumn[
\icmltitle{Message Passing Stein Variational Gradient Descent}



\icmlsetsymbol{equal}{*}

\begin{icmlauthorlist}
\icmlauthor{Jingwei Zhuo}{to}
\icmlauthor{Chang Liu}{to}
\icmlauthor{Jiaxin Shi}{to}
\icmlauthor{Jun Zhu}{to}
\icmlauthor{Ning Chen}{to}
\icmlauthor{Bo Zhang}{to}
\end{icmlauthorlist}

\icmlaffiliation{to}{Dept. of Comp. Sci. \& Tech., BNRist Center, State Key Lab for Intell. Tech. \& Sys., THBI Lab, Tsinghua University, Beijing, 100084, China}

\icmlcorrespondingauthor{Jingwei Zhuo}{zjw15@mails.tsinghua.edu.cn}
\icmlcorrespondingauthor{Jun Zhu}{dcszj@tsinghua.edu.cn}

\icmlkeywords{Message Passing, Graphical Models, Variational Inference}

\vskip 0.3in
]



\printAffiliationsAndNotice{}  

\begin{abstract}
Stein variational gradient descent (SVGD) is a recently proposed particle-based Bayesian inference method, which has attracted a lot of interest due to its remarkable approximation ability and particle efficiency compared to traditional variational inference and Markov Chain Monte Carlo methods. 
However, we observed that particles of SVGD tend to collapse to modes of the target distribution, and this particle degeneracy phenomenon becomes more severe with higher dimensions.
Our theoretical analysis finds out that there exists a negative correlation between the dimensionality and the repulsive force of SVGD which should be blamed for this phenomenon.
We propose \emph{Message Passing SVGD} (MP-SVGD) to solve this problem.
By leveraging the conditional independence structure of probabilistic graphical models (PGMs), MP-SVGD converts the original high-dimensional global inference problem into a set of local ones over the Markov blanket with lower dimensions.
Experimental results show its advantages of preventing vanishing repulsive force in high-dimensional space over SVGD, and its particle efficiency and approximation flexibility over other inference methods on graphical models.
\end{abstract}

\section{Introduction}

Stein variational gradient descent (SVGD) \cite{Liu2016SVBP} is a recently proposed inference method.
To approximate an intractable but differentiable target distribution, 
it constructs a set of particles iteratively along the optimal gradient direction in a vector-valued reproducing kernel Hilbert space (RKHS) towards minimizing the KL divergence.
SVGD does not confine the approximation within parametric families as commonly done in traditional variational inference (VI) methods.
Besides, SVGD is more particle efficient than traditional Markov Chain Monte Carlo (MCMC) methods: 
it generates diverse particles due to the deterministic repulsive force induced by kernels instead of Monte Carlo randomness.
These benefits make SVGD an appealing method and gain a lot of interest \cite{Pu2017vae,haarnoja2017rldeep,liu2017svpg,feng2017steingan}.

As a kernel-based method, the performance of SVGD relies on the choice of kernels and corresponding RKHS.
In previous work, an isotropic vector-valued RKHS with a kernel defined by some distance metric (Euclidean distance) over all the dimensions is used. Examples include the RBF kernel \cite{Liu2016SVBP} and the IMQ kernel \cite{Gorham2017IMQ}.
However, as discussed in \citet{Aggarwal2001surprising} and \citet{Ramdas2015decreasing}, distance metrics and corresponding kernels suffer from the curse of dimensionality.
Thus, a natural question is, 
is the performance of SVGD also affected by the dimensionality?

We observe that the dimensionality negatively affects the performance of SVGD: its particles tend to collapse to modes and this phenomenon becomes more severe with higher dimensions. 
To understand this phenomenon, we analyze the impact of dimensionality on the repulsive force, which is critical for SVGD to work as an inference method for minimizing the KL divergence,
and attribute the reason partially to the negative correlation between the repulsive force and the dimensionality under some assumption about the variational distribution through theoretical analysis and experimental verifications.
Our analysis takes an initial step towards understanding the non-asymptotic behavior of SVGD with a finite number of particles, which is important since inferring high dimensional distributions with a limited computational and storage resource is common in practice.
We propose Message Passing SVGD (MP-SVGD) to solve this problem when the target distribution is compactly described via a probabilistic graphical model (PGM) and thus the conditional independence structure can be leveraged.
MP-SVGD converts the original high-dimensional inference problem into a set of local ones with lower dimensions according to a decomposition of the KL divergence, and solves each local problem iteratively in an RKHS with a local kernel defined over the Markov blanket. 
Experimental results on both synthetic and real-world settings demonstrate the power of MP-SVGD over SVGD and other inference methods on graphical models.


\textbf{Related work} The idea of converting a global inference problem into several local ones is not new. Traditional methods such as (loopy) belief propagation (BP) \cite{Pearl1988probabilistic}, expectation propagation (EP) \cite{Minka2001EP} and variational message passing (VMP) \cite{Winn2005VMP} all share this spirit. 
However, 
VMP makes a strong mean-field and conjugate exponential family assumption, EP requires an exponential family approximation, and loopy BP does not guarantee $q$ to be a globally valid distribution as it relaxes 
the solution of marginals to be in an outer bound of the marginal polytope~\citep{Wainwright2008graphical}. 
Moreover, loopy BP requires further approximation in message to handle complex potentials, which restricts its expressive power. 
For example, Nonparametric BP (NBP) \cite{Sudderth2003NBP} approximates the messages with mixtures of Gaussians; Particle BP (PBP) \cite{Ihler2009PBP} approximates the message using an important sampling approach with either the local potential or the estimated beliefs as proposals; and Expectation Particle BP (EPBP) \cite{Lienart2015EPBP} extends PBP with adaptive proposals produced by EP.
Another drawback of loopy BP and its variants is that except some special cases where beliefs are tractable (e.g., Gaussian BP), numerical integration is required when using beliefs in subsequent tasks like evaluating the expectation over some test function.

On the other hand, MCMC methods like Gibbs sampling avoid these problems since the expectation can be estimated directly from samples.
However, Gibbs sampling can only be used in some cases where the conditional distribution can be sampled efficiently (e.g., \citet{Martens2010parallelizable}). 

Compared to the aforementioned methods, MP-SVGD is more appealing since it requires neither tractable conditional distribution nor restrictions over potentials, which makes it suitable as a general purpose inference tool for graphical models with differentiable densities.

Finally, we note that the idea of improving SVGD over graphical models by leveraging the conditional independence property was developed concurrently and independently by \citet{Wang2018graph}. 
The difference between their work and ours lies in the derivation of the method and the implications that are explored.
\citet{Wang2018graph} also observed the particle degeneracy phenomenon of SVGD and proposed a similar method called Graphical SVGD by introducing graph structured kernels and corresponding Kernelized Stein Discrepancy (KSD). Rather than that, MP-SVGD is derived by a decomposition of the KL divergence.
Moreover, we develop a theoretical explanation for the particle degeneracy phenomenon by analyzing the relation between the dimensionality and the repulsive force.

\section{Preliminaries}


Given an intractable distribution $p(\xv)$ where $\xv = [x_1,...,x_D]^\top \in \mathcal{X} \subset \mathbb{R}^D$, variational inference aims to find a tractable distribution $q(\xv)$ supported on $\mathcal{X}$ to approximate $p(\xv)$ by minimizing some distribution measure, e.g., the (exclusive) KL divergence $\mathrm{KL}(q\|p)$. Instead of assigning a parametric assumption over $q(\xv)$, Stein variational gradient descent (SVGD) \cite{Liu2016SVBP} constructs $q(\xv)$ from some initial distribution $q_0(\xv)$ via a sequence of density transformations induced by the transformation on random variable: $\Tv(\xv) = \xv + \epsilon \phiv(\xv)$, where $\epsilon$ is the step size and $\phiv(\cdot) : \mathcal{X} \to \mathbb{R}^D$ denotes the transformation direction.
To be tractable and flexible, $\phiv$ is restricted to a vector-valued reproducing kernel Hilbert space (RKHS) $\mathcal{H}^D = \mathcal{H}_0 \times \cdots \times \mathcal{H}_0$, where $\mathcal{H}_0$ is the scalar-valued RKHS of kernel $k(\cdot,\cdot)$ which is chosen to be positive definite and in the Stein class of $p$ \cite{Liu2016KSD}.
Examples include the RBF kernel $k(\xv,\yv) = \exp\big(-\|\xv-\yv\|_2^2 / (2h) \big)$ \cite{Liu2016SVBP} and the IMQ kernel $k(\xv,\yv) = 1/\sqrt{1+\|\xv-\yv\|_2^2 / (2h)}$ \cite{Gorham2017IMQ}, where the bandwidth $h$ is commonly chosen according to the median heuristic \cite{Scholkopf2001kernel}\footnote{$h = \mathrm{med}^2$, where $\mathrm{med}$ is the median of the pairwise distances $\|\xv-\yv\|_2$, $\xv,\yv \sim q$.}.

Now, let $q_{[\Tv]}$ denote the density of the transformed random variable $\Tv(\xv) = \xv + \epsilon \phiv(\xv)$ where $\xv \sim q$ and $\epsilon$ is small enough so that $\Tv$ is invertible. Under this notion, we have
\begin{equation}\label{Eq:SVGD}
\min_{\|\phiv\|_{\mathcal{H}^D} \leq 1} \nabla_\epsilon \mathrm{KL}(q_{[\Tv]} \| p) |_{\epsilon = 0} = - \max_{\|\phiv\|_{\mathcal{H}^D} \leq 1} \mathbb{E}_{\xv \sim q} [\mathcal{A}_p \phiv(\xv)],
\end{equation}
where $\mathcal{A}_p$ is the Stein operator and
$$
\mathcal{A}_p \phiv(\xv) = \phi(\xv)^\top \nabla_{\xv} \log p(\xv) + \mathrm{trace}(\nabla_{\xv} \phiv(\xv)).
$$
As shown in \cite{Liu2016KSD} and \cite{Chwialkowski2016KSD}, the right hand side of Eq. (\ref{Eq:SVGD}) has a closed-form solution $\phiv^*/\|\phiv^*\|_{\mathcal{H}^D}$ where
\begin{equation}\label{Eq:svg}
\phiv^*(\xv) = \mathbb{E}_{\yv \sim q}\left[ k(\xv,\yv)\nabla_{\yv} \log p(\yv) + \nabla_{\yv} k(\xv,\yv) \right].
\end{equation}
$\phiv^*(\xv)$ consists of two parts: the kernel smoothed gradient $\Gv(\xv;p,q) = \mathbb{E}_{\yv \sim q}\left[ k(\xv,\yv)\nabla_{\yv} \log p(\yv)\right]$  and the repulsive force $\Rv(\xv; q) = \mathbb{E}_{\yv \sim q}\left[ \nabla_{\yv} k(\xv,\yv) \right]$.
By doing the transformation $\xv \leftarrow \xv + \epsilon \phiv^*(\xv)$ iteratively, $q_{[\Tv]}$ decreases the KL divergence along the steepest direction in $\mathcal{H}^D$.
The iteration ends when $\phiv^*(\xv) \equiv 0$ and thus $\Tv$ reduces to the identity mapping.
This condition is equivalent to $q=p$ when $k(\xv,\yv)$ is strictly positive definite in a proper sense \cite{Liu2016KSD,Chwialkowski2016KSD}.

In practice, a set of particles $\{\xv^{(i)}\}_{i=1}^M$ are used to practically represent $q(\xv)$ by the empirical distribution $\hat{q}_M(\xv) = \frac{1}{M}\sum_{i=1}^M \delta_{\xv^{(i)}}(\xv)$, where $\delta$ is the Dirac delta function.
These particles are are updated iteratively via $\xv^{(i)} \leftarrow \xv^{(i)} + \epsilon \hat{\phiv}^*(\xv^{(i)})$, where
\begin{equation}\label{Eq:particlesvgd}
\hat{\phi}^*(\xv) = \mathbb{E}_{\yv \sim \hat{q}_M} \left[k(\xv,\yv) \nabla_{\yv} \log p(\yv) + \nabla_{\yv} k(\xv, \yv)\right].
\end{equation}
When $M = 1$, the update rule becomes $\xv^{(1)} \leftarrow \xv^{(1)} + \epsilon \nabla_{\xv^{(1)}} \log p(\xv^{(1)})$, which corresponds to the gradient method to find the mode of $p(\xv)$.

\section{Towards Understanding the Impact of Dimensionality for SVGD}

Kernel-based methods suffers from the curse of dimensionality. For example, \citet{Ramdas2015decreasing} demonstrates that the power of nonparametric hypothesis testing using Maximum Mean Discrepancy (MMD) drops polynomially with increasing dimensions. 
It is reasonable to suspect that SVGD also suffers from similar problems.
In fact, as shown in the upper row of Fig.~\ref{figtoy}, even for $p(\xv) = \mathcal{N}(\xv|\zerov, \Iv)$, the performance of SVGD is unsatisfactory: though it correctly estimates the mean of $p(\xv)$, it underestimates the marginal variance, and this problem becomes more severe with higher dimensions. 
In other words, SVGD suffers from particle degeneracy in high dimensions in which particles become less diverse and tend to collapse to modes of $p(\xv)$.
In this section, we take an initial step toward understanding this 
through analyzing\footnote{All the derivation details can be found in the supplemental materials. We consider only the RBF kernel $k(\xv,\yv) = \exp\left(-\frac{\|\xv-\yv\|_2^2}{2h}\right)$. The IMQ kernel also shares similar properties and corresponding results can be found in the supplemental materials as well.} 
the repulsive force $\Rv(\xv;q)$.

\begin{figure}[!htb]
	\centering
  \includegraphics[width=0.5\textwidth]{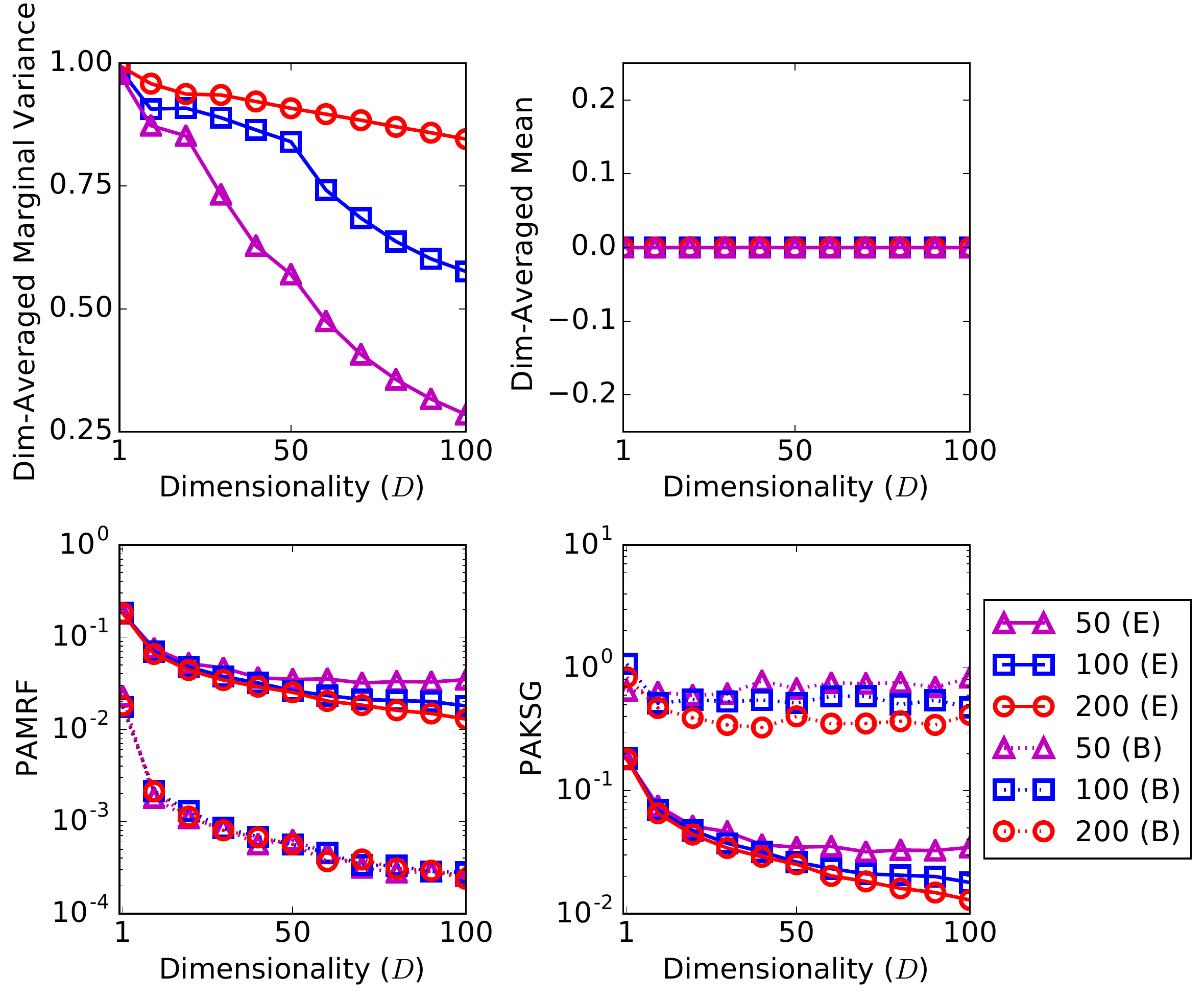}
	\caption{Results for inferring $p(\xv)=\mathcal{N}(\xv|\zerov,\Iv)$ using SVGD with the RBF kernel, where particles are initialized by $\mathcal{N}(\xv|\zerov,25 \Iv)$. Top two figures show the dimension-averaged marginal variance $\frac{1}{D}\sum_{d=1}^D \mathrm{Var}_{\hat{q}_M} (x_d)$ and mean $\frac{1}{D}\sum_{d=1}^D \mathbb{E}_{\hat{q}_M}[x_d]$ respectively, and bottom two figures show the particle-averaged magnitude of the repulsive force (PAMRF) $\frac{1}{M} \sum_{i=1}^M \|\Rv(\xv^{(i)};\hat{q}_M)\|_\infty$ and kernel smoothed gradient (PAKSG) $\frac{1}{M} \sum_{i=1}^M \|\Gv(\xv^{(i)};p,\hat{q}_M)\|_\infty$ respectively, at both the beginning (dotted;B) and the end of iterations (solid;E) with different number of particles $M = 50, 100$ and $200$.}
	\label{figtoy}
\end{figure}


First we highlight the importance of $\Rv(\xv;q)$. 
Referring to Eq. (\ref{Eq:svg}), we have $\phi^*(\xv) = \Gv(\xv;p,q)+\Rv(\xv;q)$, and
we can show that the kernel smoothed gradient $\Gv(\xv;p,q)$ corresponds to the steepest direction for maximizing $\mathbb{E}_{\xv \sim q}[\log p(\xv)]$, i.e.,
\begin{equation}\label{Eq:kernelgrad}
\frac{\Gv (\xv; p, q)}{\|\Gv(\xv;p,q)\|_{\mathcal{H}^D}} = \argmax_{\|\phiv\|_{\mathcal{H}^D} \leq 1} \nabla_{\epsilon} \mathbb{E}_{\zv \sim q_{[\Tv]}}[\log p(\zv)]|_{\epsilon = 0},
\end{equation}
where $\zv = \Tv(\xv) = \xv + \epsilon \phiv(\xv)$.
The convergence condition $\Gv(\xv;p,q) \equiv \zerov$ corresponds to $\nabla_{\yv} \log p(\yv) = \zerov$ for $q(\yv) \neq 0$, i.e., the optimal $q(\xv)$ collapses to modes of $p(\xv)$. 
This implies that without $\Rv(\xv;q)$, $\Gv(\xv;p,q)$ alone corresponds to the gradient method to find modes of $p(\xv)$.
So $\Rv(\xv;q)$ is critical for SVGD to work as an inference algorithm for minimizing the KL divergence. 

However, with a kernel measuring the global similarity in $\mathcal{X}\subset \mathbb{R}^D$ (e.g., the RBF kernel), the repulsive force becomes
$$
\Rv(\xv;q) = \mathbb{E}_{\yv \sim q}\left[ \exp\left(-\frac{\|\xv-\yv\|_2^2}{2h}\right) \frac{\xv - \yv}{h} \right].
$$
Unlike $\Gv(\xv;p,q)$ in which the bandwidth $h$ only appears as a denominator for $\|\xv-\yv\|_2^2$ and can be chosen using the median heuristic, the bandwidth in $\Rv(\xv;q)$ also appears as a denominator for $\xv-\yv$. As a result, finding a proper $h$ for $\Rv(\xv;q)$ will be hard 
and the magnitude of $\Rv(\xv;q)$ is bounded as
\begin{eqnarray}\label{Eq:repulmag}
\|\Rv(\xv; q)\|_\infty \leq \mathbb{E}_{\yv \sim q}\left[ \frac{2}{e}\cdot \frac{\|\xv-\yv\|_\infty}{\|\xv-\yv\|_2^2} \right]
\end{eqnarray}
for any $h > 0$.
Intuitively, when $\|\xv-\yv\|_\infty/\|\xv-\yv\|_2^2 \ll 1$ for most regions of $q$, $\|\Rv(\xv;q)\|_\infty$ would be small, making SVGD dynamics greatly dependent on $\Gv(\xv;p,q)$, especially in the beginning stage where $q$ does not match $p$ and $\|\Gv(\xv;p,q)\|_\infty$ is large.
Besides, though the theoretical convergence condition that $\phiv^*(\xv) \equiv \zerov$ {\it iff} $q = p$ still holds, the vanishing repulsive force weakens it in reducing the difference between $\Gv(\xv;p,q) \equiv \zerov$ and $\phiv^*(\xv) \equiv \zerov$. 
These characteristics would bring problems in practice when $q$ is approximated by a set of particles $\{\xv^{(i)}\}_{i=1}^M$:
the empirical convergence condition $\hat{\phi}^*(\xv^{(i)}) = \zerov, \forall i \in \{1,...,M\}$ does not guarantee\footnote{An extreme case is as follows: when $\xv^{(i)} = \xv^*$ with $\xv^* = \argmax_{\xv} \log p(\xv)$ (i.e., the MAP) holds for any $i \in \{1,...,M\}$, the empirical convergence condition is satisfied.} $\{\xv^{(i)}\}_{i=1}^M$ to be a good approximation of $p$, 
and the $\Gv(\xv;p,q)$-dominant dynamic would result in collapsing particles. 


Now, a natural question is, for which $q$ does this intuition hold?
One example is $q$ to be Gaussian as summarized in the following proposition:
\begin{prop}\label{thm:gaussian}
Given the RBF kernel $k(\xv,\yv)$ and $q(\yv)=\mathcal{N}(\yv|\muv,\Sigmav)$, the repulsive force satisfies
$$
\|\Rv(\xv;q)\|_\infty \leq \frac{\sqrt{D}}{\lambda_{\min}(\Sigmav)(\frac{D}{2}+1)(1+\frac{2}{D})^{\frac{D}{2}}} \|\xv-\muv\|_\infty,
$$
where $\lambda_{\min}(\Sigmav)$ is the smallest eigenvalue of $\Sigmav$. By using $\lim_{x \to 0}(1+x)^{1/x}=e$, we have $\|\Rv(\xv;q)\|_\infty \lesssim \|\xv-\muv\|_\infty / (\lambda_{\min}(\Sigmav)\sqrt{D})$.
\end{prop}
Proposition \ref{thm:gaussian} indicates that the upper bound of $\|\Rv(\xv;q)\|_\infty$ negatively correlates with $D$.
In practice, since $\Rv(\xv;\hat{q}_M)$ is an unbiased estimate of $\Rv(\xv;q)$, we can also bound $\|\Rv(\xv;\hat{q}_M)\|_\infty \lesssim \|\xv-\muv\|_\infty / (\lambda_{\min}(\Sigmav)\sqrt{D})$. Apart from the Gaussian distribution, we can prove that such a negative correlation exists for $\Rv(\xv;\hat{q}_M)$ in a more general case:
\begin{prop}\label{mainthm}
Let $k(\xv,\yv)$ be an RBF kernel. Suppose $q(\yv)$ is supported on a bounded set $\mathcal{X}$ which satisfies $\|\yv\|_\infty \leq C$ for $\yv \in \mathcal{X}$, and $\mathrm{Var}(y_d|y_1,...,y_{d-1}) \geq C_0$ almost surely for any $1 \leq d \leq D$. Let $\{\xv^{(i)}\}_{i=1}^M$ be a set of samples of $q$ and $\hat{q}_M$ the corresponding empirical distribution. Then, for any $\|\xv\|_\infty \leq C$, $\alpha, \delta \in (0,1)$, there exists $D_0 > 0$, such that for any $D > D_0$,
\begin{eqnarray}\label{tailrepulsiveforce}
\|R(\xv;\hat{q}_M)\|_\infty \leq \frac{2}{eD^{\alpha}}
\end{eqnarray}
holds with at least probability $1 - \delta$. 
\end{prop}
In proposition \ref{mainthm}, the bounded support assumption is relatively mild: examples include distributions defined on the images, in which the pixel intensity lies in a bounded interval. 
Requiring the conditional variance is larger than some constant reflects that the stochasticity for each dimension will not be eliminated by knowing the values of other dimensions, which is a quite strong assumption.
However, as evaluated in experiments, the negative correlation exists for $q$ even when such assumptions do not hold. Thus proposition 2 may be improved with weaker assumptions.

Given these intuitions, we would like to explain the particle degeneracy phenomenon in Fig.~\ref{figtoy}. As shown in the bottom row, there exists a negative correlation between $\|\Rv(\xv;\hat{q}_M)\|_\infty$ and $D$, at both the beginning and the end of iterations. In the beginning stage, $\|\Gv(\xv;p,\hat{q}_M)\|_\infty$ keeps almost unchanged while $\|\Rv(\xv;\hat{q}_M)\|_\infty$ negatively correlates with $D$. This implies that the SVGD dynamics becomes more $\Gv(\xv;p,\hat{q}_M)$-dominant with larger $D$ at the beginning. When converged, $\hat{\phiv}^*(\xv^{(i)}) = 0$, which corresponds to $\|\Gv(\xv^{(i)};p,\hat{q}_M)\|_\infty = \|\Rv(\xv^{(i)};\hat{q}_M)\|_\infty$. In this case, we find an interesting phenomenon that $\|\Rv(\xv;\hat{q}_M)\|_\infty$ tends to be constant with $M=50$ but the marginal variance still decreases with increasing dimensions. A possible explanation for this case is that assuming $q$ is Gaussian with $\Sigmav = \sigma^2 \Iv$, the variance $\sigma^2 = \lambda_{\min}(\Sigmav) \lesssim \|\xv^{(i)}-\muv\|_\infty/(\|\Rv(\xv^{(i)};\hat{q}_M)\|_\infty \sqrt{D})$ as proved in proposition \ref{thm:gaussian}. When $\|\Rv(\xv^{(i)};\hat{q}_M)\|_\infty$ is almost constant (and $\|\xv^{(i)}-\muv\|_\infty$ does not increase faster than $\sqrt{D}$), $\sigma^2$ will decrease as $D$ increases.
\section{Message Passing SVGD}

As discussed in Section 3, the unsatisfying property of SVGD 
comes from 
the negative correlation between the dimensionality and the repulsive force.
Though the high-dimensional nature of $p(\xv)$ is inevitable in practice, this problem can be solved for $p(\xv)$ with conditional independence structure, which is commonly described by probabilistic graphical models (PGMs).
Based on this idea, we propose \emph{Message Passing SVGD}, which converts the original high-dimensional inference problem 
into a set of local inference problems with lower dimensions. 

More specifically, we assume $p(\xv)$ can be factorized\footnote{Such a $p(\xv)$ is usually described using a factor graph, which unifies both directed and undirected graphical models. We refer the readers to \cite{Koller2009probabilistic} for details.} as $p(\xv) \propto \prod_{F \in \mathcal{F}} \psi_F(\xv_F)$ where the factor $F \subset \{1,...,D\}$ denotes the index set and $\xv_F=[x_d]_{d \in F}$. The Markov blanket $\Gamma_d = \cup\{F:F \ni d\} \setminus \{d\}$ contains neighborhood nodes of $d$ such that $p(x_d|\xv_{\neg d}) = p(x_d|\xv_{\Gamma_d})$.


\subsection{A Decomposition of the KL Divergence}

Our method relies on the key observation that we can decompose $\mathrm{KL}(q\|p)$ as 
\begin{equation}\label{Eq:KLdecomp}
\begin{aligned}
\mathrm{KL}(q\|p) & = \mathrm{KL}\big(q(x_d|\xv_{\neg d})q(\xv_{\neg d})\big\|p(x_d|\xv_{\Gamma_d})q(\xv_{ \neg d})\big) \\
&~~~~+ \mathrm{KL}\big(q(\xv_{\neg d})\big\|p(\xv_{\neg d})\big),
\end{aligned}
\end{equation}
where $\neg d = \{1,...,D\} \setminus \{d\}$ denotes the index set other than $d$.
Eq. (\ref{Eq:KLdecomp}) provides another perspective for minimizing $\mathrm{KL}(q\|p)$: instead of solving a global problem which minimizes $\mathrm{KL}(q\|p)$ over $q(\xv)$, we can iteratively solve a set of local problems which minimizes the localized divergence over $q(x_d|\xv_{\neg d})$ by keeping $q(\xv_{\neg d})$ fixed, i.e.,
\begin{equation} \label{eq:min-sub-kl}
\argmin_{q(x_d|\xv_{\neg d})} \mathrm{KL}\big(q(x_d|\xv_{\neg d})q(\xv_{\neg d})\big\|p(x_d|\xv_{\Gamma_d})q(\xv_{\neg d})\big).
\end{equation}
This idea resembles EP, which also performs local minimizations iteratively, however, for a localized version of the inclusive KL divergence $\mathrm{KL}(p\|q)$ \cite{Minka2001EP}. 
Another difference is that each local step in EP does not guarantees minimizing a global divergence \cite{Minka2005divergence},
while solving Problem (\ref{eq:min-sub-kl}) iteratively corresponds to minimizing the original $\mathrm{KL}(q\|p)$ due to the decomposition\footnote{In fact, when both $q$ and $p$ are differentiable, we can show that each localized divergence equals zero {\it iff} $q=p$, as detailed in the supplemental material.} in Eq. (\ref{Eq:KLdecomp}).

Eq. (\ref{Eq:KLdecomp}) requires decomposing $q(\xv)$ as $q(x_d|\xv_{\neg d})$ for each $d$, which makes it useless for VI methods with a parametric $q(\xv)$ except some special cases (e.g., $q(\xv)$ is Gaussian or fully factorized as $q(\xv)=\prod_{d=1}^D q(x_d)$). 
However, this decomposition is very suitable for transformation based methods like SVGD. 
Consider the transformation $\zv = \Tv(\xv) = [x_1,...,T_d(x_d),...,x_D]^\top$ for $\xv \sim q$,
where only the $d$th dimension is transformed and other dimensions are kept unchanged, we have $q_{[\Tv]}(\zv_{\neg d}) = q(\zv_{\neg d})$.
In other words, minimizing $\mathrm{KL}(q_{[\Tv]}\|p)$ over $T_d$ is equivalent to minimizing $\mathrm{KL}\big(q_{[T_d]}(x_d|\xv_{\neg d})q(\xv_{\neg d})\big\|p(x_d|\xv_{\Gamma_d})q(\xv_{\neg d})\big)$.

Thus SVGD can be applied to Problem~(\ref{eq:min-sub-kl}) directly, 
by following $T_d: x_d \to x_d + \epsilon\phi_d(\xv)$ where $\phi_d \in \mathcal{H}_0$ is associated with the global kernel $k:\mathcal{X} \times \mathcal{X} \to \mathbb{R}$, and solving $\min_{\|\phi_d\|_{\mathcal{H}_0} \leq 1} \nabla_\epsilon \mathrm{KL}(q_{[\Tv]}\|p)\big|_{\epsilon = 0}$.
This produces a coordinate-wise version of SVGD.
However, the high-dimensional problem still exists due to the global kernel.
To reduce the dimensionality, 
we further restrict the transformation to be dependent only on $x_d$ and its Markov blanket, i.e., $T_d: x_d \to x_d + \epsilon\phi_d(\xv_{S_d})$, $\phi_d \in \mathcal{H}_d$, where $S_d = \{d\} \cup \Gamma_d$. Here $\mathcal{H}_d$ is the RKHS induced by kernel $k_d: \mathcal{X}_{S_d}\times\mathcal{X}_{S_d}\to \mathbb{R}$, where $\mathcal{X}_{S_d} = \{\xv_{S_d}, \xv \in \mathcal{X}\}$.
By doing so, we have the following proposition\footnote{Proof can be found in the supplemental material.}:
\begin{prop}
Let $\zv = \Tv(\xv) = [x_1,...,T_d(x_d),...,x_D]^\top$ with $T_d: x_d \to x_d + \epsilon \phi_d(\xv_{S_d})$, $\phi_d \in \mathcal{H}_d$ where $\mathcal{H}_d$ is associated with the local kernel $k_d:\mathcal{X}_{S_d} \times \mathcal{X}_{S_d} \to \mathbb{R}$. Then, we have
\begin{equation}\label{mpsvgdprob}
\begin{aligned}
&\nabla_{\epsilon} \mathrm{KL}(q_{[\Tv]}||p) = \\
&~~~~~~\nabla_{\epsilon} \mathrm{KL}\big(q_{[T_d]}(z_d|\zv_{\Gamma_d})q(\zv_{\Gamma_d})\big\|p(z_d|\zv_{\Gamma_d})q(\zv_{\Gamma_d})\big),
\end{aligned}
\end{equation}
and the solution for $\min_{\|\phi_d\|_{\mathcal{H}_d} \leq 1} \nabla_{\epsilon} \mathrm{KL}(q_{[\Tv]}|\|p) |_{\epsilon = 0}$ is $\phi_d^*/\|\phi_d^*\|_{\mathcal{H}_d}$, where
\begin{equation}\label{Eq:mpsvgdphi}
\begin{aligned}
\phi_d^*(\xv_{S_d}) = & \mathbb{E}_{\yv_{S_d} \sim q}\big[ k_d(\xv_{S_d}, \yv_{S_d}) \nabla_{y_d} \log p(y_d|\yv_{\Gamma_d}) \\
&  + \nabla_{y_d} k_d(\xv_{S_d},\yv_{S_d}) \big].
\end{aligned}
\end{equation}
\end{prop}
As shown in Eq.~(\ref{Eq:mpsvgdphi}), computing $\phi^*(\xv_{S_d})$ only requires $\xv_{S_d} \in \mathcal{X}_{S_d}$ instead of $\xv \in \mathcal{X}$, which reduces the dimension from $D$ to $|S_d|$. This would alleviate the vanishing repulsive force problem
, especially for the case where $p(\xv)$ is highly sparse structured such that $|S_d| \ll D$ (e.g., pairwise Markov Random Fields), as verified in the experiments on both synthetic and real-world problems.

Similar to the original SVGD, the convergence condition $\phi_d^*(\xv_{S_d}) \equiv 0$ holds {\it iff} $q(x_d|\xv_{\Gamma_d})q(\xv_{\Gamma_d}) \equiv p(x_d|\xv_{\Gamma_d})q(\xv_{\Gamma_d})$, i.e., $q(x_d|\xv_{\Gamma_d}) \equiv p(x_d|\xv_{\Gamma_d})$ when $k_d(\xv_{S_d},\yv_{S_d})$ is strictly positive definite in a proper sense \cite{Liu2016KSD,Chwialkowski2016KSD}.
In other words, to reduce the dimension, we have to pay the price that $q$ is only conditionally consistent with $p$. 
This relaxation of $q$ also appears in traditional methods like loopy BP and its variants, in which only marginal consistency is guaranteed \cite{Wainwright2008graphical}.

\subsection{Markov Blanket based Kernels}
Now, the remaining question is the choice of the local kernel $k_d:\mathcal{X}_{S_d} \times \mathcal{X}_{S_d} \to \mathbb{R}$.
We can simply define $k_d(\xv_{S_d},\yv_{S_d}) = f\big(\|\xv_{S_d} - \yv_{S_d}\|_2^2 / (2h_{S_d})\big)$ for $f(z) = \exp(-z)$ (i.e., the RBF kernel), 
with the implicit assumption that all nodes in the Markov blanket contribute equally for node $d$. We call such a $k_d$ the {\it Single-Kernel}.
However, as the factorization of $p(\xv) \propto \prod_{F \in \mathcal{F}}\psi_F(\xv_F)$ is known, we can also define the {\it Multi-Kernel} where $k_d(\xv_{S_d},\yv_{S_d}) = \frac{1}{K}\sum_{F \ni d} f\big(\|\xv_F - \yv_F\|_2^2 / (2h_F)\big)$, where $K$ is the number of factors containing $d$, to reflect the assumption that nodes in different factors may contribute in a different way.
By doing so, $R_d(\xv;q) = \frac{1}{K} \sum_{F \ni d} \mathbb{E}_{\yv_F \in q}\big[ \nabla_{y_d} f\big(\|\xv_F - \yv_F\|_2^2 / (2h_F)\big) \big]$ and we can further reduce the dimension for $R_d(\xv;q)$ from $|S_d|$ to $\max\{|F|:F \ni d\}$.

\subsection{Final Algorithm}

Similar to SVGD, we use a set of particles $\{\xv^{(i)}\}_{i=1}^M$ to approximate $q$ and this procedure is summarized in Algorithm \ref{alg1}. According to the choice of kernels, we abbreviate MP-SVGD-s for the {\it Single-Kernel} and MP-SVGD-m for the {\it Multi-Kernel}.

\begin{algorithm}
\begin{algorithmic}
\caption{Message Passing SVGD}
\label{alg1}
\REQUIRE A differentiable target distribution $p(\xv)$ whose $d$th conditional distribution is $p(x_d|\xv_{\Gamma_d}) = p(x_d|\xv_{-d})$, and a set of initial particles $\{\xv^{(i)}\}_{i=1}^M$.
\ENSURE A set of particles $\{\xv^{(i)}\}_{i=1}^M$ as an approximation of $p(\xv)$.
\FOR{iteration $t$}
\FOR{$d \in \{1,...,D\}$}
\STATE $\xv^{(i)}_d \leftarrow \xv_d^{(i)} + \epsilon \hat{\phi}_d^*(\xv_{S_d}^{(i)})$ where $\epsilon$ is the stepsize, and
$$ 
\begin{aligned}
\hat{\phi}^*_d(\xv_{S_d}) & = \mathbb{E}_{\yv_{S_d} \sim \hat{q}_M} [ k_d(\xv_{S_d}, \yv_{S_d}) \nabla_{y_d} \log p(y_d|\yv_{\Gamma_d}) \\
&~~~~+ \nabla_{y_d} k_d(\xv_{S_d}, \yv_{S_d})].
\end{aligned}
$$
\ENDFOR
\ENDFOR
\end{algorithmic}
\end{algorithm}

Updating particles acts in a message passing way as shown in Figure \ref{fig:mp}: node $d$ receives particles (messages) from its neighbors (i.e., $\{\xv_{\Gamma_d}^{(i)}\}_{i=1}^M$); updates its own particles $\{\xv_d^{(i)}\}_{i=1}^M$; and sends them to its neighbors. Unlike loopy BP, each node sends the same message to its neighbors. This resembles VMP, where messages from the parent to its children in a directed graph are also identical \cite{Winn2005VMP}. 
\begin{figure}
	\centering
    \includegraphics[width=0.35\textwidth]{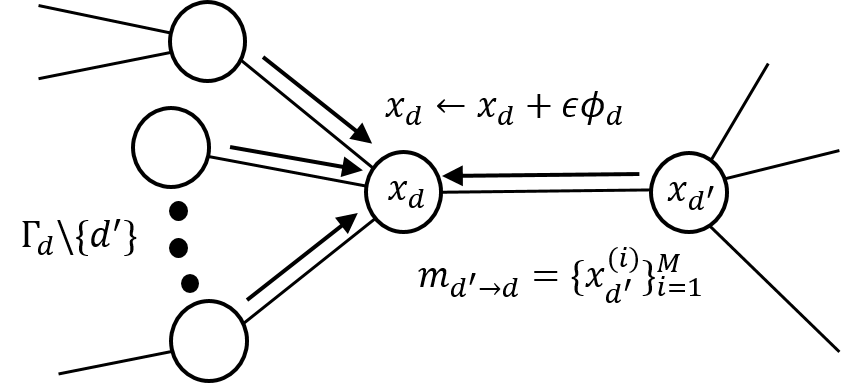}
    \caption{Message passing procedure for node $d$.}
    \label{fig:mp}
\end{figure}

\section{Experiments}
In this section, we experimentally verify our analysis and evaluate the performance of MP-SVGD with other inference methods on both synthetic and real-world examples. We use the RBF kernel with the bandwidth chosen by the median heuristic for all experiments.

\subsection{Synthetic Markov Random Fields}
\begin{figure*}[!htb]
	\centering
	\includegraphics[width=1\textwidth]{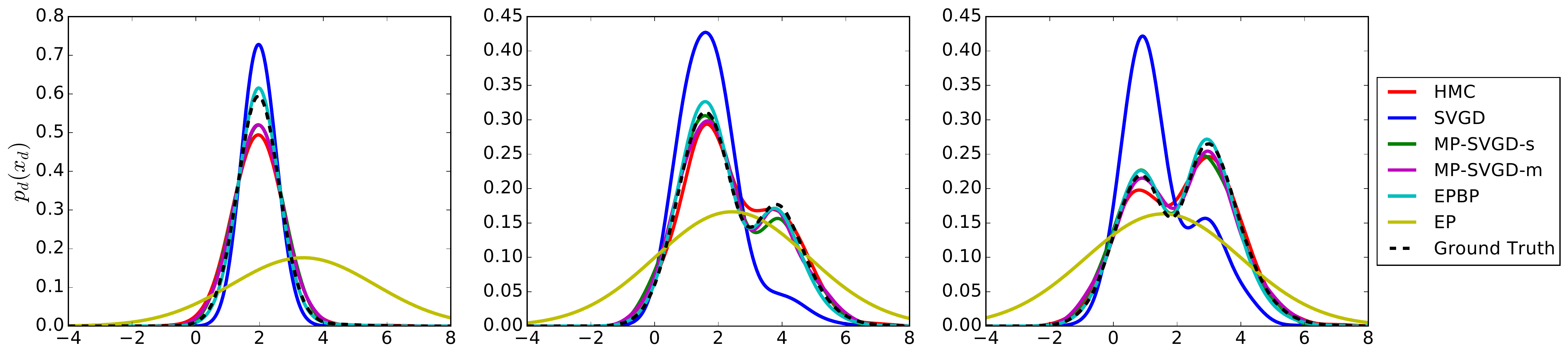}
	\caption{A qualitative comparison of inference methods with 100 particles (except EP) for marginal densities of three randomly selected nodes. Density curves of SVGD, MP-SVGD and HMC are estimated by kernel density estimator with RBF kernels. For EPBP, the curve is drawn by normalizing its beliefs over the fixed interval $[-5,10)$ with bin size $0.01$. }
	\label{fig5}
\end{figure*}
\begin{figure*}[!htb]
	\centering
	\includegraphics[width=1\textwidth]{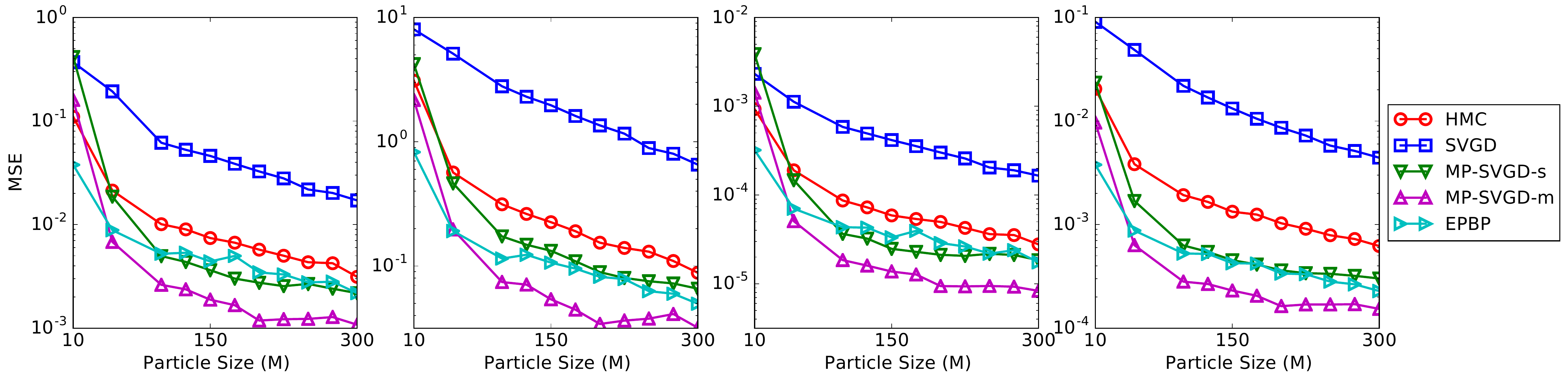}	
	\caption{A quantitative comparison of inference methods with varying number of particles. Performance is measured by the MSE of the estimation of expectation $\mathbb{E}_{\xv \sim \hat{q}_M}[\fv(\xv)]$ for test functions $\fv(\xv) = \xv$, $\xv^2$, $1/ (1+\exp(\omegav \circ \xv + \bv))$ and $\cos(\omegav \circ \xv + \bv)$, arranged from left to right, where $\circ$ denotes the element-wise product. Results are averaged over 10 random draws of $\omegav$ and $\bv$, where $\omegav, \bv \in \mathbb{R}^{100}$ with $\omega_d \sim \mathcal{N}(0,1)$ and $b_d \in \mathrm{Uniform}[0, 2\pi]$, $\forall d \in \{1,...,100\}$.}
	\label{fig6}
\end{figure*}
\begin{figure}[!htb]
	\centering
    \includegraphics[width=0.5\textwidth]{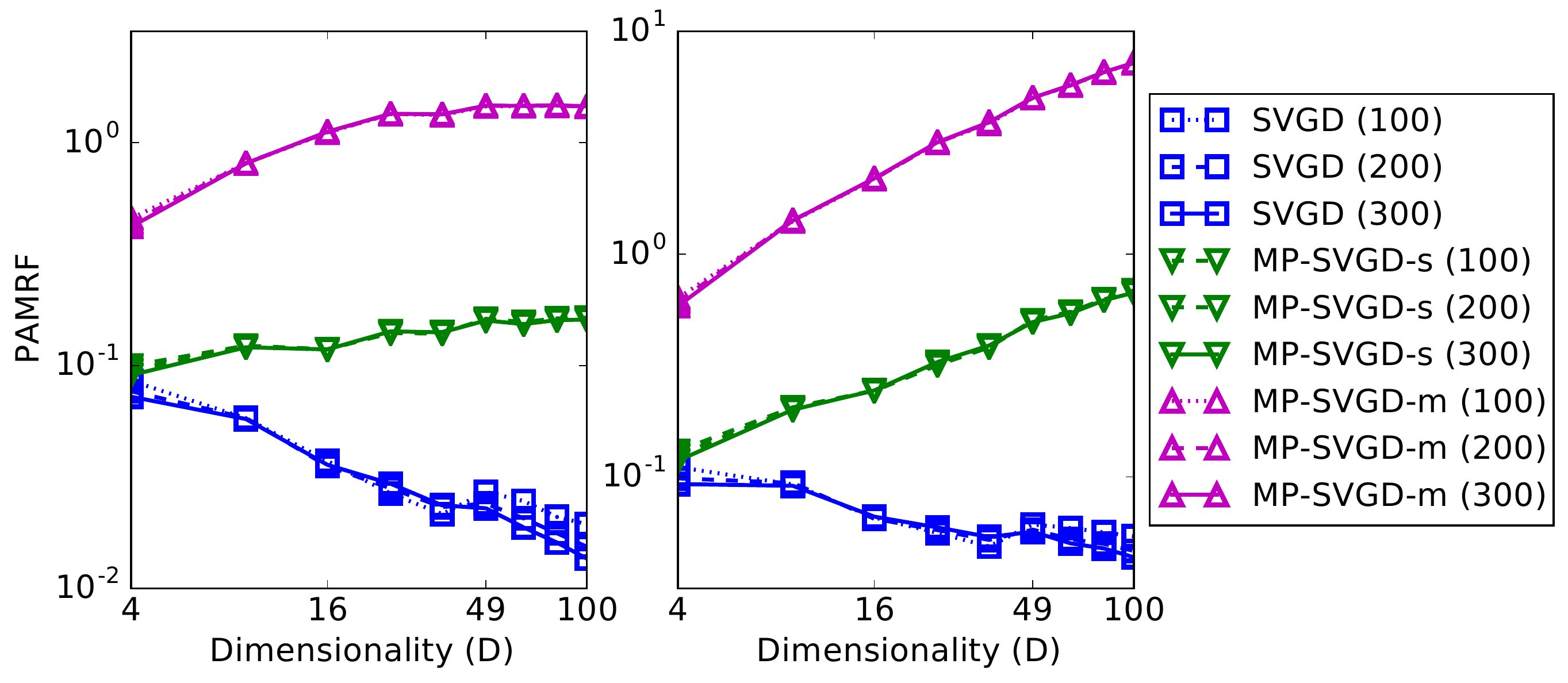}
    \caption{
    PAMRF $\frac{1}{M} \sum_{i=1}^M \|\Rv(\xv^{(i)};\hat{q}_M)\|_r$ for converged $\{\xv^{(i)}\}_{i=1}^M$ with $r = \infty$ (left) and $r = 2$ (right). The grid size ranges from $2\times 2$ to $10 \times 10$. The number in the bracket denotes the number of particles.}
    \label{fig:repul}
    \vspace{-0.27cm}
\end{figure}
We follow the settings of \cite{Lienart2015EPBP} and focus on a pairwise MRF on the 2D grid
$p(\xv) \propto \prod_{d \in \mathcal{V}} \psi_d(x_d) \prod_{(d,t)\in \mathcal{E}} \psi_{dt}(x_d,x_t)$ with the random variable in each node taking values on $\mathbb{R}$. The node and edge potentials are chosen such that $p(\xv)$ and its marginals are multimodal, non-Gaussian and skewed:
\begin{equation}
\left\{
\begin{array}{l}
\psi_d(x_d) = \alpha_1\mathcal{N}(x_d-y_d|-2,1) + \alpha_2\mathcal{G}(x_d-y_d|2,1.3) \\
\psi_{dt}(x_d,x_t) = \mathcal{L}(x_d-x_t|0,2)
\end{array}
,\right.
\end{equation}
where $y_d$ denotes the observed value at node $d$ and is initialized randomly as $y_d \sim \alpha_1\mathcal{N}(y_d-2|-2,1) + \alpha_2\mathcal{G}(y_d-2|2,1.3)$, $\mathcal{N}(x|\mu,\sigma^2) \propto \exp(-(x-\mu)^2/(2\sigma^2))$, $\mathcal{G}(x|\mu,\beta) \propto \exp(-(x-\mu)/\beta + \exp(-(x-\mu)/\beta))$ and $\mathcal{L}(x|\mu,\beta) \propto \exp(-|x-\mu|/\beta)$ denote the density of Gaussian, Gumbel and Laplace distribution, respectively. Parameters $\alpha_1$ and $\alpha_2$ are set to 0.6 and 0.4. 
We consider a $10 \times 10$ grid except Fig. \ref{fig:repul}, whose grid size ranges from $2 \times 2$ to $10 \times 10$.
All experimental results are averaged over 10 runs with random initializations.

Since $p(\xv)$ is intractable, we recover the ground truth by samples drawn by an adaptive version of Hamiltonian Monte Carlo (HMC) \cite{Neal2011HMC,Shi2017zhusuan}.
We run 100 chains in parallel with 40,000 samples for each chain after 10,000 burned-in, i.e. 4 million samples in total.

We compare MP-SVGD with SVGD, HMC, EP and EPBP \cite{Lienart2015EPBP}. 
Although widely used in MRFs, Gibbs sampling does not suit this task since the conditional distribution $p(x_d|\xv_{\Gamma_d})$ cannot be sampled directly. 
So we use uniformly randomly chosen particles from the 4 million ground truth samples as a strong baseline\footnote{This is strong in the sense that when all the 4 million samples are used, corresponding approximation error would be zero.} and regard the method as HMC.
For EP, we use the Gaussian distribution as the factors, and the moment matching step is done by numerical integration due to the non-Gaussian nature of $p(\xv)$.
EPBP is a variant of BP methods and the original state-of-the-art method on this task. 
It uses weighted samples to estimate the messages while other methods (except EP) use unweighted samples to approximate $p(\xv)$ directly.


Fig.~\ref{fig5} provides a qualitative comparison of these methods by visualizing the estimated marginal densities on each node.
As is shown, SVGD estimates the marginals undesirably in all cases. For both unimodal and bimodal cases, the SVGD curve exhibits a sharp, peaked behavior, indicating the collapsing trend of its particles. It is interesting to note that in the rightmost figure, more SVGD particles gather around the lower mode. One possible reason is that the lower marginal mode may correspond to a larger mode in the joint distribution, into which SVGD particles tends to collapse.
EP provides quite a crude approximation as expected, due to the mismatch between its Gaussian assumption and the non-Gaussian nature of $p(\xv)$.
Other methods perform similarly. 

Fig.~\ref{fig6} provides a quantitative comparison\footnote{We omit EP results for the clearness, the figure which includes EP results can be found in the supplementary materials.} in particle efficiency measured by the mean squared error (MSE) of estimated expectation with different particle sizes $M$.
For EPBP, $M$ denotes the number of node particles when approximating messages.
We observe that SVGD achieves the highest RMSE compared to other methods, reflecting its particle degeneracy.
Compared to HMC and EPBP, MP-SVGD achieves comparable and even better results.
Besides, MP-SVGD-m achieves lower RMSE than MP-SVGD-s, reflecting the benefits of MP-SVGD-m of leveraging more structural information in designing local kernels.

Fig.~\ref{fig:repul} compares MP-SVGD with SVGD in particle-averaged magnitude of the repulsive force (PAMRF) $\frac{1}{M}\sum_{i=1}^M \|\Rv(\xv^{(i)};\hat{q}_M)\|_r$ at the end of iterations for various dimensions.
As expected, the SVGD PAMRF negatively correlates with the dimensionality $D$ while the MP-SVGD PAMRF does not.
Besides, both of them exhibit roughly a log linear relationship with the dimensionality.
This verifies Proposition \ref{mainthm} 
that $\|\Rv(\xv^{(i)};\hat{q}_M)\|_\infty$ is upper bounded by $D^{-\alpha}$ for some constant $\alpha$.
This also reflects the power of MP-SVGD in preventing the repulsive force from being too small by reducing dimensionality using local kernels.
Besides, we observe that MP-SVGD-m PAMRF is higher than MP-SVGD-s PAMRF, which verifies our analysis regarding {\it Single-Kernel} and {\it Multi-Kernel}:
for the pairwise MRF, the dimension for {\it Single-Kernel} is $5$ at most (the Markov blanket and the node itself) while the dimension for {\it Multi-Kernel} is $2$ at most (the edge) and thus lower dimensionality corresponds to higher repulsive force.

\subsection{Image Denoising}
\begin{figure*}[!htb]
	\centering
	\includegraphics[width=0.16\textwidth]{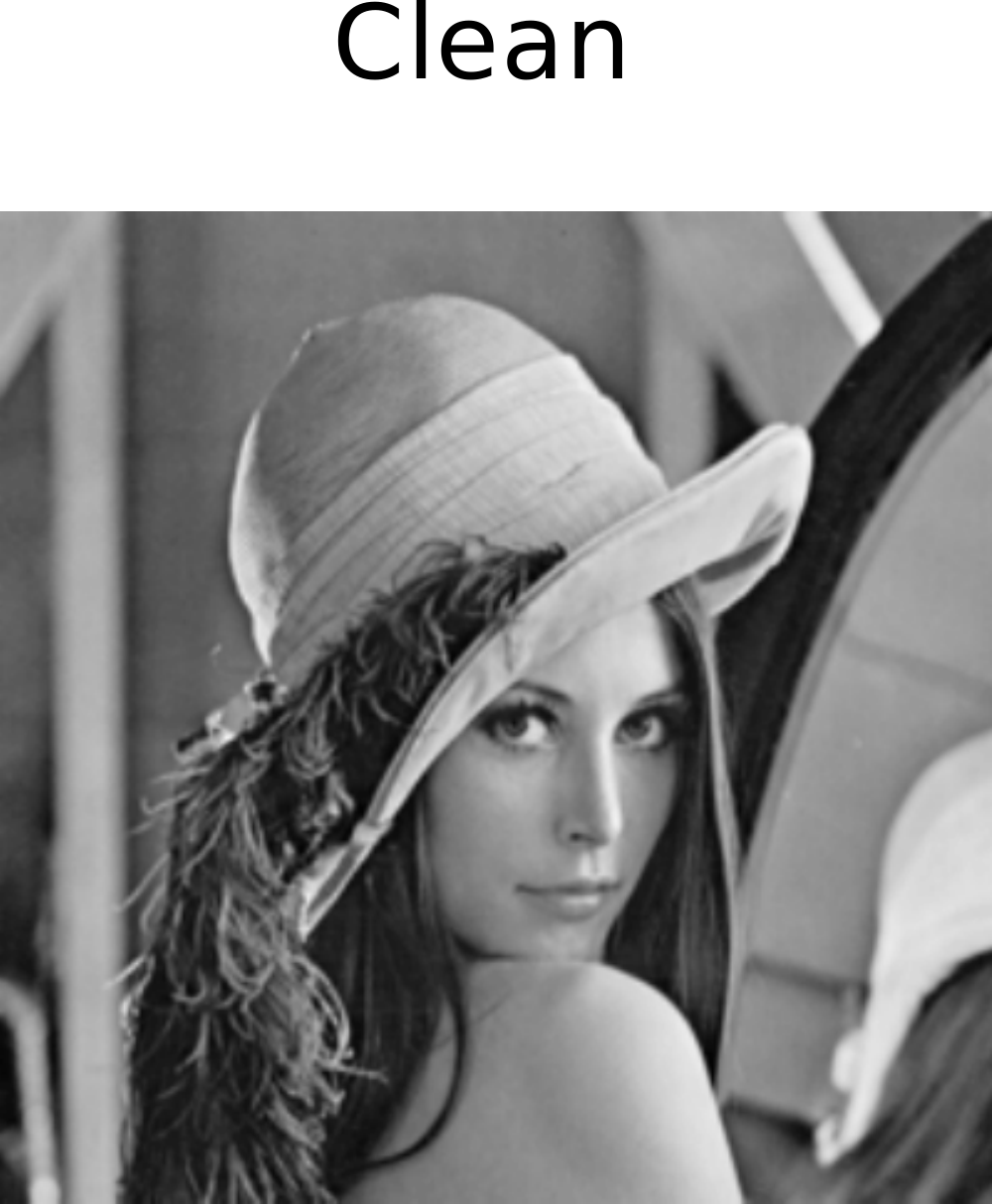}
	\includegraphics[width=0.16\textwidth]{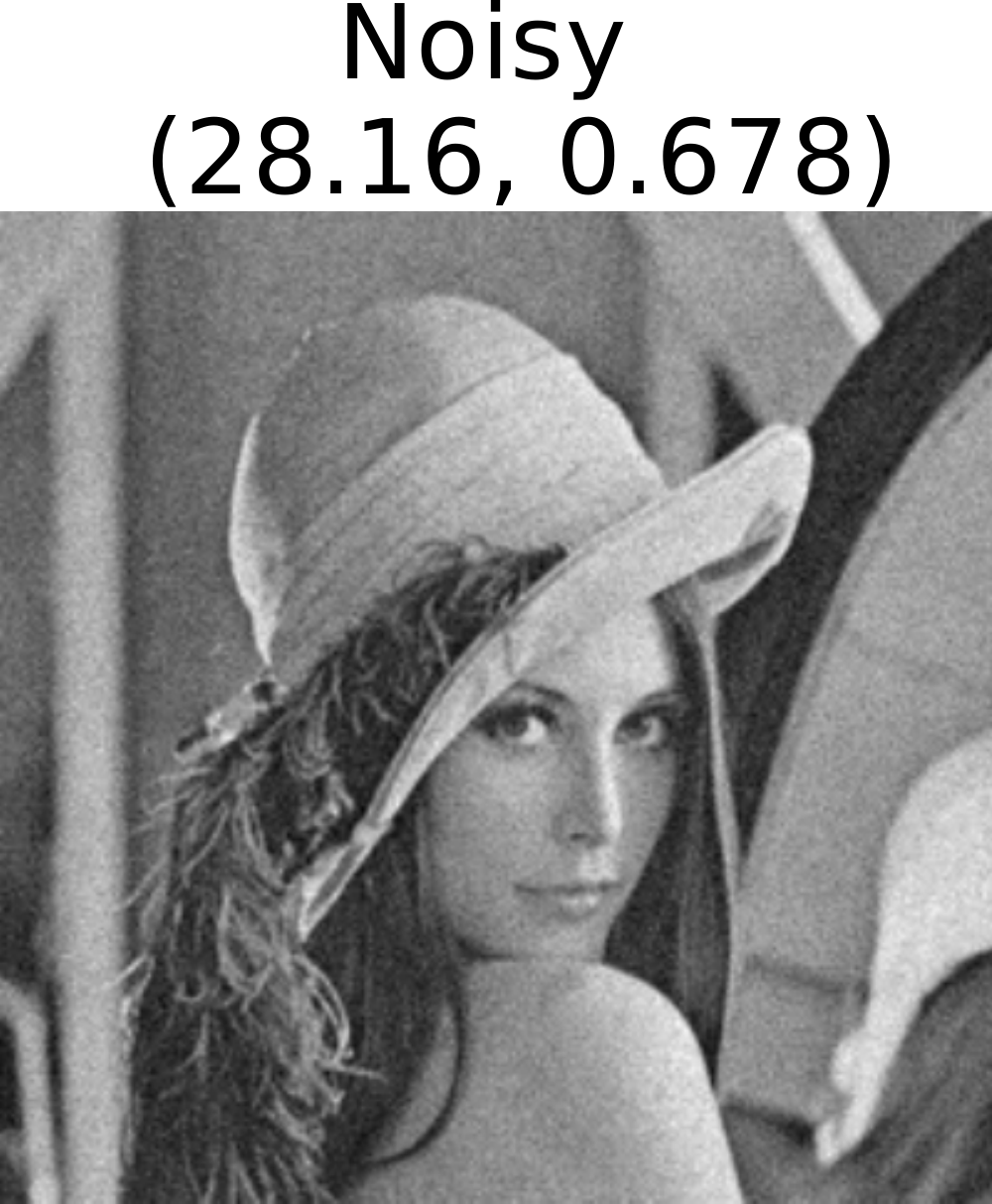}
	\includegraphics[width=0.16\textwidth]{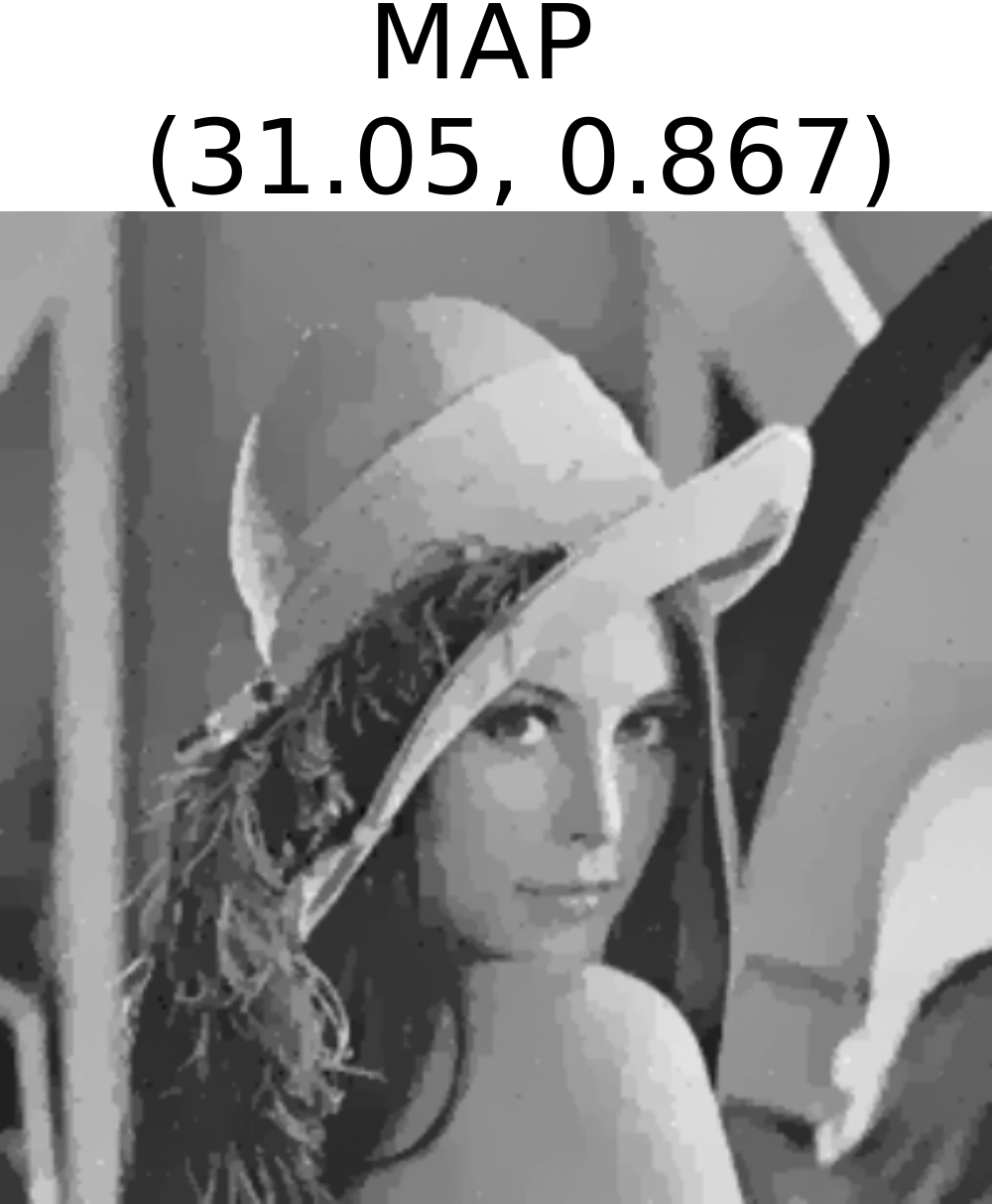}
	\includegraphics[width=0.16\textwidth]{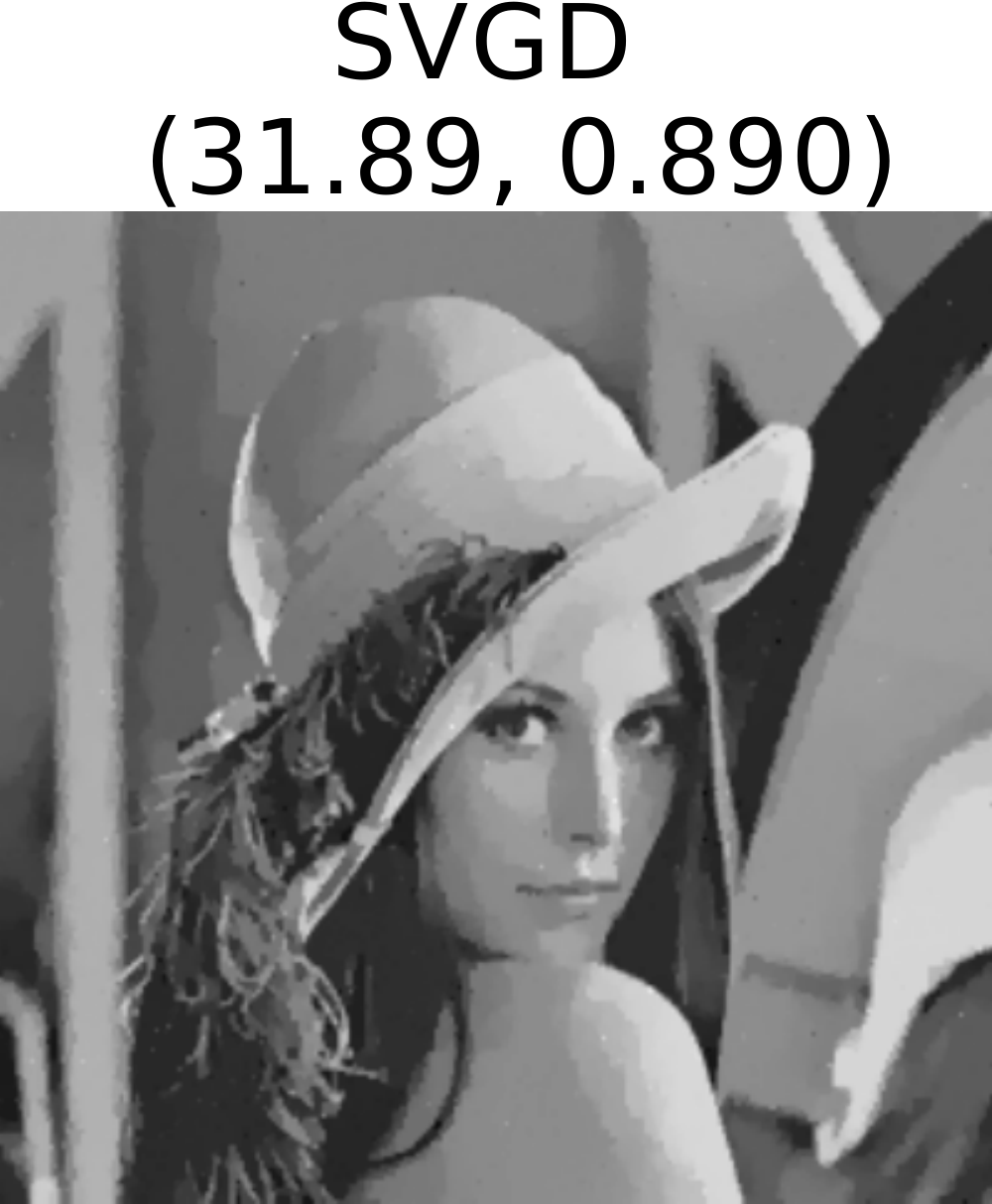}
	\includegraphics[width=0.16\textwidth]{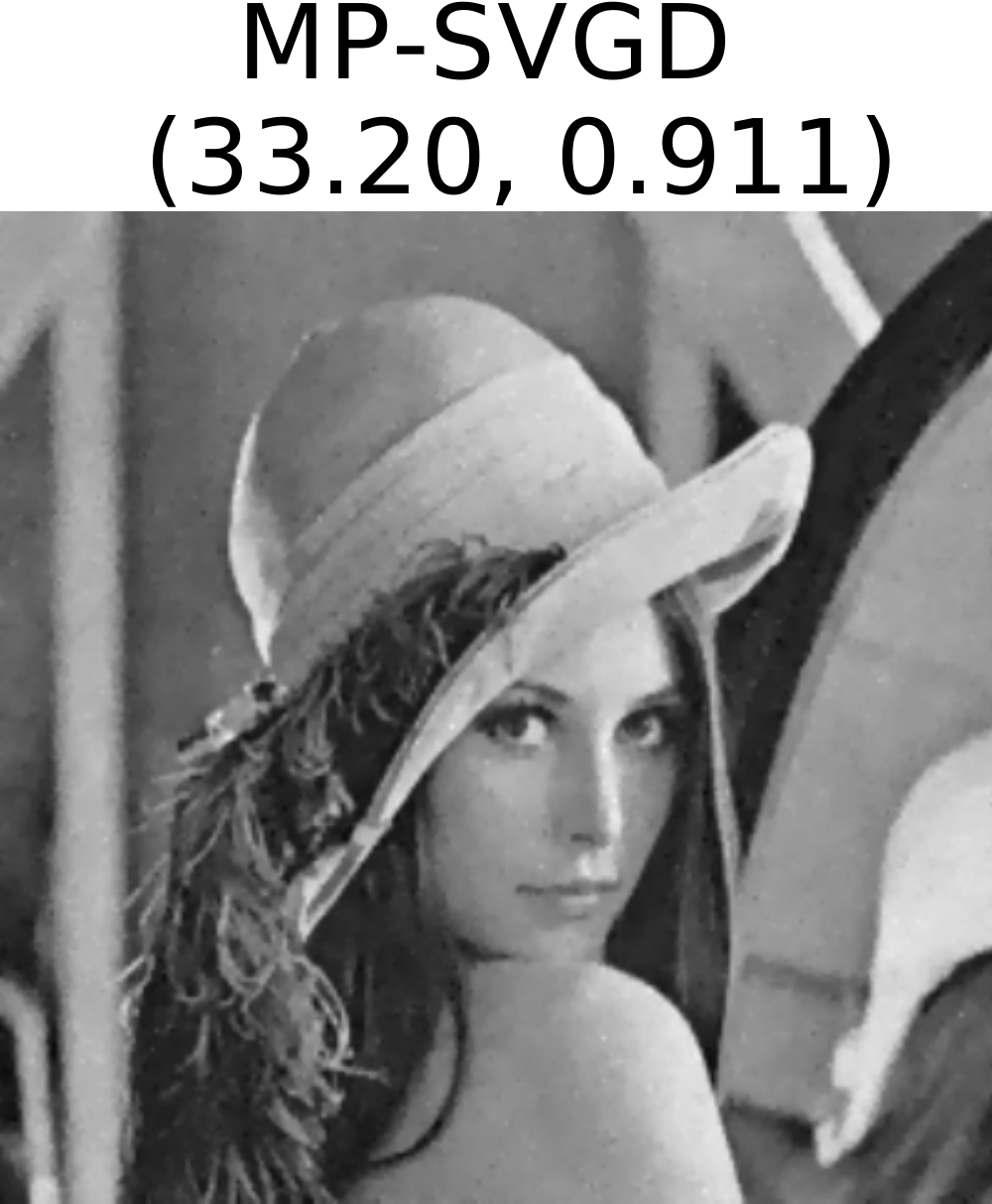}
	\includegraphics[width=0.16\textwidth]{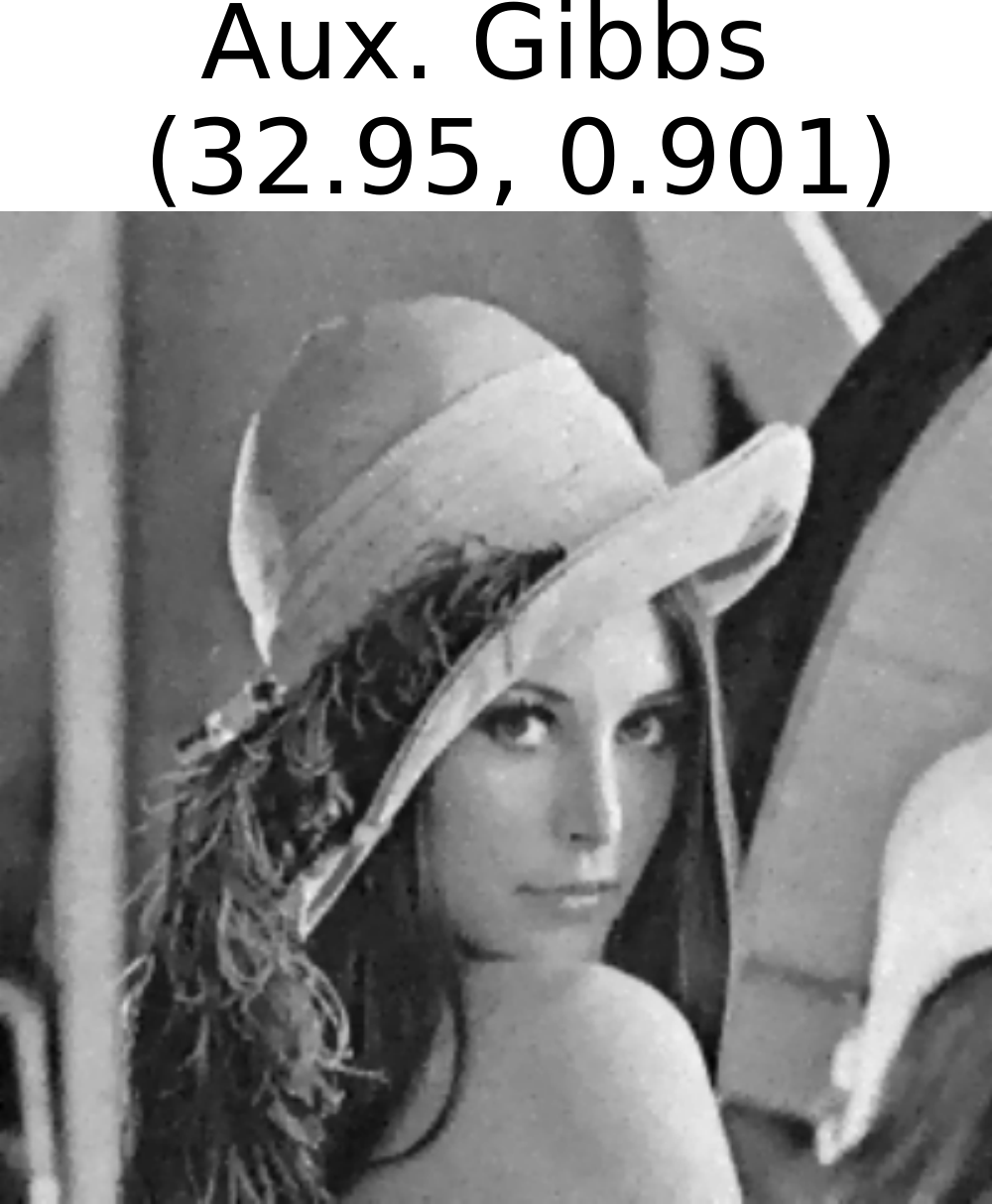}	
	\\
	\includegraphics[width=0.16\textwidth]{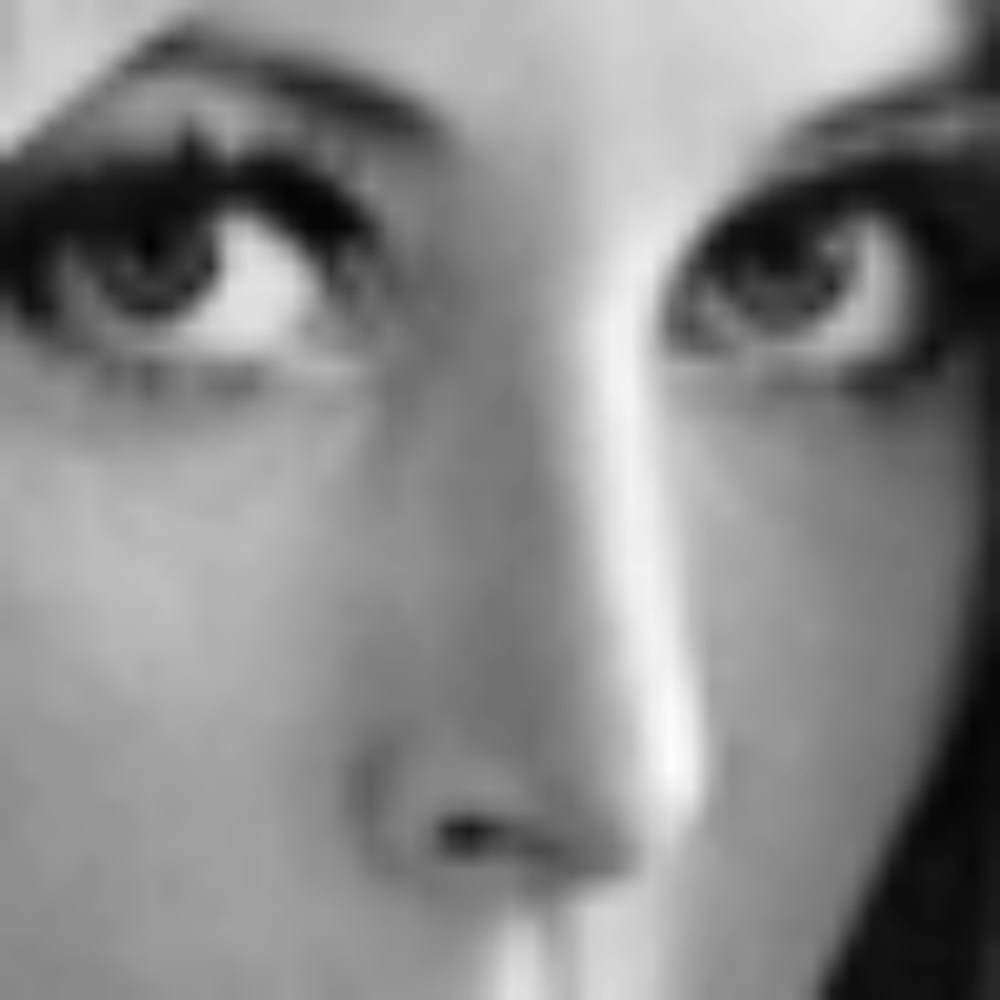}
	\includegraphics[width=0.16\textwidth]{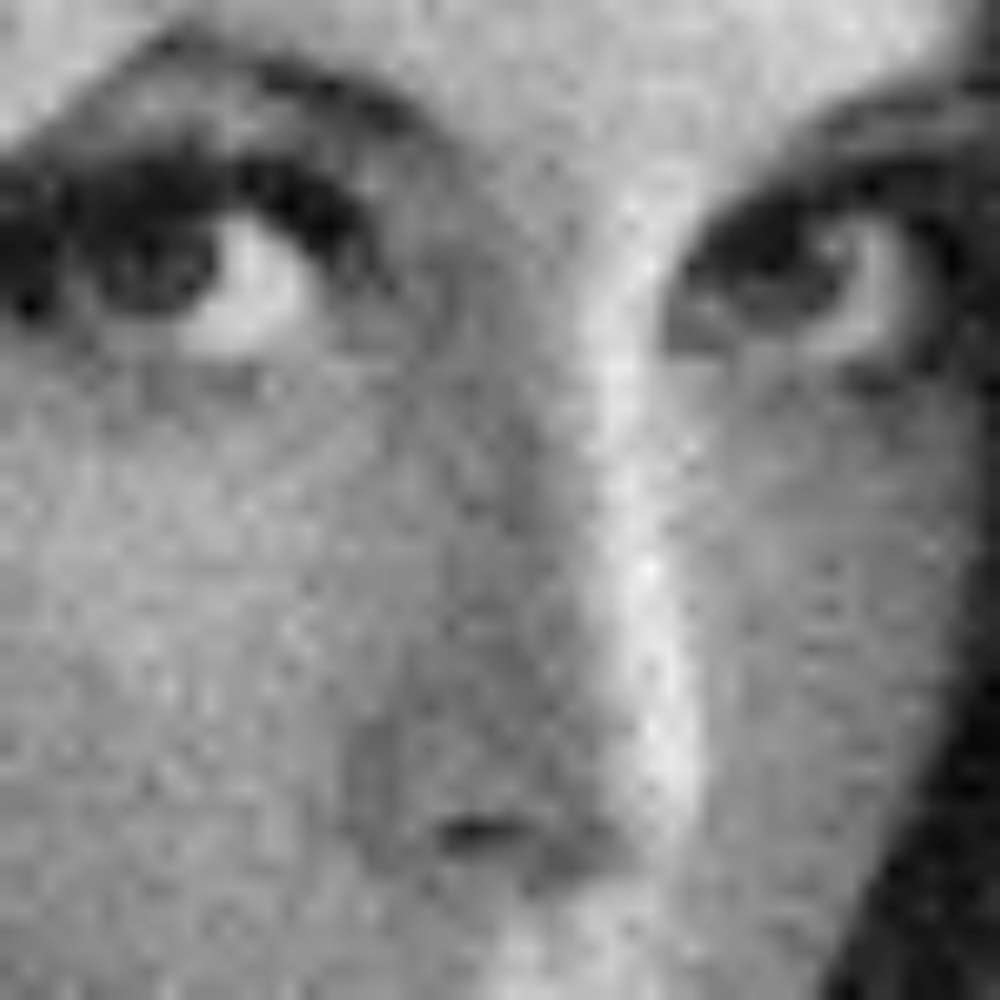}
	\includegraphics[width=0.16\textwidth]{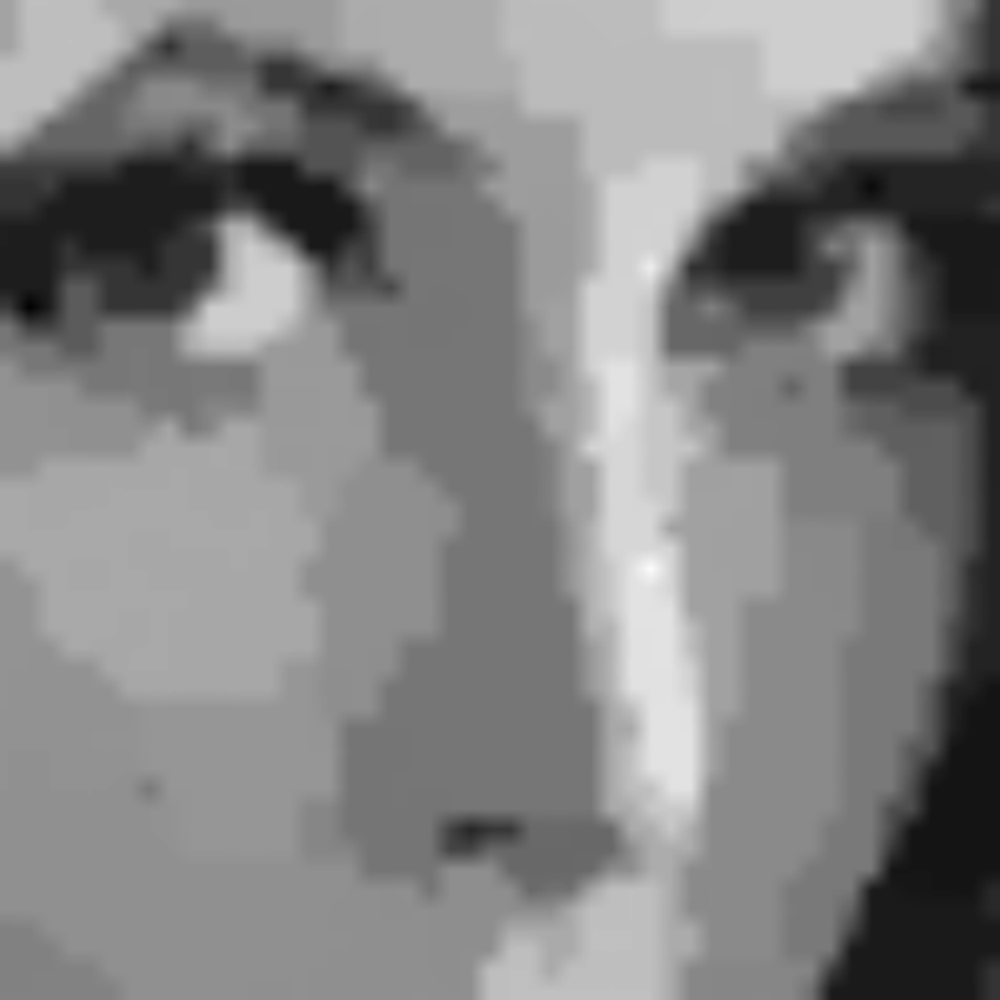}
	\includegraphics[width=0.16\textwidth]{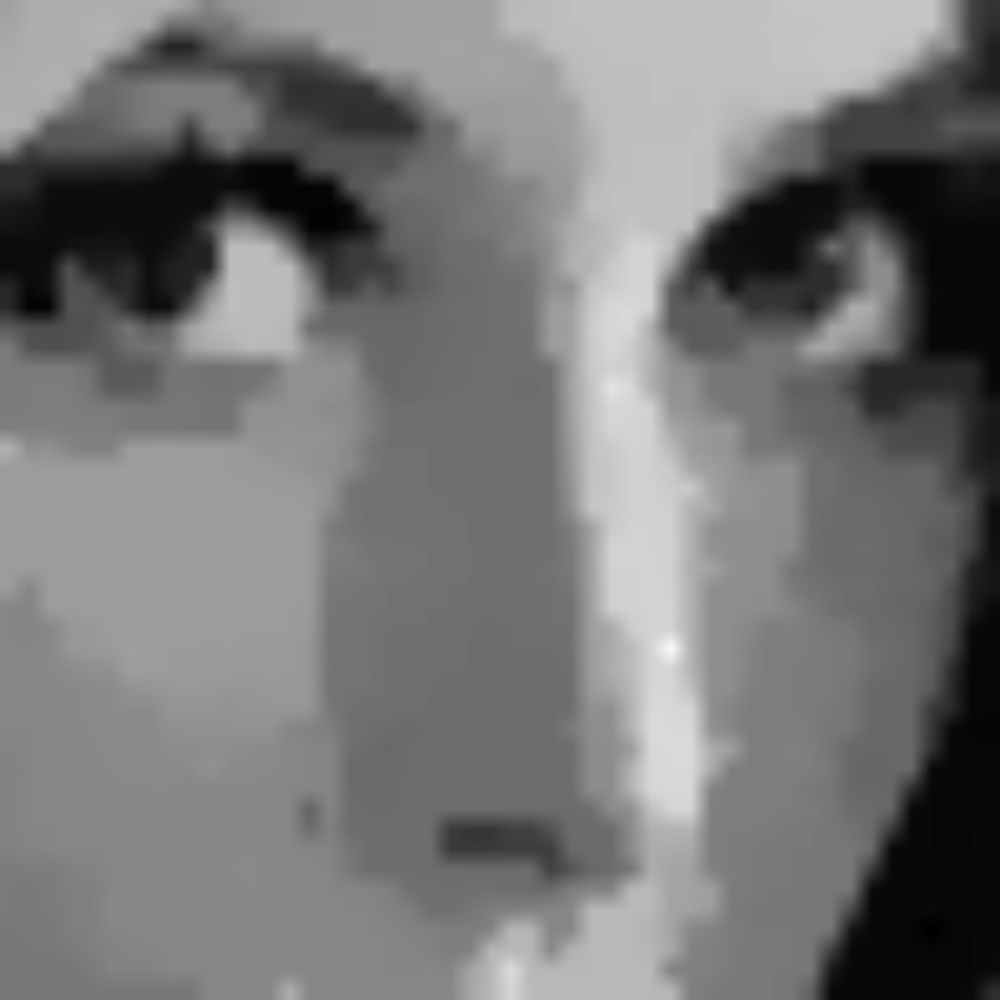}
	\includegraphics[width=0.16\textwidth]{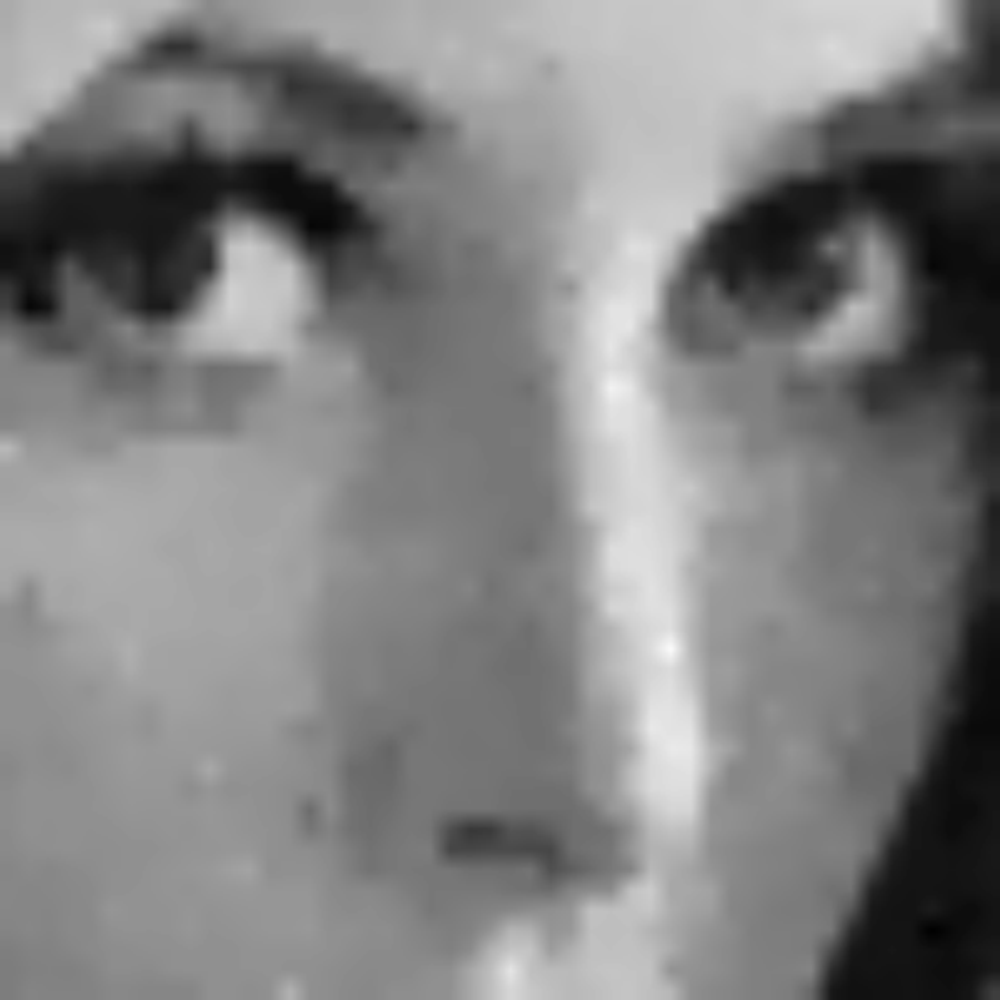}
	\includegraphics[width=0.16\textwidth]{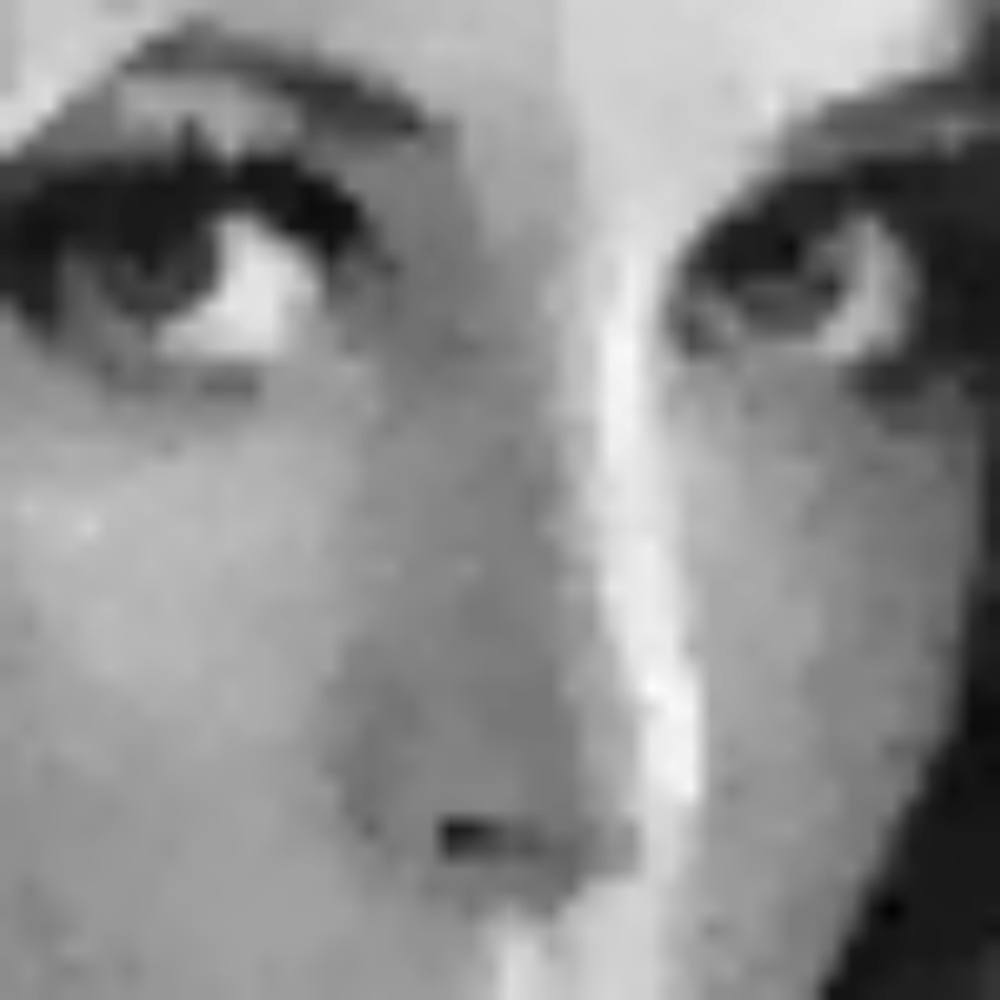}				
	\caption{Denoising results for {\it Lena} using 50 particles, $256 \times 256$ pixels, $\sigma_n = 10$. The number in bracket is PSNR and SSIM. Upper Row: The full size image; Bottom Row: The $50 \times 50$ patches.}
	\label{Fig:lena}
\end{figure*}
Our last experiment is designed for verifying the power of MP-SVGD in real-world application.
Following \cite{Schmidt2010generative}, we formulate image denoising via finding the posterior mean\footnote{It is called the Bayesian minimum mean squared error estimate (MMSE) in the original paper.} of $p(\xv|\yv) \propto p(\yv|\xv) p(\xv)$, where the likelihood $p(\yv|\xv) = \mathcal{N}(\yv|\xv,\sigma_n^2\Iv)$ denotes that the observed image $\yv = \xv + \nv$ for some unknown natural image $\xv$ corrupted by Gaussian noise $\nv$ with the noise level $\sigma_n$.
The prior $p(\xv)$ encodes the statistics of natural images, which is a Fields-of-Experts (FOE) \cite{Roth2009FOE} MRF:
\begin{equation}
p(\xv) \propto \exp(-\frac{\epsilon \|\xv\|_2^2}{2}) \prod_{F \in \mathcal{F}}\prod_{i=1}^N \phi(\Jv_i^{\mathrm{T}}\xv_F;\alphav_i),
\end{equation}
where 
$\{\Jv_i\}_{i=1}^N$ is a bank of linear filters and the expert function $\phi(\Jv_i^{\mathrm{T}}\xv_F;\alphav_i) = \sum_{j=1}^J \alpha_{ij} \mathcal{N}(\Jv_i^{\mathrm{T}}\xv_F|0,\sigma_i^2/s_j)$ is the Gaussian scale mixtures \cite{Woodford2009global}. 
We focus on the pairwise MRF where $\mathcal{F}$ indexes all the edge factors, $\Jv_i = [1,-1]^\top$, $N = 1$ and $J = 15$.
All the parameters (i.e., $\epsilon$, $J_i$, $\sigma_i$ and $s_j$) are pre-learned and details can be found in \cite{Schmidt2010generative}.

We compare SVGD and MP-SVGD\footnote{As MP-SVGD-m is shown to be better than MP-SVGD-s, we only use MP-SVGD-m here, and without notification, MP-SVGD stands for MP-SVGD-m.} with Gibbs sampling with auxiliary variables (Aux. Gibbs), the state-of-the-art method reported in the original paper. 
The recovered image is obtained by averaging all the particles and its quality is evaluated using the peak signal-to-noise ratio (PSNR) and structural similarity index (SSIM) \cite{Wang2004SSIM}. Higher PSNR/SSIM generally means better image quality. 

\begin{figure}[!htb]
	\centering
    \includegraphics[width=0.5\textwidth]{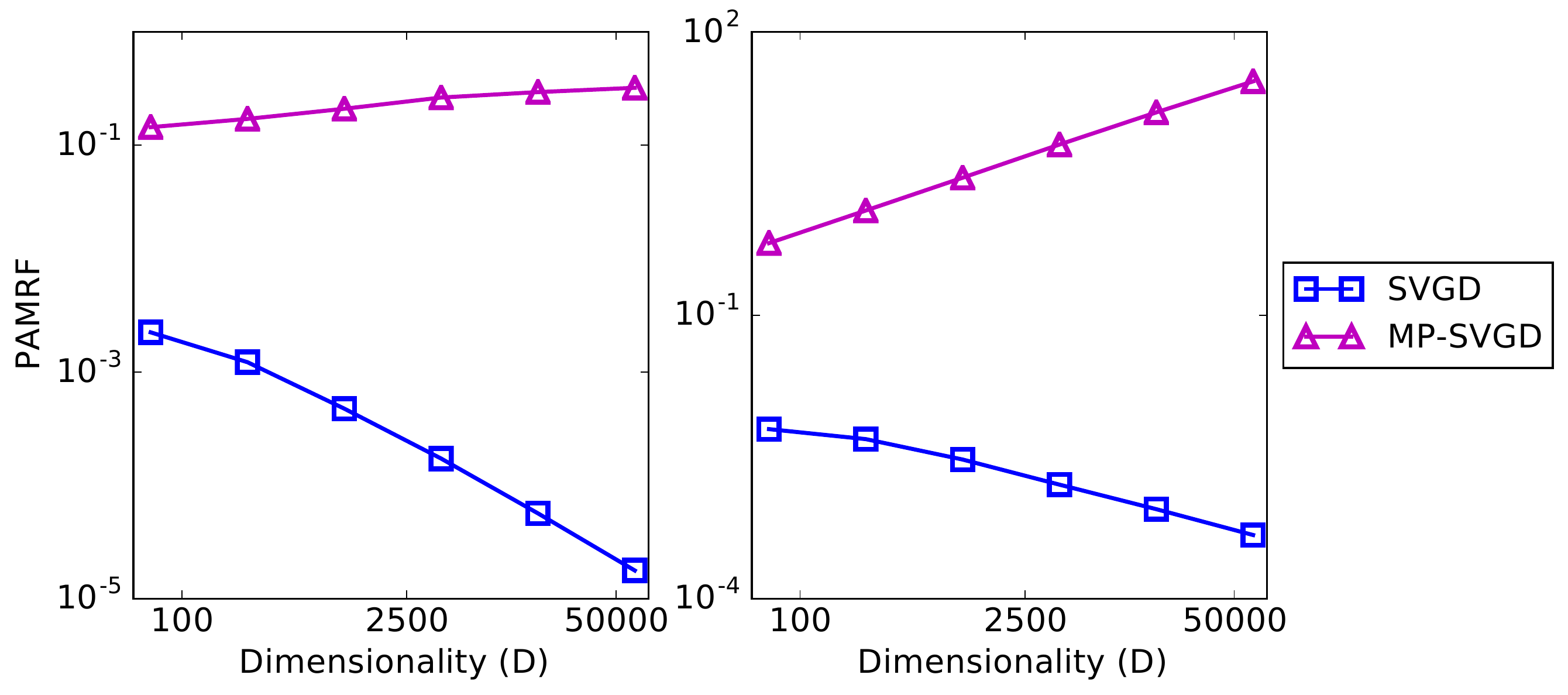}
    \caption{PAMRF $\frac{1}{M} \sum_{i=1}^M \|\Rv(\xv^{(i)};\hat{q}_M)\|_r$ for converged $\{\xv^{(i)}\}_{i=1}^M$ with $r = \infty$ (left) and $r = 2$ (right) over rescaled {\it Lena} ranged from $8 \times 8$ to $256 \times 256$. $M=50$ particles are used.}
    \label{Fig:repullena}
\end{figure}

Fig.~\ref{Fig:lena} shows example results for {\it Lena}, a benchmark image for comparing denoising methods.
Despite the difference in PSNR and SSIM, the image recovered by SVGD lack texture details (especially for the region near the nose of {\it Lena} shown in the $50\times 50$ patches), which resembles the image reconstructed by MAP. 

Table \ref{Tab:10} compares these method quantitatively. As expected, MP-SVGD achieves the best result given the same number of particles. We also find that Aux. Gibbs requires about $200$ particles for $\sigma_n = 10$ and $400$ to $800$ particles for $\sigma_n =20$, to achieve a similar performance of MP-SVGD.

From Fig.~\ref{Fig:repullena} we observe a negative/non-negative correlation between the repulsive force and the dimensionality for SVGD/MP-SVGD, respectively, which verifies our analysis again.

\begin{table}
\center
\caption{
Denoising results for 10 test images \protect\cite{Lan2006efficient} from BSD dataset \protect\cite{Martin2001BSDS}. The first two rows are cited from \protect\cite{Schmidt2010generative} while the other rows are based on our implementation. $M$ denotes the number of particles.}
\scalebox{0.8}{
\begin{tabular}{ c  c  c  c  c }
\hline
\multirow{2}{*}{Inference}  & \multicolumn{2}{c}{avg. PSNR}  & \multicolumn{2}{c}{avg. SSIM} \\
  &$\sigma_n = 10$ & $\sigma_n = 20$ & $\sigma_n = 10$ & $\sigma_n = 20$ \\
\hline
MAP  & 30.27 & 26.48 & 0.855 & 0.720\\
Aux. Gibbs  & 32.09 & {\bf 28.32} & 0.904 & 0.808\\
\hline
Aux. Gibbs ($M = 50$) & 31.87 & 28.05 & 0.898 & 0.795\\
Aux. Gibbs ($M = 100$) & 31.98 & 28.17 & 0.901 & 0.801\\
SVGD ($M = 50$)& 31.58 & 27.86 & 0.894 & 0.766\\
SVGD ($M = 100$)& 31.65 & 27.90 & 0.896 & 0.767\\
MP-SVGD ($M = 50$) & 32.09 & 28.21 & 0.905 & 0.808\\
MP-SVGD ($M = 100$) & {\bf 32.12} & 28.27 & {\bf 0.906} & {\bf 0.809}\\
\hline
\end{tabular}
}
\label{Tab:10} 
\end{table}

\section{Conclusions and Future Work}
In this paper, we analyze the particle degeneracy phenomenon of SVGD in high dimensions and attribute it to the negative correlation between the repulsive force and dimensionality. 
We also propose {\it message passing SVGD} (MP-SVGD), which converts the original problem into several local inference problem with lower dimensions, to solve this problem. Experiments on both synthetic and real-world applications show the effectiveness of MP-SVGD.

For future work, we'd like to settle the analysis of the repulsive force and its impact on SVGD dynamics completely and formally. 
We also want to apply MP-SVGD to more complex real-world applications like pose estimation \cite{Pacheco2014preserving}. 
Besides, investigating robust kernel with non-decreasing repulsive force is also an interesting direction.

\section*{Acknowledgements}
We thank anonymous reviewers for their insightful comments and suggestions. 
This work was supported by NSFC Projects (Nos. 61620106010, 61621136008, 61332007), Beijing NSF Project (No. L172037), Tiangong Institute for Intelligent Computing, NVIDIA NVAIL Program, Siemens and Intel.

\bibliography{ref}

\begin{thebibliography}{31}
\providecommand{\natexlab}[1]{#1}
\providecommand{\url}[1]{\texttt{#1}}
\expandafter\ifx\csname urlstyle\endcsname\relax
  \providecommand{\doi}[1]{doi: #1}\else
  \providecommand{\doi}{doi: \begingroup \urlstyle{rm}\Url}\fi

\bibitem[Aggarwal et~al.(2001)Aggarwal, Hinneburg, and
  Keim]{Aggarwal2001surprising}
Aggarwal, C.~C., Hinneburg, A., and Keim, D.~A.
\newblock On the surprising behavior of distance metrics in high dimensional
  spaces.
\newblock In \emph{Proceedings of the International conference on database
  theory}, 2001.

\bibitem[Chwialkowski et~al.(2016)Chwialkowski, Strathmann, and
  Gretton]{Chwialkowski2016KSD}
Chwialkowski, K., Strathmann, H., and Gretton, A.
\newblock A kernel test of goodness of fit.
\newblock In \emph{Proceedings of the International Conference on Machine
  Learning}, 2016.

\bibitem[Feng et~al.(2017)Feng, Wang, and Liu]{feng2017steingan}
Feng, Y., Wang, D., and Liu, Q.
\newblock Learning to draw samples with amortized stein variational gradient
  descent.
\newblock In \emph{Proceedings of the Conference on Uncertainty in Artificial
  Intelligence}, 2017.

\bibitem[Gorham \& Mackey(2017)Gorham and Mackey]{Gorham2017IMQ}
Gorham, J. and Mackey, L.
\newblock Measuring sample quality with kernels.
\newblock In \emph{Proceedings of the International Conference on Machine
  Learning}, 2017.

\bibitem[Haarnoja et~al.(2017)Haarnoja, Tang, Abbeel, and
  Sergey]{haarnoja2017rldeep}
Haarnoja, T., Tang, H., Abbeel, P., and Sergey, L.
\newblock Reinforcement learning with deep energy-based policies.
\newblock In \emph{Proceedings of the International Conference on Machine
  Learning}, 2017.

\bibitem[Ihler \& Mcallester(2009)Ihler and Mcallester]{Ihler2009PBP}
Ihler, A.~T. and Mcallester, D.~A.
\newblock Particle belief propagation.
\newblock In \emph{Proceedings of the International Conference on Artificial
  Intelligence and Statistics}, 2009.

\bibitem[Koller \& Friedman(2009)Koller and Friedman]{Koller2009probabilistic}
Koller, D. and Friedman, N.
\newblock \emph{Probabilistic graphical models: principles and techniques}.
\newblock MIT press, 2009.

\bibitem[Lan et~al.(2006)Lan, Roth, Huttenlocher, and Black]{Lan2006efficient}
Lan, X., Roth, S., Huttenlocher, D., and Black, M.~J.
\newblock Efficient belief propagation with learned higher-order markov random
  fields.
\newblock In \emph{European conference on computer vision}, 2006.

\bibitem[Lienart et~al.(2015)Lienart, Teh, and Doucet]{Lienart2015EPBP}
Lienart, T., Teh, Y.~W., and Doucet, A.
\newblock Expectation particle belief propagation.
\newblock In \emph{Advances in Neural Information Processing Systems}, 2015.

\bibitem[Liu \& Wang(2016)Liu and Wang]{Liu2016SVBP}
Liu, Q. and Wang, D.
\newblock Stein variational gradient descent: A general purpose bayesian
  inference algorithm.
\newblock In \emph{Advances In Neural Information Processing Systems}, 2016.

\bibitem[Liu et~al.(2016)Liu, Lee, and Jordan]{Liu2016KSD}
Liu, Q., Lee, J., and Jordan, M.
\newblock A kernelized stein discrepancy for goodness-of-fit tests.
\newblock In \emph{Proceedings of the International Conference on Machine
  Learning}, 2016.

\bibitem[Liu et~al.(2017)Liu, Ramachandran, Liu, and Peng]{liu2017svpg}
Liu, Y., Ramachandran, P., Liu, Q., and Peng, J.
\newblock Stein variational policy gradient.
\newblock In \emph{Proceedings of the Conference on Uncertainty in Artificial
  Intelligence}, 2017.

\bibitem[Martens \& Sutskever(2010)Martens and
  Sutskever]{Martens2010parallelizable}
Martens, J. and Sutskever, I.
\newblock Parallelizable sampling of markov random fields.
\newblock In \emph{Proceedings of the International Conference on Artificial
  Intelligence and Statistics}, 2010.

\bibitem[Martin et~al.(2001)Martin, Fowlkes, Tal, and Malik]{Martin2001BSDS}
Martin, D., Fowlkes, C., Tal, D., and Malik, J.
\newblock A database of human segmented natural images and its application to
  evaluating segmentation algorithms and measuring ecological statistics.
\newblock In \emph{IEEE International Conference on Computer Vision}, 2001.

\bibitem[Minka(2001)]{Minka2001EP}
Minka, T.
\newblock Expectation propagation for approximate bayesian inference.
\newblock In \emph{Proceedings of the Conference on Uncertainty in Artificial
  Intelligence}, 2001.

\bibitem[Minka(2005)]{Minka2005divergence}
Minka, T.
\newblock Divergence measures and message passing.
\newblock Technical Report MSR-TR-2005-173, Microsoft Research, 2005.

\bibitem[Neal(2011)]{Neal2011HMC}
Neal, R.~M.
\newblock Mcmc using hamiltonian dynamics.
\newblock \emph{Handbook of Markov Chain Monte Carlo}, 2\penalty0 (11), 2011.

\bibitem[Pacheco et~al.(2014)Pacheco, Zuffi, Black, and
  Sudderth]{Pacheco2014preserving}
Pacheco, J., Zuffi, S., Black, M., and Sudderth, E.
\newblock Preserving modes and messages via diverse particle selection.
\newblock In \emph{Proceedings of the International Conference on Machine
  Learning}, 2014.

\bibitem[Pearl(1988)]{Pearl1988probabilistic}
Pearl, J.
\newblock \emph{Probabilistic reasoning in intelligent systems: networks of
  plausible inference}.
\newblock Morgan Kaufmann, 1988.

\bibitem[Pu et~al.(2017)Pu, Gan, Henao, Li, Han, and Carin]{Pu2017vae}
Pu, Y., Gan, Z., Henao, R., Li, C., Han, S., and Carin, L.
\newblock Vae learning via stein variational gradient descent.
\newblock In \emph{Advances in Neural Information Processing Systems}, 2017.

\bibitem[Ramdas et~al.(2015)Ramdas, Reddi, P{\'o}czos, Singh, and
  Wasserman]{Ramdas2015decreasing}
Ramdas, A., Reddi, S.~J., P{\'o}czos, B., Singh, A., and Wasserman, L.
\newblock On the decreasing power of kernel and distance based nonparametric
  hypothesis tests in high dimensions.
\newblock In \emph{Proceedings of the AAAI Conference on Artificial
  Intelligence}, 2015.

\bibitem[Roth \& Black(2009)Roth and Black]{Roth2009FOE}
Roth, S. and Black, M.~J.
\newblock Fields of experts.
\newblock \emph{International Journal of Computer Vision}, 2009.

\bibitem[Schmidt et~al.(2010)Schmidt, Gao, and Roth]{Schmidt2010generative}
Schmidt, U., Gao, Q., and Roth, S.
\newblock A generative perspective on mrfs in low-level vision.
\newblock In \emph{Computer Vision and Pattern Recognition}, 2010.

\bibitem[Scholkopf \& Smola(2001)Scholkopf and Smola]{Scholkopf2001kernel}
Scholkopf, B. and Smola, A.~J.
\newblock \emph{Learning with kernels: support vector machines, regularization,
  optimization, and beyond}.
\newblock MIT press, 2001.

\bibitem[Shi et~al.(2017)Shi, Chen, Zhu, Sun, Luo, Gu, and
  Zhou]{Shi2017zhusuan}
Shi, J., Chen, J., Zhu, J., Sun, S., Luo, Y., Gu, Y., and Zhou, Y.
\newblock Zhusuan: A library for bayesian deep learning.
\newblock \emph{arXiv preprint arXiv:1709.05870}, 2017.

\bibitem[Sudderth et~al.(2003)Sudderth, Ihler, Freeman, and
  Willsky]{Sudderth2003NBP}
Sudderth, E., Ihler, A., Freeman, W., and Willsky, A.
\newblock Nonparametric belief propagation.
\newblock In \emph{Computer Vision and Pattern Recognition}, 2003.

\bibitem[Wainwright \& Jordan(2008)Wainwright and
  Jordan]{Wainwright2008graphical}
Wainwright, M.~J. and Jordan, M.~I.
\newblock Graphical models, exponential families, and variational inference.
\newblock \emph{Foundations and Trends{\textregistered} in Machine Learning},
  2008.

\bibitem[Wang et~al.(2017)Wang, Zeng, and Liu]{Wang2018graph}
Wang, D., Zeng, Z., and Liu, Q.
\newblock Structured stein variational inference for continuous graphical
  models.
\newblock \emph{arXiv preprint arXiv:1711.07168}, 2017.

\bibitem[Wang et~al.(2004)Wang, Bovik, Sheikh, and Simoncelli]{Wang2004SSIM}
Wang, Z., Bovik, A.~C., Sheikh, H.~R., and Simoncelli, E.~P.
\newblock Image quality assessment: from error visibility to structural
  similarity.
\newblock \emph{IEEE transactions on image processing}, 2004.

\bibitem[Winn \& Bishop(2005)Winn and Bishop]{Winn2005VMP}
Winn, J. and Bishop, C.~M.
\newblock Variational message passing.
\newblock \emph{Journal of Machine Learning Research}, 2005.

\bibitem[Woodford et~al.(2009)Woodford, Rother, and
  Kolmogorov]{Woodford2009global}
Woodford, O.~J., Rother, C., and Kolmogorov, V.
\newblock A global perspective on map inference for low-level vision.
\newblock In \emph{IEEE International Conference on Computer Vision}, 2009.

\end{thebibliography}


\begin{thebibliography}{5}
\providecommand{\natexlab}[1]{#1}
\providecommand{\url}[1]{\texttt{#1}}
\expandafter\ifx\csname urlstyle\endcsname\relax
  \providecommand{\doi}[1]{doi: #1}\else
  \providecommand{\doi}{doi: \begingroup \urlstyle{rm}\Url}\fi

\bibitem[Azuma(1967)]{Azuma1967weighted}
Azuma, K.
\newblock Weighted sums of certain dependent random variables.
\newblock \emph{Tohoku Mathematical Journal, Second Series}, 19\penalty0
  (3):\penalty0 357--367, 1967.

\bibitem[Chwialkowski et~al.(2016)Chwialkowski, Strathmann, and
  Gretton]{Chwialkowski2016KSD}
Chwialkowski, K., Strathmann, H., and Gretton, A.
\newblock A kernel test of goodness of fit.
\newblock In \emph{Proceedings of the International Conference on Machine
  Learning}, 2016.

\bibitem[Duchi et~al.(2011)Duchi, Hazan, and Singer]{Duchi2011adagrad}
Duchi, J., Hazan, E., and Singer, Y.
\newblock Adaptive subgradient methods for online learning and stochastic
  optimization.
\newblock \emph{Journal of Machine Learning Research}, 12\penalty0
  (Jul):\penalty0 2121--2159, 2011.

\bibitem[Liu \& Wang(2016)Liu and Wang]{Liu2016SVBP}
Liu, Q. and Wang, D.
\newblock Stein variational gradient descent: A general purpose bayesian
  inference algorithm.
\newblock In \emph{Advances In Neural Information Processing Systems}, 2016.

\bibitem[Liu et~al.(2016)Liu, Lee, and Jordan]{Liu2016KSD}
Liu, Q., Lee, J., and Jordan, M.
\newblock A kernelized stein discrepancy for goodness-of-fit tests.
\newblock In \emph{Proceedings of the International Conference on Machine
  Learning}, 2016.

\end{thebibliography}
\bibliographystyle{icml2018}

\end{document}


\twocolumn[
\icmltitle{Appendix for Message Passing Stein Variational Gradient Descent}



\icmlsetsymbol{equal}{*}

\icmlcorrespondingauthor{}{}

\icmlkeywords{Machine Learning, ICML}

\vskip 0.3in
]




\appendix

\section{Detailed Derivation and Proof for Section 3}

\subsection{Derivation of Eq. (4)}
By using the change of variable theorem, we have
$$
\begin{aligned}
&~~~~ \nabla_{\epsilon} \mathbb{E}_{\zv \sim q_{[\Tv]}}[\log p(\zv)] |_{\epsilon = 0}\\
& = \nabla_{\epsilon} \mathbb{E}_{\xv \sim q}[\log p(\xv + \epsilon \phiv(\xv))] |_{\epsilon = 0}\\
& = \mathbb{E}_{\xv \sim q}[\nabla_{\epsilon} \log p(\xv + \epsilon \phiv(\xv)) |_{\epsilon = 0}] \\
& = \mathbb{E}_{\xv \sim q}[\nabla_{\xv} \log p(\xv)^\top \phiv(\xv) ] \\
& = \mathbb{E}_{\xv \sim q}\left[\sum_{d=1}^D \nabla_{x_d} \log p(\xv) \phi_d(\xv) \right] \\
& = \sum_{d=1}^D \mathbb{E}_{\xv \sim q}\left[ \nabla_{x_d} \log p(\xv) \langle k(\xv,\cdot), \phi_d(\cdot) \rangle_{\mathcal{H}_0} \right] \\
& = \sum_{d=1}^D  \langle \mathbb{E}_{\xv \sim q}\left[ \nabla_{x_d} \log p(\xv) k(\xv,\cdot) \right], \phi_d(\cdot) \rangle_{\mathcal{H}_0}. \\
\end{aligned}
$$
The maximum is attained when $\phiv = \phiv^* / \|\phiv^*\|_{\mathcal{H}^D}$ with $\phiv^*(\cdot) = \mathbb{E}_{\xv \sim q}[k(\xv,\cdot) \nabla_{\xv} \log p(\xv)]$, i.e., the kernel smoothed gradient $G(\xv;p,q)$. This relationship holds for both $q$ and the empirical distribution $\hat{q}_M$.


When converged, $\Gv(\xv;p,q) \equiv \zerov$, which corresponds to
$$
\begin{aligned}
& \int_{\mathcal{X}} k(\xv,\yv) \gv(\yv) d \yv = \zerov,~\forall \xv \in \mathcal{X} \\
\Longrightarrow & \int_{\mathcal{X}} k(\xv,\yv) g_d(\yv) d \yv = 0,~\forall \xv \in \mathcal{X},~d \in \{1,...,D\} \\
\Longrightarrow & \int_{\mathcal{X}} k(\xv,\yv) g_d(\xv) g_d(\yv) d \yv = 0,~\forall \xv \in \mathcal{X},~d \in \{1,...,D\} \\
\Longrightarrow & \int_{\mathcal{X} \times \mathcal{X}} k(\xv,\yv) g_d(\xv) g_d(\yv) d \xv d \yv = 0,~\forall d \in \{1,...,D\} \\
\end{aligned}
$$
where $\gv(\yv) = q(\yv) \nabla_{\yv} \log p(\yv)$ and $g_d(\yv) = q(\yv) \nabla_{y_d} \log p(\yv)$. Given $k(\xv,\yv)$ is strictly positive definite, i.e., $\int_{\mathcal{X} \times \mathcal{X}} k(\xv,\yv) f(\xv) f(\yv) d \xv d \yv = 0$ if and only if $f(\yv) = 0$, $\forall \yv \in \mathcal{X}$, we have $g_d(\yv) \equiv 0$, $\forall d$, which corresponds to
$$
\gv(\yv) = q(\yv) \nabla_{\yv} \log p(\yv) = 0,~\forall \yv \in \mathcal{X}.
$$
In other words, for $\yv$ such that $q(\yv) \neq 0$, $\nabla_{\yv} \log q(\yv) = 0$, which reflects that $q$ collapses to the modes of $p$.

\subsection{Derivation of Eq. (5)}\label{repulmag}
{\bf RBF Kernel} Notice that 
$$
\begin{aligned}
\|\Rv(\xv; q)\|_\infty & \leq \mathbb{E}_{y \sim q} \left[ \exp(-\frac{\|\xv-\yv\|_2^2}{2h}) \frac{\|\xv-\yv\|_\infty}{h} \right], \\
\end{aligned}
$$
where the inequality holds according to Jensen's inequality. For notation simplicity, let $f(h,\xv,\yv) = \exp(-\frac{\|\xv-\yv\|_2^2}{2h}) \frac{\|\xv-\yv\|_\infty}{h}$, we have
$$
\begin{aligned}
\|\Rv(\xv; q)\|_\infty & \leq \mathbb{E}_{y \sim q} \left[ f(h,\xv,\yv) \right] \\
& \leq \max_h \mathbb{E}_{y \sim q} \left[ f(h,\xv,\yv) \right] \\
& \leq \mathbb{E}_{y \sim q} \left[ \max_h f(h,\xv,\yv) \right]. \\
\end{aligned}
$$
By taking gradient of $f(h,\xv,\yv)$ over $h$ we can show that when $h = \|\xv-\yv\|_2^2/2$, $f(h,\xv,\yv)$ attains its maximum, which is $f_{\max}(\xv,\yv)= \max_h f(h,\xv,\yv) = 2e^{-1}\frac{\|\xv-\yv\|_\infty}{\|\xv-\yv\|_2^2}$. And we have
$$
\begin{aligned}
\|\Rv(\xv; q)\|_\infty & \leq \mathbb{E}_{y \sim q} \left[ \frac{2}{e}\cdot \frac{\|\xv-\yv\|_\infty}{\|\xv-\yv\|_2^2} \right]. \\
\end{aligned}
$$
In fact, we can bound $\|\Rv(\xv;q)\|_r$ with any $r \geq 1$ by using the norm inequality $\|\zv\|_r \leq D^{1/r} \|\zv\|_\infty$, i.e.,
$$
\|\Rv(\xv, q)\|_r \leq \mathbb{E}_{y \sim q} \left[ \frac{2D^{1/r}}{e}\cdot \frac{\|\xv-\yv\|_\infty}{\|\xv-\yv\|_2^2} \right]. 
$$

{\bf IMQ Kernel} For the IMQ kernel, we have
$$
\begin{aligned}
\|\Rv(\xv; q)\|_\infty & \leq \mathbb{E}_{y \sim q} \left[ \frac{1}{2\left(1+\frac{\|\xv-\yv\|_2^2}{2h}\right)^{3/2}} \frac{\|\xv-\yv\|_\infty}{h} \right]. \\
\end{aligned}
$$
Let $f(h,\xv,\yv) = \frac{1}{2\left(1+\frac{\|\xv-\yv\|_2^2}{2h}\right)^{3/2}} \frac{\|\xv-\yv\|_\infty}{h}$ and take the maximum over $h$, we have
$h = \frac{\|\xv-\yv\|_2^2}{4}$, and corresponding 
$$
f_{\max}(h,\xv,\yv) = \frac{2}{3^{3/2}}\frac{\|\xv-\yv\|_\infty}{\|\xv-\yv\|_2^2},
$$
where the only difference compared to the RBF kernel is the constant $\frac{2}{3^{3/2}}$.

\subsection{Derivation of Proposition 1}
Here we derive the kernel smoothed gradient $\Gv(\xv;p,q)$ the repulsive force $\Rv(\xv;q)$ when $q(\yv) = \mathcal{N}(\yv | \muv, \Sigmav)$ is a Gaussian distribution. This example will be useful for understanding the relationship between SVGD and dimensionality, and will be helpful for understanding the convergence condition when $p$ is also a Gaussian.

Since $q(\yv)$ is Gaussian and $k(\xv,\yv)$ is the RBF kernel, $q(\yv)k(\xv,\yv)$ can be regarded as a rescaled Gaussian distribution over $\yv$, i.e., $q(\yv)k(\xv,\yv) = $
$$
\begin{aligned}
& ~~~ \frac{\sqrt{\det \Sigmav^{-1}}}{\sqrt{2\pi }}\exp\left( -\frac{1}{2}(\yv-\muv)^\top \Sigmav^{-1} (\yv-\muv) - \frac{||\xv-\yv||_2^2}{2h} \right) \\
& = \frac{\sqrt{\det \Sigmav^{-1}}}{\sqrt{2\pi}} \exp\left( - \frac{||\xv||_2^2}{2h} - \frac{1}{2}\muv^\top \Sigmav^{-1} \muv \right)\\
&~~~~~~\cdot \exp\left( -\frac{1}{2}\yv^\top (\Sigmav^{-1} + \frac{1}{h}\Iv)\yv + (\Sigmav^{-1} \muv + \frac{\xv}{h})^\top \yv \right)  \\
& = \frac{\sqrt{\det \Sigmav^{-1}}}{\sqrt{\det(\Sigmav^{-1} + \frac{1}{h}\Iv)}} \mathcal{N}(\yv|\tilde{\muv},\tilde{\Sigmav})\\
&~~~~~~ \cdot \exp\left( \frac{1}{2} \tilde{\muv}^\top \tilde{\Sigmav}^{-1} \tilde{\muv} -\frac{||\xv||_2^2}{2h} -\frac{1}{2} \muv^\top \Sigmav^{-1} \muv  \right) \\
& = \frac{\sqrt{\det \Sigmav^{-1}}}{\sqrt{\det(\Sigmav^{-1} + \frac{1}{h}\Iv)}} \mathcal{N}(\yv|\tilde{\muv},\tilde{\Sigmav})\\
&~~~~~~ \cdot \exp\left( -\frac{1}{2} (\xv - \muv)^\top (\Sigmav + h \Iv)^{-1} (\xv-\muv) \right) \\
& = \frac{\sqrt{\det \Sigmav^{-1}}}{\sqrt{\det(\Sigmav^{-1} + \frac{1}{h}\Iv)}} \exp\left( -\frac{1}{2}d(\xv,\muv)\right) \mathcal{N}(\yv|\tilde{\muv},\tilde{\Sigmav})\\
\end{aligned}
$$
with $\tilde{\muv} = (\Sigmav^{-1} + \frac{1}{h}\Iv)^{-1} (\Sigmav^{-1} \muv + \frac{1}{h}\xv)$, $\tilde{\Sigmav} = (\Sigmav^{-1} + \frac{1}{h}\Iv)^{-1}$ and $d(\xv,\muv) = (\xv - \muv)^\top (\Sigmav + h \Iv)^{-1} (\xv-\muv) $.

Given $q(\yv)k(\xv,\yv)$, the repulsive force can be computed, i.e., 
$$
\begin{aligned}
\Rv(\xv;q) & = \frac{\sqrt{\det \Sigmav^{-1}}}{\sqrt{\det(\Sigmav^{-1} + \frac{1}{h}\Iv)}} \exp\left( -\frac{1}{2}d(\xv,\muv)\right) \cdot \frac{\xv - \tilde{\muv}}{h}  \\
	& = \frac{h^{D/2} \exp\left( -\frac{1}{2}d(\xv,\muv)\right)}{\sqrt{\det(\Sigmav + h\Iv)}} (\Sigmav+h\Iv)^{-1} (\xv - \muv)
\end{aligned}
$$
and thus we have
$$
\|\Rv(\xv;q)\|_2 \leq \frac{h^{D/2}}{\sqrt{\det(\Sigmav + h\Iv)}} \|(\Sigmav+h\Iv)^{-1} (\xv - \muv)\|_2
$$
by using the fact that $\exp\left( -\frac{1}{2}d(\xv,\muv)\right) \leq 1$. Then, Assume the eigenvalue decomposition for $\Sigmav$ is $\Sigmav = \Uv \Lambdav \Uv^\top$ with $\Lambdav = \mathrm{diag} (\lambda_1,...,\lambda_D)$ where $\lambda_1 \geq \cdots \geq \lambda_D \geq C$, we have
$$
\begin{aligned}
\det (\Sigmav + h\Iv) & = \det (\Lambdav + h \Iv) = \prod_{d=1}^D \left(\lambda_d + h \right).\\
\end{aligned}
$$
and $\|(\Sigmav+h\Iv)^{-1} (\xv - \muv)\|_2 \leq \frac{1}{h+\lambda_D}\|\xv-\muv\|_2$.

So, we have
$$
\begin{aligned}
\|\Rv(\xv;q)\|_2 & \leq \frac{(1+\lambda_D / h)^{-1}}{h\sqrt{\prod_{d=1}^D (1 + \lambda_d / h)}} \|\xv - \muv\|_2 \\
& \leq \frac{1}{h(1+\lambda_D/h)^{D/2+1}}\|\xv - \muv\|_2
\end{aligned}
$$
Let $f(h) = h(1+\lambda_D/h)^{D/2+1}$, and it is easy to show that when $h = D\lambda_D / 2$, $f(h)$ attains its minimum, which is $f_{\min} = \frac{D\lambda_D}{2}(1+2/D)^{D/2+1} = (1+D/2)\lambda_D(1+2/D)^{D/2}$. Let $\lambda_{\min}(\Sigmav) = \lambda_D$ denote the smallest eigenvalue, we have
$$
\begin{aligned}
\|\Rv(\xv;q)\|_2 & \leq \frac{1}{(1+D/2)\lambda_{\min}(\Sigmav)(1+2/D)^{D/2}} \|\xv - \muv\|_2. \\
\end{aligned}
$$
By using the norm inequality that $\|\zv\|_\infty \leq \|\zv\|_2 \leq D^{1/2}\|\zv\|_\infty$, we have
$$
\|\Rv(\xv;q)\|_\infty \leq \frac{\sqrt{D}}{(1+D/2)\lambda_{\min}(\Sigmav)(1+2/D)^{D/2}} \|\xv - \muv\|_\infty. 
$$
This can be further simplified by noting that $\lim_{D \to \infty}(1+2/D)^{D/2} = e$, so for large $D$, we have the following inequality
$$
\begin{aligned}
\|\Rv(\xv;q)\|_2 & \leq \frac{1}{(1+D/2)\lambda_D(1+2/D)^{D/2}} \|\xv - \muv\|_2 \\
& \lesssim \frac{1}{(1+D/2)\lambda_{\min}(\Sigmav)e}\|\xv - \muv\|_2 \\
& \lesssim \frac{1}{D\lambda_{\min}(\Sigmav)}\|\xv - \muv\|_2 \\
\end{aligned}
$$
and corresponding
$$
\|\Rv(\xv;q)\|_\infty \lesssim \frac{1}{\sqrt{D}\lambda_{\min}(\Sigmav)}\|\xv - \muv\|_\infty. \\
$$

When $q$ is Gaussian whose smallest eigenvalue of $\Sigmav$ is greater than some constant $C$, corresponding $\Rv(\xv;q)$ decreases to zero vector as $1/D$ in $\|\cdot\|_2$ or as $1/\sqrt{D}$ in $\|\cdot\|_\infty$.


\subsection{Proof of Proposition 2}


The inequality can be decomposed as:
$$
\begin{aligned}
& ~~~~ P\left( \frac{\|\yv - \xv\|_\infty}{\|\yv-\xv\|_2^2} \geq \frac{1}{D^{\alpha}} \right) \\
& = P\left( \frac{\max_d |y_d-x_d|}{\|\yv-\xv\|_2^2} \geq \frac{1}{D^{\alpha}} \right) \\
& = P\left( \frac{\max_d |y_d-x_d|}{\|\yv-\xv\|_2^2} \geq \frac{1}{D^{\alpha}}, \|\yv - \xv\|_2^2 \leq b \right) \\
& ~~~~ + P\left( \frac{\max_d |y_d-x_d|}{\|\yv-\xv\|_2^2} \geq \frac{1}{D^{\alpha}}, \|\yv - \xv\|_2^2 > b \right) \\
& \leq P(\|\yv - \xv\|_2^2 \leq b) + P\left(\max_d |y_d-x_d| \geq \frac{b}{D^{\alpha}} \right) \\
& \leq P(\|\yv - \xv\|_2^2 \leq b) + \sum_{d = 1}^D P\left(|y_d-x_d| \geq \frac{b}{D^{\alpha}} \right) \\
& = P( \sum_{d=1}^D (y_d-x_d)^2 \leq b) + \sum_{d = 1}^D P\left(|y_d-x_d| \geq \frac{b}{D^{\alpha}} \right) \\
\end{aligned}
$$
holds for any $b$, even when $b$ is a function of $y$, i.e. a random variable. \\
We bound the first term by using the Azuma-Hoeffding inequality\cite{Azuma1967weighted}:
\begin{thm}
Suppose $Z_D, D \geq 1$ is a martingale such that $Z_0 = 0$ and $|Z_d-Z_{d-1}| \leq c_d, 1\leq d \leq D$ almost surely for some constants $c_d, 1\leq d \leq D$. Then, for every $t > 0$,
$$
P(Z_D > t) \leq \exp\left( - \frac{t^2}{2\sum_{d=1}^D c_d^2} \right),
$$
and 
$$
P(Z_D < -t) \leq \exp\left( - \frac{t^2}{2\sum_{d=1}^D c_d^2} \right).
$$
\end{thm}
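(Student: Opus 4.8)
The plan is to prove this by the standard Chernoff / exponential-moment method, essentially Azuma's original argument. Let $\mathcal{F}_d$ denote the natural filtration and write $Z_D = \sum_{d=1}^D X_d$ with $X_d = Z_d - Z_{d-1}$ the martingale differences, so that $\mathbb{E}[X_d \mid \mathcal{F}_{d-1}] = 0$ and $|X_d| \le c_d$ almost surely. The whole proof reduces to controlling the moment generating function $\mathbb{E}[e^{\lambda Z_D}]$ for $\lambda > 0$ and then applying Markov's inequality to $e^{\lambda Z_D}$.

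The core ingredient is Hoeffding's lemma in conditional form: if $Y$ satisfies $\mathbb{E}[Y \mid \mathcal{G}] = 0$ and $|Y| \le c$ almost surely, then $\mathbb{E}[e^{\lambda Y} \mid \mathcal{G}] \le e^{\lambda^2 c^2/2}$ almost surely. I would prove this by convexity of $y \mapsto e^{\lambda y}$: on $[-c,c]$ the function lies below its chord, giving $e^{\lambda y} \le \frac{c-y}{2c}e^{-\lambda c} + \frac{c+y}{2c}e^{\lambda c}$. Taking conditional expectation and using $\mathbb{E}[Y \mid \mathcal{G}] = 0$ annihilates the linear term and leaves $\cosh(\lambda c)$, which is bounded by $e^{\lambda^2 c^2/2}$ through the term-by-term Taylor comparison $(2k)! \ge 2^k k!$.

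With the lemma established, I would peel off the final difference by conditioning on $\mathcal{F}_{D-1}$. Since $Z_{D-1}$ is $\mathcal{F}_{D-1}$-measurable, $\mathbb{E}[e^{\lambda Z_D}] = \mathbb{E}\!\left[ e^{\lambda Z_{D-1}} \, \mathbb{E}[e^{\lambda X_D} \mid \mathcal{F}_{D-1}] \right] \le e^{\lambda^2 c_D^2/2}\, \mathbb{E}[e^{\lambda Z_{D-1}}]$. Inducting downward yields $\mathbb{E}[e^{\lambda Z_D}] \le \exp\!\big(\tfrac{\lambda^2}{2}\sum_{d=1}^D c_d^2\big)$. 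Combining with Markov's inequality gives $P(Z_D > t) \le \exp\!\big(-\lambda t + \tfrac{\lambda^2}{2}\sum_{d=1}^D c_d^2\big)$ for every $\lambda > 0$, and minimizing the exponent over $\lambda$ at $\lambda^* = t / \sum_{d=1}^D c_d^2$ produces the claimed bound. The lower-tail inequality follows at once by applying the upper-tail result to the martingale $-Z_D$, whose differences obey the same bounds $c_d$.

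The main obstacle is the conditional MGF estimate: one must verify that the hypotheses $\mathbb{E}[X_d \mid \mathcal{F}_{d-1}] = 0$ and $|X_d| \le c_d$ hold \emph{almost surely}, so that the deterministic bound $e^{\lambda^2 c_d^2/2}$ can be pulled out of the nested conditional expectations when iterating. The remaining steps---the convexity chord argument, the $\cosh(\lambda c) \le e^{\lambda^2 c^2/2}$ comparison, and the optimization over $\lambda$---are routine.
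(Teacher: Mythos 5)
Your proof is correct, but there is nothing in the paper to compare it against: the paper states this theorem as the classical Azuma--Hoeffding inequality, cites Azuma (1967) for it, and uses it purely as a black box in the proof of Proposition~2 (there, the work goes into \emph{verifying its hypotheses} for $Z_D = \sum_{d=1}^D \big( (y_d-x_d)^2 - \mathbb{E}[(y_d-x_d)^2 \mid y_{1:d-1}] \big)$, showing it is a martingale with increments bounded by $c_d = 8C^2$, which yields the paper's denominator $2\sum_d c_d^2 = 128DC^4$). What you have written is the standard proof of the cited classical result, and every step is sound: the conditional Hoeffding lemma via the chord bound $e^{\lambda y} \le \frac{c-y}{2c}e^{-\lambda c} + \frac{c+y}{2c}e^{\lambda c}$ on $[-c,c]$, the mean-zero condition killing the linear term to leave $\cosh(\lambda c)$, the comparison $\cosh(\lambda c) \le e^{\lambda^2 c^2/2}$ from $(2k)! \ge 2^k\,k!$, the tower-property induction giving $\mathbb{E}\big[e^{\lambda Z_D}\big] \le \exp\big(\tfrac{\lambda^2}{2}\sum_{d=1}^D c_d^2\big)$, Markov's inequality, optimization at $\lambda^* = t\big/\sum_{d=1}^D c_d^2$ producing the exponent $-t^2\big/\big(2\sum_{d=1}^D c_d^2\big)$, and the lower tail by applying the upper-tail bound to $-Z_D$. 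The ``obstacle'' you flag---that $\mathbb{E}[X_d \mid \mathcal{F}_{d-1}] = 0$ and $|X_d| \le c_d$ must hold almost surely so the deterministic factor $e^{\lambda^2 c_d^2/2}$ can be pulled out of the nested conditional expectations---is indeed the one point requiring care, and your peeling-off argument handles it correctly; it is also precisely the almost-sure increment bound that the paper's Proposition~2 proof takes pains to establish before invoking this theorem.
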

To use it, we construct that $Z_D = \sum_{d=1}^D (y_d-x_d)^2 - \sum_{d=1}^D \mathbb{E}[(y_d-x_d)^2|y_{1:d-1}], \forall D \geq 0$ and $Z_0 = 0$ is a martingale. 

First we notice that $Z_D - Z_{D-1} = (y_D-x_D)^2 - \mathbb{E}[(y_D-x_D)^2|y_{1:D-1}]$, which satisfies $\mathbb{E}[Z_D|Z_{1:D-1}] - Z_{D-1} = \mathbb{E}[(y_D-x_D)^2|y_{1:D-1}] - \mathbb{E}[(y_D-x_D)^2|y_{1:D-1}] = 0$.

And then, since we assume $q$ is with bounded support, we have
\begin{align*}
& \mathbb{E}[|Z_D|] = \mathbb{E}_{p(y_{1:D})}[|Z_D|] \\
= & \mathbb{E}_{p(y_{1:D})} \left[\left|\sum_{d=1}^D \left( (y_d-x_d)^2 - \mathbb{E}_{p(y_d|y_{1:d-1})}[(y_d-x_d)^2] \right) \right| \right] \\
\leq & \sum_{d=1}^D \mathbb{E}_{p(y_{1:d})} \left[ \left| (y_d-x_d)^2 - \mathbb{E}_{p(y_d|y_{1:d-1})}[(y_d-x_d)^2] \right| \right]\\
\leq & \sum_{d=1}^D \mathbb{E}_{p(y_{1:d})} \left[ (y_d-x_d)^2 + \mathbb{E}_{p(y_d|y_{1:d-1})}[(y_d-x_d)^2] \right] \\
= & 2 \sum_{d=1}^D \mathbb{E}_{p(y_{1:d-1})} \left[ \mathbb{E}_{p(y_d|y_{1:d-1})}[(y_d-x_d)^2] \right] \\
= & 2 \sum_{d=1}^D \mathbb{E}_{p(y_d)} \left[ (y_d-x_d)^2 \right] \\
\leq & 8 D C^2 \\
\leq & \infty .
\end{align*}
So we show that $Z_D = \sum_{d=1}^D (y_d-x_d)^2 - \sum_{d=1}^D \mathbb{E}[(y_d-x_d)^2|y_{1:d-1}]$ is a martingale. \\

Now we show that
\begin{align*}
|Z_d - Z_{d-1}| = & |(y_d - x_d)^2 - \mathbb{E}_{p(y_d|y_{1:d-1})}[(y_d-x_d)^2]| \\
\leq & (y_d-x_d)^2 + \mathbb{E}_{p(y_d|y_{1:d-1})}[(y_d-x_d)^2] \\
\leq & 8 C^2.
\end{align*}
So, by choosing $b = \sum_{d=1}^D \mathbb{E}[(y_d-x_d)^2|y_1,...,y_{d-1}] - t$ (notice that $b$ here is indeed a random variable) and using the inequality, we have
$$
\begin{aligned}
&~~~~ P(\sum_{d=1}^D (y_d-x_d)^2 < b ) \\
& = P(\sum_{d=1}^D (y_d-x_d)^2 - \sum_{d=1}^D \mathbb{E}[(y_d-x_d)^2|y_1,...,y_{d-1}]  < -t) \\
& \leq  \exp\left( - \frac{t^2}{128 D C^4 } \right)
\end{aligned}
$$
When $t = D C_0 / 2$, we have
$$
P(\sum_{d=1}^D (y_d-x_d)^2 < b ) \leq  \exp\left( - \beta D \right),
$$
where $\beta = C_0^2 / (256 C^4)$.

Now we bound the second term. Notice that $b =\sum_{d=1}^D \mathbb{E}[(y_d-x_d)^2|y_1,...,y_{d-1}] -  \frac{1}{2}D C_0 \geq \frac{1}{2}\sum_{d=1}^D C_0 = DC_0/2 = b'$ almost surely as the assumption ($b$ is a random variable while $b'$ is a constant), we have
$$
\begin{aligned}
& ~~~~ P\left(|y_d - x_d| \geq \frac{b}{D^{\alpha}} \right) \leq P\left(|y_d - x_d| \geq \frac{b'}{D^{\alpha}} \right)\\
& = P\left(y_d \geq x_d + \frac{b'}{D^{\alpha}} \right) + P\left(y_d \leq x_d - \frac{b'}{D^{\alpha}} \right) \\
& = P\left(\exp(t_1 y_d) \geq \exp(t_1 (\frac{ b'}{D^{\alpha}}+ x_d)) \right) \\
&~~~~  + P\left(\exp(- t_2 y_d) \geq \exp( t_2 (\frac{b'}{D^{\alpha}} - x_d)) \right) \\
& \leq \frac{\mathbb{E}[\exp(t_1 y_d)]}{\exp\left(t_1 (\frac{ b'}{D^{\alpha}}+ x_d)\right)} + \frac{\mathbb{E}[\exp(-t_2 y_d)]}{\exp\left( t_2 (\frac{b'}{D^{\alpha}} - x_d)\right)} \\
& \leq \frac{\exp(t_1 \mu_d + \frac{1}{2} t_1^2 C^2) }{\exp\left(t_1 (\frac{ b'}{D^{\alpha}}+ x_d)\right)} + \frac{\exp(-t_2 \mu_d + \frac{1}{2} t_2^2 C^2)}{\exp\left( t_2 (\frac{b'}{D^{\alpha}} - x_d)\right)} \\
& = \frac{\exp(t_1 \mu_d + \frac{1}{2} t_1^2 C^2) }{\exp\left(t_1 (\frac{1}{2}D^{1-\alpha}C_0^2+ x_d)\right)} + \frac{\exp(-t_2 \mu_d + \frac{1}{2}t_2^2 C^2)}{\exp\left( t_2 (\frac{1}{2}D^{1-\alpha}C_0^2 - x_d)\right)} \\
& = \frac{\exp(\frac{1}{2}t_1^2 C^2 + t_1 (\mu_d - x_d))}{\exp\left( \frac{1}{2}t_1 D^{1- \alpha} C_0^2 \right)} + \frac{\exp(\frac{1}{2}t_2^2 C^2 - t_2 (\mu_d - x_d) )}{\exp \left( \frac{1}{2} t_2 D^{1- \alpha} C_0^2 \right)} \\
\end{aligned}
$$
holds for any $t_1, t_2 > 0$, where the first inequality holds because of the Markov inequality, and the second inequality holds according to the definition of the sub-Gaussian distribution. According to Hoeffding's Lemma, any bounded random variables $|Z| \leq C$ corresponds to $C$-sub-Gaussian distribution, which satisfies $\mathbb{E}[e^{t(Z-\mu)}] \leq \exp(t^2 C^2 / 2)$ for any $t \in \mathbb{R}$. 

Now let $t = t_1 = t_2$ and $\mu_d' = \mu_d-x_d$, we have
$$
\begin{aligned}
&~~~~ P\left(|y_d - x_d| \geq \frac{b}{D^{\alpha}} \right) \\
& \leq \frac{\exp(t^2 C^2 / 2 + t \mu_d')}{\exp\left( t D^{1- \alpha} C_0^2 / 2 \right)} + \frac{\exp(t^2 C^2 / 2 - t \mu_d' )}{\exp \left( 5 t D^{1- \alpha} \sigma^2 \right)} \\ 
& = \exp \left( - t D^{1-\alpha} C_0^2 / 2 + t^2C^2 / 2 \right) \left(e^{t \mu_d'} + e^{-t \mu_d'} \right) \\
& \leq 2\exp \left( -t D^{1-\alpha} C_0^2 / 2 + t^2 C^2 / 2 \right) \cosh (t \mu_d') \\
\end{aligned}
$$
By choosing $t = 2 / C_0^2$, we have
$$
P\left(|y_d-x_d| \geq \frac{b}{D^{\alpha}} \right) \leq L \exp ( - D^{1-\alpha} ) 
$$
where $2\exp \left(2C^2 / C_0^2 \right) \cosh (2\mu_{z_d} / C_0^2) \leq 2\exp \left(2C^2 / C_0^2 \right) \cosh (2\|\muv - \xv\|_\infty / C_0^2) \leq 2\exp \left(2C^2 / C_0^2 \right) \cosh (4 C/ C_0^2) = L$. \\
Combining these two terms, we have
$$
P\left( \frac{\|\yv - \xv\|_\infty}{\|\yv-\xv\|_2^2} \geq \frac{1}{D^{\alpha}} \right) \leq e^{-\beta D} + L D e^{-D^{1-\alpha}} 
$$
for some $\beta, L \geq 0$.
Now, we'd like to give a clean (but loose) bound by noticing that
$$
e^{-\beta D} + L D e^{-D^{1-\alpha}} \leq (L+1)D \max \{ e^{-D^{1-\alpha}}, e^{-\beta D}\}.
$$
By using some derivations, we can get another bound
$$
D e^{-D^{1-\alpha}} \leq e^{-(1-1/e)D^{1-\alpha}},
$$
and
$$
D e^{-\beta D} \leq \frac{1}{\beta}e^{-\frac{1}{2}\beta D}.
$$

Let $\delta' \geq (L+1) \max\{ e^{-(1-1/e)D^{1-\alpha}} , \frac{1}{\beta}e^{-\frac{1}{2}\beta D} \}$, we have
$$
D \geq \max\{ \exp(\frac{1}{1-\alpha})\frac{1}{1-1/e}\log \frac{L+1}{\delta'}, \frac{2}{\beta}\log \frac{L+1}{\beta \delta'} \}.
$$
As a result, for any $\delta' \in (0,1)$, there exists $D_0 = \max\{ \exp(\frac{1}{1-\alpha})\frac{1}{1-1/e}\log \frac{L+1}{\delta'}, \frac{2}{\beta}\log \frac{L+1}{\beta \delta'} \}$, such that for any $D > D_0$, we have $\frac{\|\yv - \xv\|_\infty}{\|\yv-\xv\|_2^2} \leq \frac{1}{D^{\alpha}}$ with at least probability $1 - \delta'$. 

Now, we begin to prove our proposition. By using the conclusion in section \ref{repulmag}, we can bound $\|\Rv(\xv;\hat{q}_M)\|_\infty$ as
$$
\| \Rv(\xv;\hat{q}_M) \|_\infty \leq \frac{2}{Me} \sum_{i=1}^M \frac{\|\xv-\yv\|_\infty}{\|\xv-\yv\|_2^2}.
$$
According to the union bound, we have 
$$
\begin{aligned}
& ~~~~P\left( \max_i \frac{\|\xv^{(i)} - \xv\|_\infty}{\|\xv^{(i)}-\xv\|_2^2} \geq D^{-\alpha} \right)\\
&  \leq \sum_{i=1}^M P\left( \frac{\|\xv^{(i)}- \xv\|_\infty}{\|\xv^{(i)}-\xv\|_2^2} \geq D^{-\alpha} \right) \\
&  \leq M P\left( \frac{\|\yv- \xv\|_\infty}{\|\yv-\xv\|_2^2} \geq D^{-\alpha} \right), \yv \sim q \\
\end{aligned}
$$
where the last inequality holds since $\{\xv^{(i)}\}_{i=1}^M$ are samples from $q$. Then, we can directly apply the conclusion with $\delta = M\delta'$. Then, we have, for any $\delta \in (0,1)$, there exists $D_0 = \max\{ \exp(\frac{1}{1-\alpha})\frac{1}{1-1/e}\log \frac{(L+1)M}{\delta}, \frac{2}{\beta}\log \frac{(L+1)M}{\beta \delta} \}$, such that for any $D > D_0$, we have 
$$
\| \Rv(\xv;\hat{q}_M) \|_\infty \leq \frac{2}{eD^\alpha} 
$$
with at least probability $1-\delta$.

\section{Detailed Derivation and Proof for Section 4}

\subsection{Derivation of Sub-KL Divergence}\label{sec:derivsubKL}
Given the condition that 
$$
\mathrm{KL}(q(x_d|\xv_{-d})q(\xv_{-d})\|p(x_d|\xv_{\Gamma_d})q(\xv_{-d}))] = 0,~\forall d,
$$
we have $q(x_d|\xv_{-d})=p(x_d|\xv_{\Gamma_d}) = p(x_d|\xv_{-d}),~\forall d$. When both $p$ and $q$ are differentiable, we have $\nabla_{x_d} \log q(x_d|\xv_{-d}) = \nabla_{x_d} \log p(x_d|\xv_{-d}), \forall d$. In other words, we have $\nabla_{\xv} \log q(\xv) = \nabla_{\xv} \log p(\xv)$, and thus $q(\xv) = e^C p(\xv)$. By using the normalization property of distribution, we have $C=0$ and thus $q(\xv) = p(\xv)$.

\subsection{Derivation of $q_{[\Tv]}(\zv_{\neg d})=q(\zv_{\neg d})$}\label{sec2}
Recall the change of variable theorem, we have 
$$
q_{[\Tv]}(\zv) = q(\Tv^{-1}(\zv))\left| \mathrm{det} (\nabla_\zv \Tv^{-1}) \right|.
$$
Since $\Tv_{\neg d}$ is an identity mapping from $\xv_{\neg d}$ to $\xv_{\neg d}$, $\nabla_{\zv} \Tv^{-1}$ is a block-wise triangular matrix and the determinant $\mathrm{det} (\nabla_\zv \Tv^{-1})$ satisfies
$$
\begin{aligned}
\mathrm{det} (\nabla_\zv \Tv^{-1}) & = \mathrm{det} (\nabla_{\zv_{\neg d}} \Tv_{\neg d}^{-1}) \cdot \mathrm{det} (\nabla_{z_d} T_d^{-1}) \\
& = \mathrm{det} (\nabla_{z_d} T_d^{-1}).
\end{aligned}
$$
As a result, we have
$$
q_{[\Tv]}(\zv) = q(\Tv^{-1}(\zv))\left|\mathrm{det} (\nabla_{z_d} T_d^{-1}) \right|.
$$
So $q_{[T]}(\zv_{\neg d}) =$
$$
\begin{array}{ll}
&~~~~ \int q_{[T]}(\zv) d z_d \\ 
& = q(\zv_{\neg d}) \int q(T_d^{-1}(z_d)|\zv_{\neg d})\left|\mathrm{det} (\nabla_{z_d} T_d^{-1}) \right| d z_d \\
& = q(\zv_{\neg d}) \int q_{[T_d]}(z_d | \zv_{\neg d}) d z_d = q(\zv_{\neg d}).
\end{array}
$$

\subsection{Proof of Proposition 3}
First we prove that
$$
\begin{aligned}
& \nabla_{\epsilon} \mathrm{KL}(q_{[\Tv]}\|p) = \\
& ~~~~\nabla_{\epsilon} \mathrm{KL}\big(q_{[T_d]}(z_d|\zv_{\Gamma_d})q(\zv_{\Gamma_d})\big\|p(z_d|\zv_{\Gamma_d})q(\zv_{\Gamma_d})\big).
\end{aligned}
$$
Given $\zv = \Tv(\xv) = [x_1,...,T_d(x_d),...,x_D]^\top$, as proved in Section \ref{sec2}, we have $q_{[\Tv]}(\zv) = q(\Tv^{-1}(\zv))|\mathrm{det}(\nabla_{z_d} T_d^{-1})|$ and thus
\begin{equation}\label{Eq:KLdecomp}
\begin{aligned}
\mathrm{KL}(q_{[\Tv]}\|p) & = \mathrm{KL}\big(q_{[T_d]}(z_d|\zv_{\neg d})q(\zv_{\neg d})\big\|p(z_d|\zv_{\Gamma_d})q(\zv_{\neg d})\big) \\
&~~~~+\mathrm{KL}\big(q(\zv_{\neg d})\|p(\zv_{\neg d})\big).
\end{aligned}
\end{equation}
When $T_d:x_d \to x_d + \epsilon \phi_d(x_{S_d})$ where $S_d = \{d\} \cup \Gamma_d$, we can further decompose the right handside of Eq. (\ref{Eq:KLdecomp}), i.e., $\mathrm{KL}\big(q_{[T_d]}(z_d|\zv_{\neg d})q(\zv_{\neg d})\big\|p(z_d|\zv_{\Gamma_d})q(\zv_{\neg d}\big) = $
$$
\begin{aligned}
& ~~ \mathrm{KL}\big(q_{[T_d]}(z_d|\zv_{\neg d})q(\zv_{\neg d})\big\|q_{[T_d]}(z_d|\zv_{\Gamma_d})q(\zv_{\neg d})\big) \\
& + \mathrm{KL}\big(q_{[T_d]}(z_d|\zv_{\Gamma_d})q(\zv_{\neg d})\big\|p(z_d|\zv_{\Gamma_d})q(\zv_{\neg d})\big).
\end{aligned}
$$
By using the change of variable, we can find out that
$$
\begin{aligned}
&~~~~\mathrm{KL}\big(q_{[T_d]}(z_d|\zv_{\neg d})q(\zv_{\neg d})\big\|q(z_d|\zv_{\Gamma_d})q(\zv_{\neg d})\big) \\
& = \int q_{[\Tv]}(\zv) \log \frac{q_{T_d}(z_d|\zv_{\neg d})}{q_{T_d]}(z_d|\zv_{\Gamma_d})} d\zv \\
& = \int q(\xv) \log \frac{q(x_d|\xv_{\neg d})/|\mathrm{det}(\nabla_{x_d} T_d)|}{q(x_d|\xv_{\Gamma_d})/|\mathrm{det}(\nabla_{x_d} T_d)|} d\xv \\
& = \int q(\xv) \log \frac{q(x_d|\xv_{\neg d})}{q(x_d|\xv_{\Gamma_d})} d\xv \\
& = \mathrm{KL}\big(q(x_d|\xv_{\neg d})q(\xv_{\neg d})\big\|q(x_d|\xv_{\Gamma_d})q(\xv_{\neg d})\big), \\
\end{aligned}
$$
which is unrelated with $T_d$ (and thus unrelated with $\epsilon$). As a result, we have
$$
\begin{aligned}
& \nabla_{\epsilon} \mathrm{KL}(q_{[\Tv]}\|p) = \\
& ~~~~\nabla_{\epsilon} \mathrm{KL}\big(q_{[T_d]}(z_d|\zv_{\Gamma_d})q(\zv_{\Gamma_d})\big\|p(z_d|\zv_{\Gamma_d})q(\zv_{\Gamma_d})\big).
\end{aligned}
$$
Now we derive the optimal $\phi_d^*$ for $\min_{\|\phi_d\|_{\mathcal{H}_d} \leq 1} \nabla_\epsilon \mathrm{KL}(q_{[\Tv]}\|p) |_{\epsilon = 0}$. Notice that
$$
\begin{aligned}
&~~~~\mathrm{KL}\big(q_{[T_d]}(z_d|\zv_{\Gamma_d})q(\zv_{\neg d})\big\|p(z_d|\zv_{\Gamma_d})q(\zv_{\neg d})\big) \\
& = \int q_{[T_d]}(z_d|\zv_{\Gamma_d})q(\zv_{\Gamma_d})\log \frac{q_{[T_d]}(z_d|\zv_{\Gamma_d})}{p(z_d|\zv_{\Gamma_d})} d\zv \\
& = \mathbb{E}_{q(\zv_{\Gamma_d})}\Big[ \mathrm{KL}\big(q_{[T_d]}(z_d|\zv_{\Gamma_d})\|p(z_d|\zv_{\Gamma_d})\big) \Big].
\end{aligned}
$$
Following the proof of Theorem 3.1 in \cite{Liu2016SVBP}, we have
$$
\begin{aligned}
& ~~~~ \nabla_{\epsilon} \mathrm{KL}\big(q_{[T_d]}(z_d|\zv_{\Gamma_d})\|p(z_d|\zv_{\Gamma_d})\big) |_{\epsilon = 0}\\
& = -\mathbb{E}_{q(y_d|\yv_{\Gamma_d})}\big[ \phi_d(\yv_{S_d})\nabla_{y_d} \log p(y_d|\yv_{\Gamma_d}) + \nabla_{y_d} \phi_d(\yv_{S_d}) \big].
\end{aligned}
$$
Combing the above three equations together, we have $\nabla_\epsilon \mathrm{KL}(q_{[\Tv]}\|p)|_{\epsilon = 0} = $
$$
-\mathbb{E}_{q(y_d|\yv_{\Gamma_d})q(\yv_{\Gamma_d})}\big[ \phi_d(\yv_{S_d})\nabla_{y_d} \log p(y_d|\yv_{\Gamma_d}) + \nabla_{y_d} \phi_d(\yv_{S_d}) \big]
$$
and $\min_{\|\phi_d\|_{\mathcal{H}_d} \leq 1} \nabla_\epsilon \mathrm{KL}(q_{[\Tv]}\|p) |_{\epsilon = 0}$ corresponds to
$$
\max_{\|\phi_d\|_{\mathcal{H}_d} \leq 1} \mathbb{E}_{q(\yv_{S_d})}\big[ \phi_d(\yv_{S_d})\nabla_{y_d} \log p(y_d|\yv_{\Gamma_d}) + \nabla_{y_d} \phi_d(\yv_{S_d}) \big].
$$
By using the reproducing property of the RKHS $\mathcal{H}_d$, we have $\phi_d(\yv_{S_d}) = \langle \phi_d(\cdot), k_d(\cdot, \yv_{S_d})\rangle_{\mathcal{H}_d}$, and thus
$$
\begin{aligned}
&~~~~\mathbb{E}_{q(\yv_{S_d})}\big[ \phi_d(\yv_{S_d})\nabla_{y_d} \log p(y_d|\yv_{\Gamma_d}) + \nabla_{y_d} \phi_d(\yv_{S_d}) \big] \\
& = \Big\langle \phi_d(\cdot), \mathbb{E}_{q(\yv_{S_d})}\big[ k_d(\cdot,\yv_{S_d})\nabla_{y_d} \log p(y_d|\yv_{\Gamma_d}) + \nabla_{y_d} k_d(\cdot,\yv_{S_d}) \big] \Big\rangle_{\mathcal{H}_d} .
\end{aligned}
$$
Following the derivation in \cite{Liu2016KSD} and \cite{Chwialkowski2016KSD}, we can show the optimal solution is $\phi_d^*/\|\phi_d^*\|_{\mathcal{H}_d}$ where
$$
\begin{aligned}
\phi_d^*(\xv_{S_d}) = & \mathbb{E}_{q(\yv_{S_d})}\big[ k_d(\xv_{S_d},\yv_{S_d})\nabla_{y_d} \log p(y_d|\yv_{\Gamma_d}) \\
& + \nabla_{y_d} k_d(\xv_{S_d},\yv_{S_d}) \big].
\end{aligned}
$$

\section{More Experimental Results}

\subsection{Toy Example for SVGD with the IMQ Kernel}

Fig. \ref{Fig1} shows the toy example for SVGD with the IMQ kernel. We can find out that the behavior of the IMQ kernel resembles that of the RBF kernel.

\begin{figure}[!htb]
	\centering
  \includegraphics[width=0.5\textwidth]{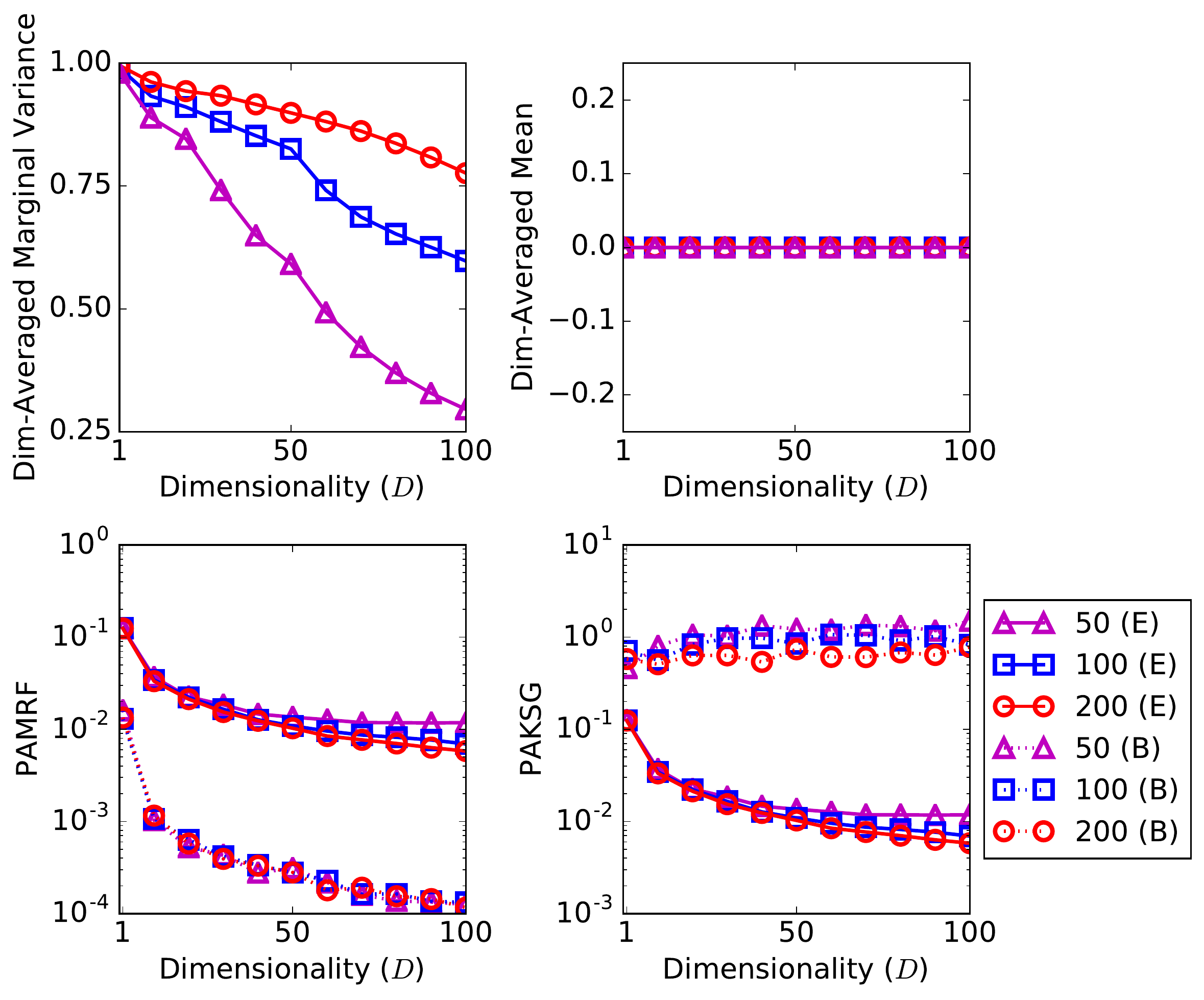}
  \vspace{-.3cm}
	\caption{Results for inferring $p(\xv)=\mathcal{N}(\xv|\zerov,\Iv)$ using SVGD with the IMQ kernel, where particles are initialized by $\mathcal{N}(\xv|\zerov,25 \Iv)$. Top two figures show the dimension-averaged marginal variance $\frac{1}{D}\sum_{d=1}^D \mathrm{Var}_{\hat{q}_M} (x_d)$ and mean $\frac{1}{D}\sum_{d=1}^D \mathbb{E}_{\hat{q}_M}[x_d]$ respectively, and bottom two figures show the particle-averaged magnitude of the repulsive force (PAMRF) $\frac{1}{M} \sum_{i=1}^M \|\Rv(\xv^{(i)};\hat{q}_M)\|_\infty$ and kernel smoothed gradient (PAKSG) $\frac{1}{M} \sum_{i=1}^M \|\Gv(\xv^{(i)};p,\hat{q}_M)\|_\infty$ respectively, at both the beginning (dotted;B) and the end of iterations (solid;E) with different number of particles $M = 50, 100$ and $200$.}
	\label{Fig1}
\end{figure}

\subsection{The Impact of Bandwidth}

\begin{figure}
	\centering
  \includegraphics[width=0.5\textwidth]{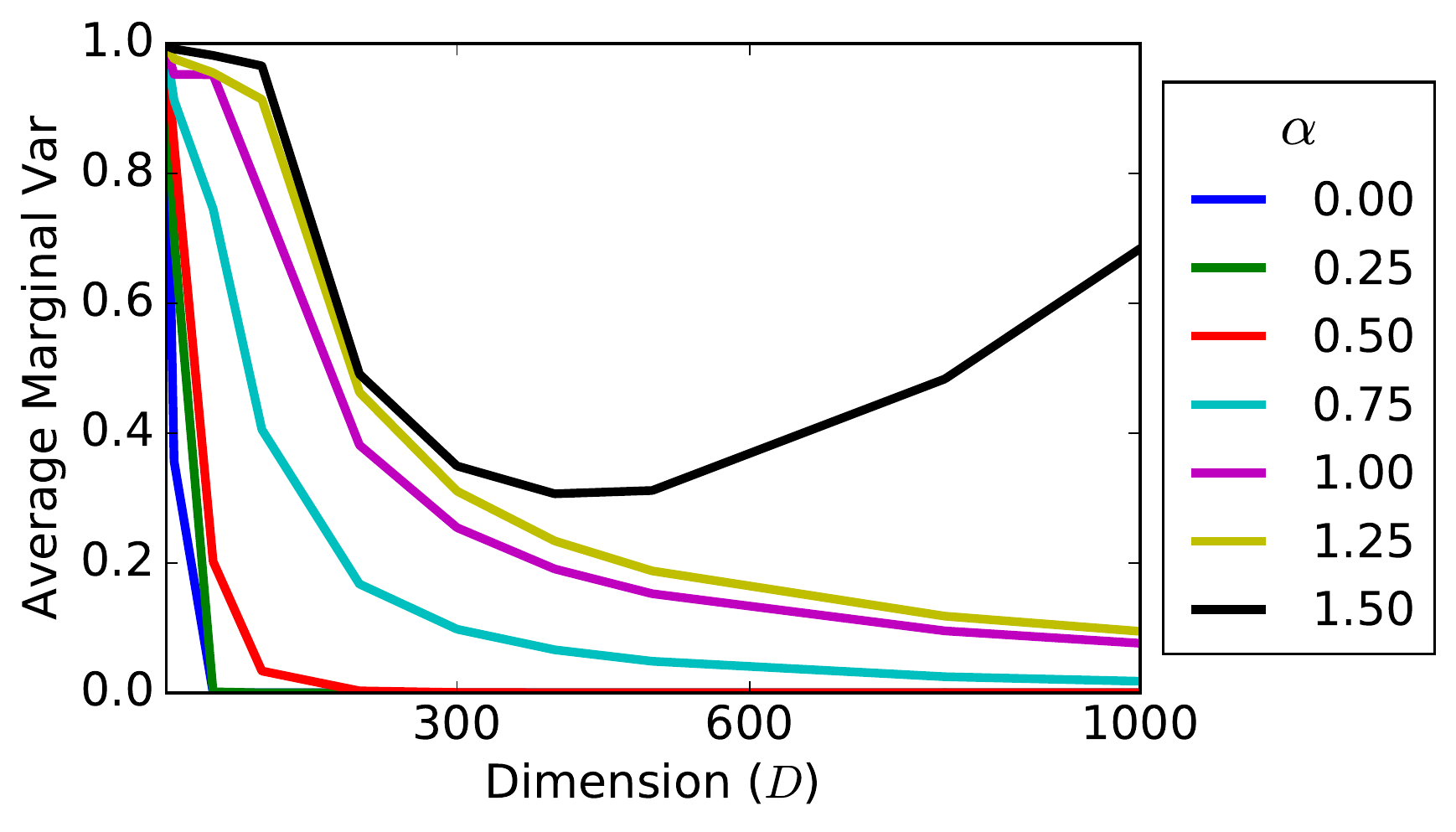}
	\caption{The average marginal variance of particles generated by SVGD with different bandwidth versus dimension.}
	\label{fig0}
\end{figure}
\begin{figure*}
	\centering
    \includegraphics[width=1\textwidth]{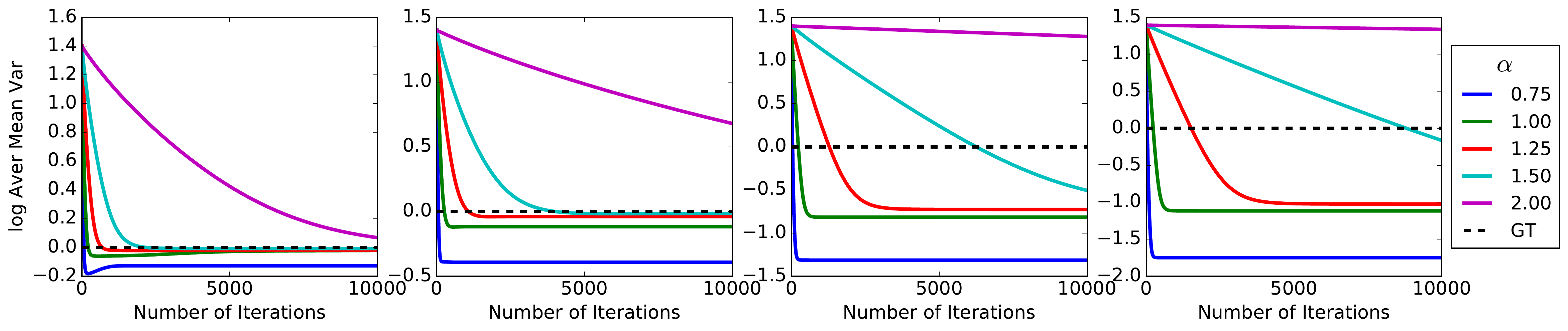}
	\caption{The convergence performance of SVGD with different bandwidth is evaluated for different dimension $D = 50, 100, 500, 1000$ arrange from left to right. ``GT'' denotes the ground truth, which equals one.}
	\label{fig00}
\end{figure*}

Bandwidth plays an important role in kernel methods. 
In this section, we provide additional experimental results for the impact of bandwidth over the performance of SVGD.

In this experiment, we set the target to be $p(\xv) = \mathcal{N}(0,\Iv)$ as a $D$ dimensional isotropic Gaussian distribution, and use $M = 100$ particles initialized as i.i.d examples from $q_0(x) = \mathcal{N}(x|0,25\Iv)$. We use the RBF kernel $k(x,y) = \exp(\frac{-\|x-y\|_2^2}{2h})$, in which the bandwidth $h = D^{\alpha - 1}\cdot\mathrm{med}^2$ with $\alpha = 1$ the median heuristic, $\alpha > 1$ the overestimated bandwidth and $\alpha < 1$ the underestimated bandwidth. We evaluate the quality of particles in marginal approximation by using the average marginal variance $\frac{1}{D}\sum_{d=1}^D \mathrm{Var}_{\hat{q}_M}(x_d)$, which measures the extent to which the particles are diverse to each other in marginals. The average marginal variance of $p(\xv)$ is $1$. For all experiments, we use Adagrad \cite{Duchi2011adagrad} for step size and execute 10000 iterations to get final particles.


Fig. \ref{fig0} demonstrates the relationship among marginal particle diversity, bandwidth choices and dimensions. An interesting observation is that there exists an inflection point around $D = 400$ in the curve of overestimated bandwidth ($\alpha = 1.5$). The reason is that larger bandwidth leads to smaller $\hat{\phi}^*(\xv)$ and thus slower convergence, and this phenomenon deteriorates as dimension increases. Thus, for the bandwidth $\alpha=1,5$ and $D > 400$, SVGD cannot converge with 10000 iterations. Excluding this unconverged case, we can find out that as dimension increases, the approximation deteriorates no matter which bandwidth is chosen. 

Fig. \ref{fig00} demonstrates the dynamic of SVGD with different bandwidth. To highlight the difference, we use the log scale in Y axis. As is shown, bandwidth plays an important role in the convergence of SVGD: smaller bandwidth leads to faster convergence. And the gap between different bandwidth becomes larger as dimension increases. Another observation is, when converged, larger bandwidth corresponds to higher marginal variance, which implies more diverse particles and better marginal approximation. Among these bandwidth choices, the median heuristic ($\alpha = 1$) is
somehow the best one for two reasons: (1) It converges almost as fast as the underestimated bandwidth ($\alpha < 1$); (2) It achieves almost the best marginal variance. Though overestimated bandwidth ($\alpha > 1$) achieves slightly better performance than the median heuristic when converged, the gap is not as large as that between the median heuristic and underestimated bandwidth. For example, in the rightmost figure, the gap of the average marginal variance between $\alpha = 1.25$ and $\alpha = 1$ is much smaller than that between $\alpha = 0.75$ and $\alpha = 1$.

\subsection{Synthetic Markov Random Fields}
Fig. \ref{Fig4} compares EP with other methods mentioned in the main body. Due to the strong Gaussian assumption, EP achieves the highest RMSE compared to other methods.

\begin{figure*}[!htb]
	\centering
	\includegraphics[width=1\textwidth]{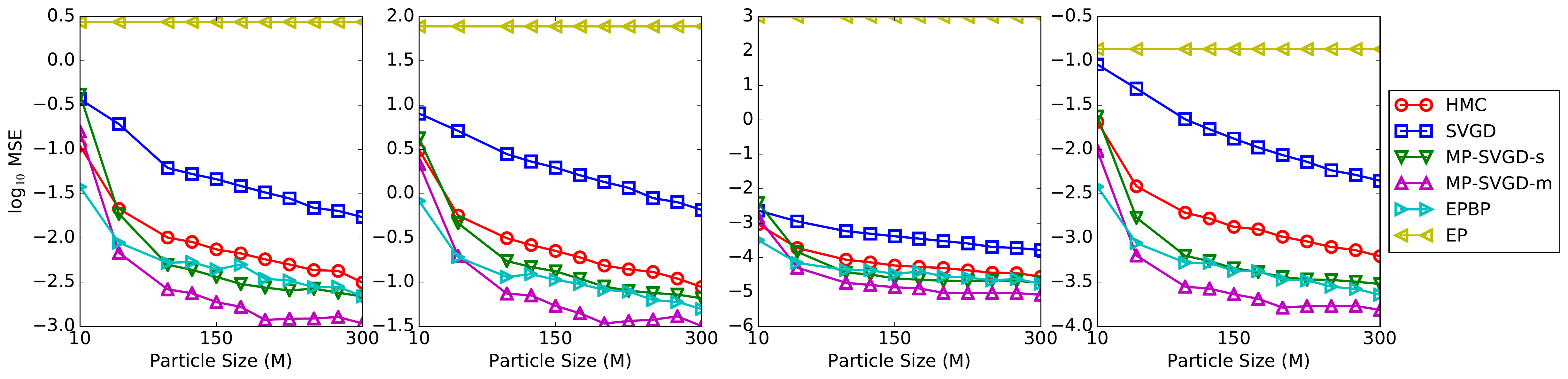}	
	\caption{A quantitative comparison of inference methods with varying number of particles. Performance is measured by the MSE of the estimation of expectation $\mathbb{E}_{\xv \sim \hat{q}_M}[\fv(\xv)]$ for test functions $\fv(\xv) = \xv$, $\xv^2$, $1/ (1+\exp(\omegav \circ \xv + \bv))$ and $\cos(\omegav \circ \xv + \bv)$, arranged from left to right, where $\circ$ denotes the element-wise product. Results are averaged over 10 random draws of $\omegav$ and $\bv$, where $\omegav, \bv \in \mathbb{R}^{100}$ with $\omega_d \sim \mathcal{N}(0,1)$ and $b_d \in \mathrm{Uniform}[0, 2\pi]$, $\forall d \in \{1,...,100\}$.}
	\label{Fig4}
\end{figure*}

\begin{figure*}[!htb]
	\centering
	\includegraphics[width=0.1373\textwidth]{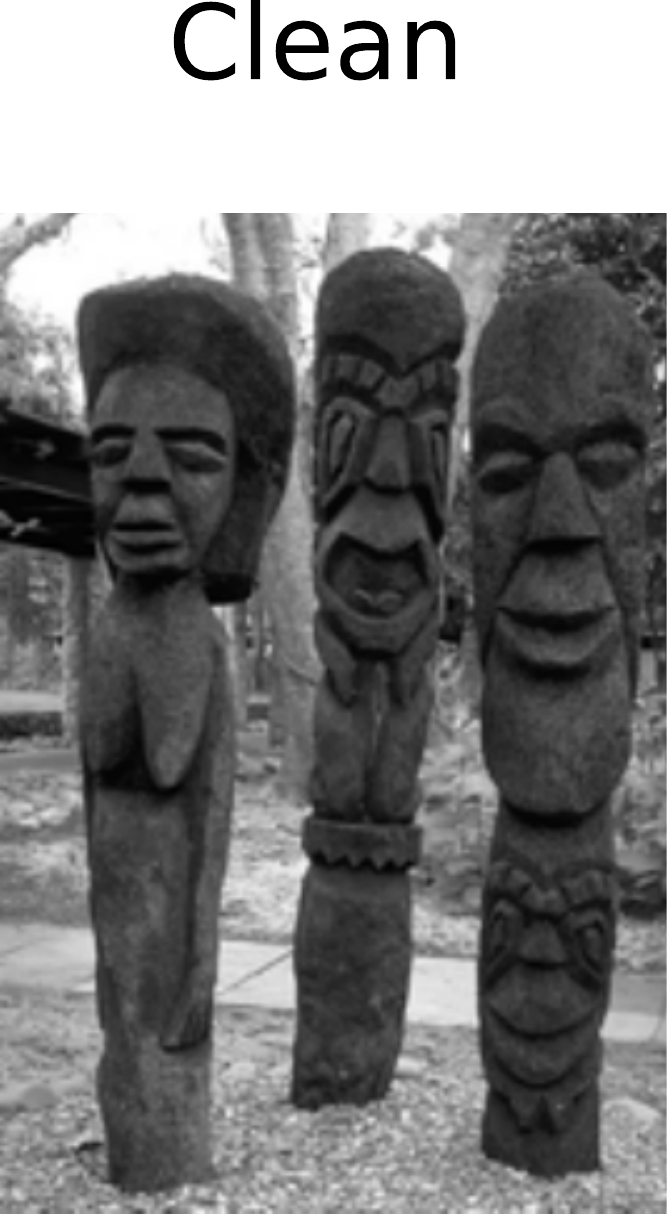}
	\includegraphics[width=0.16\textwidth]{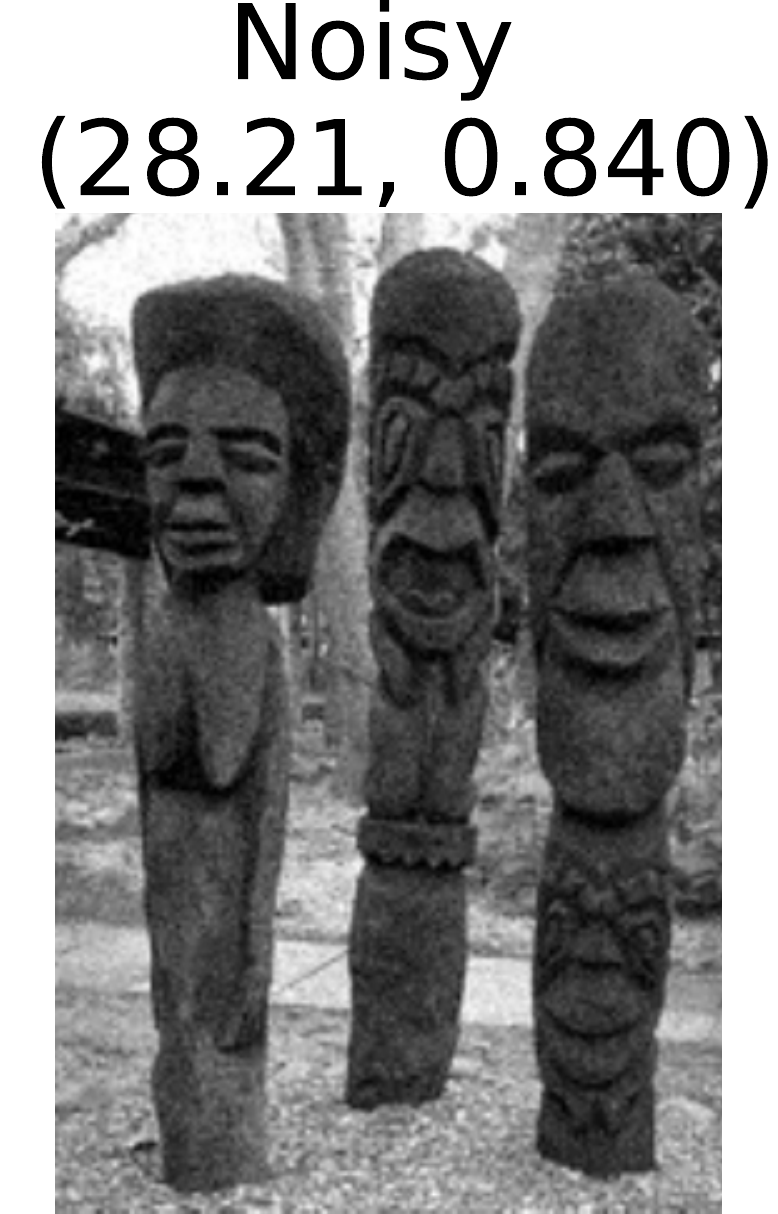}
	\includegraphics[width=0.16\textwidth]{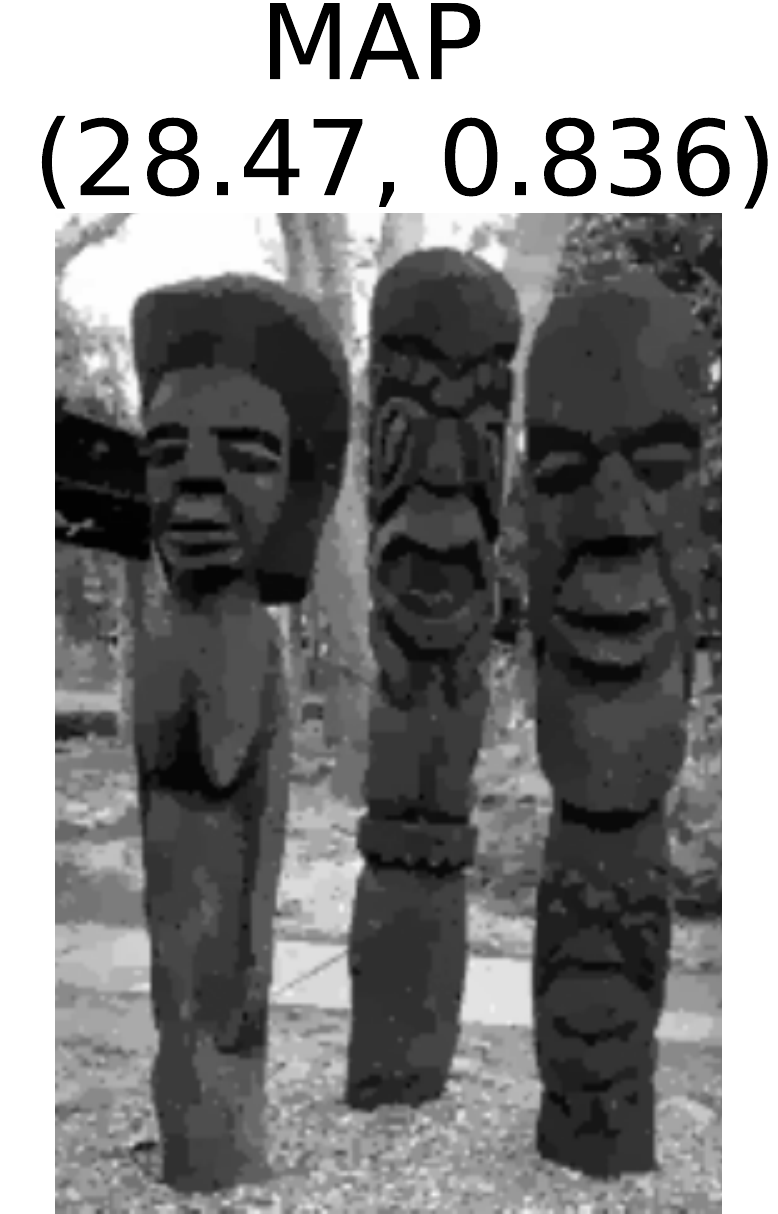}
	\includegraphics[width=0.16\textwidth]{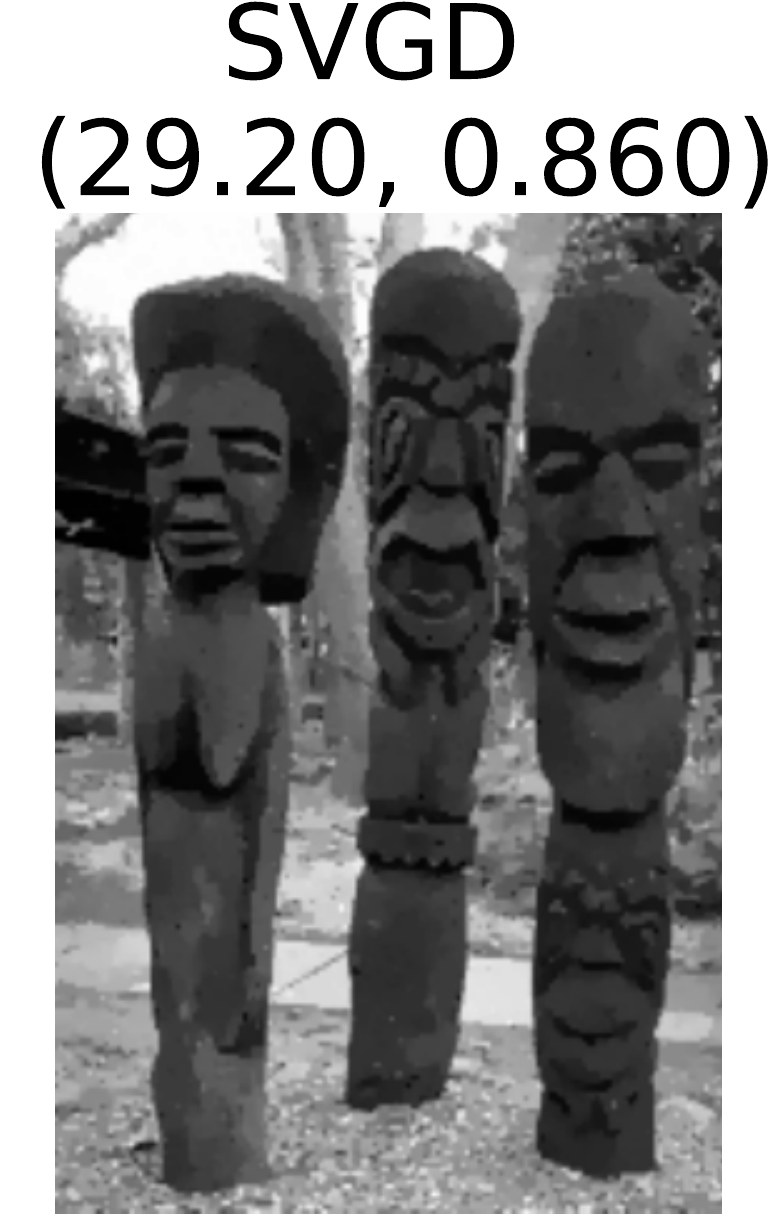}
	\includegraphics[width=0.16\textwidth]{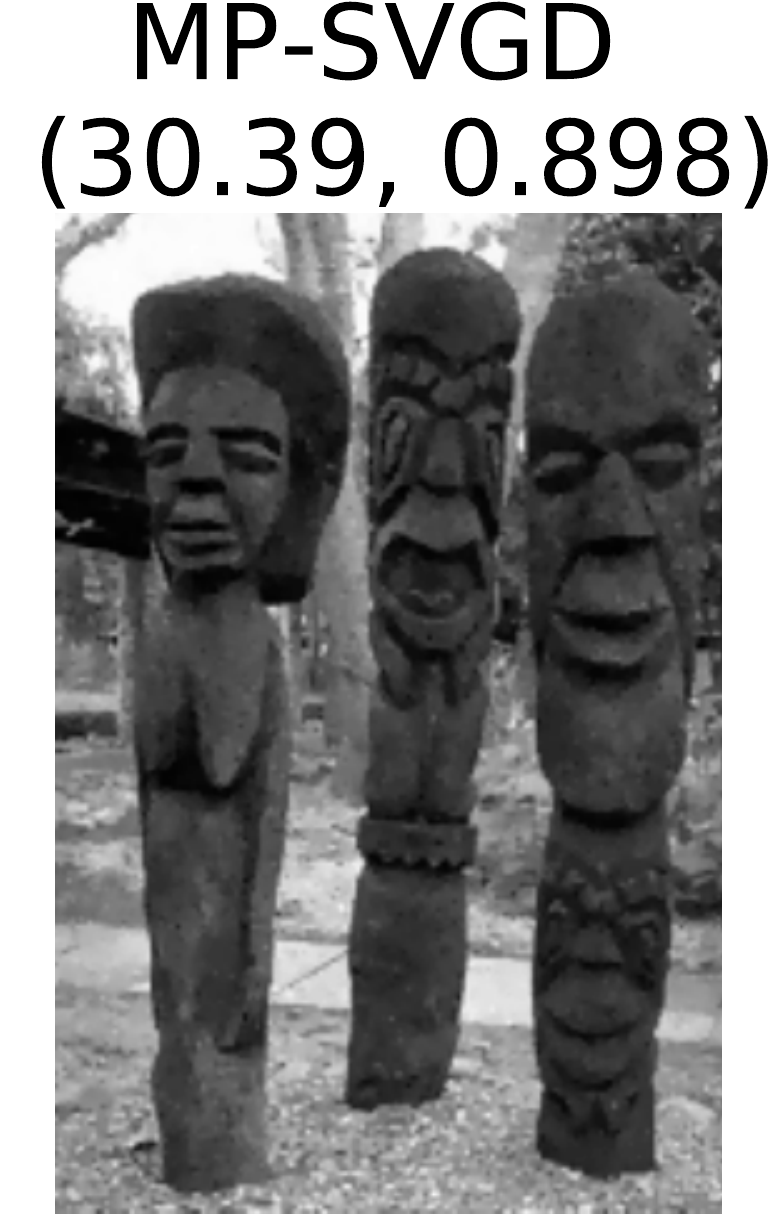}
	\includegraphics[width=0.16\textwidth]{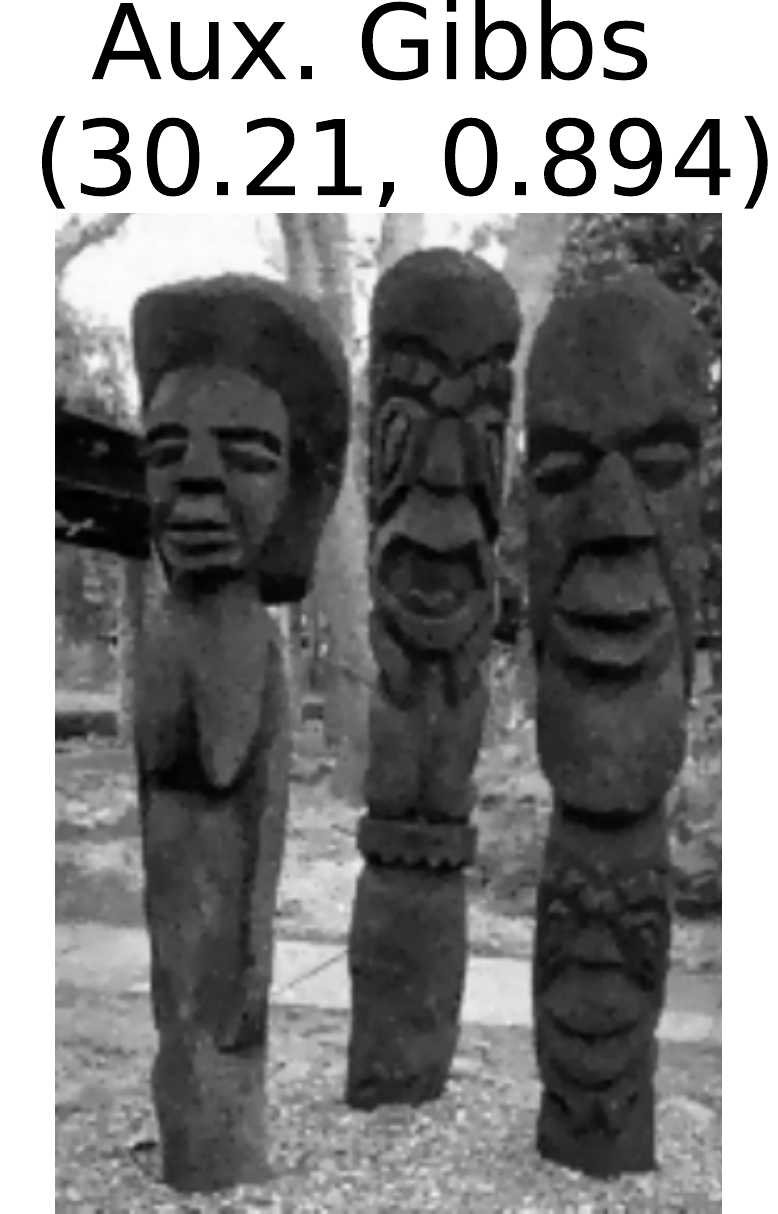}	
	\\	
	\vspace{0.3cm}
	\includegraphics[width=0.1373\textwidth]{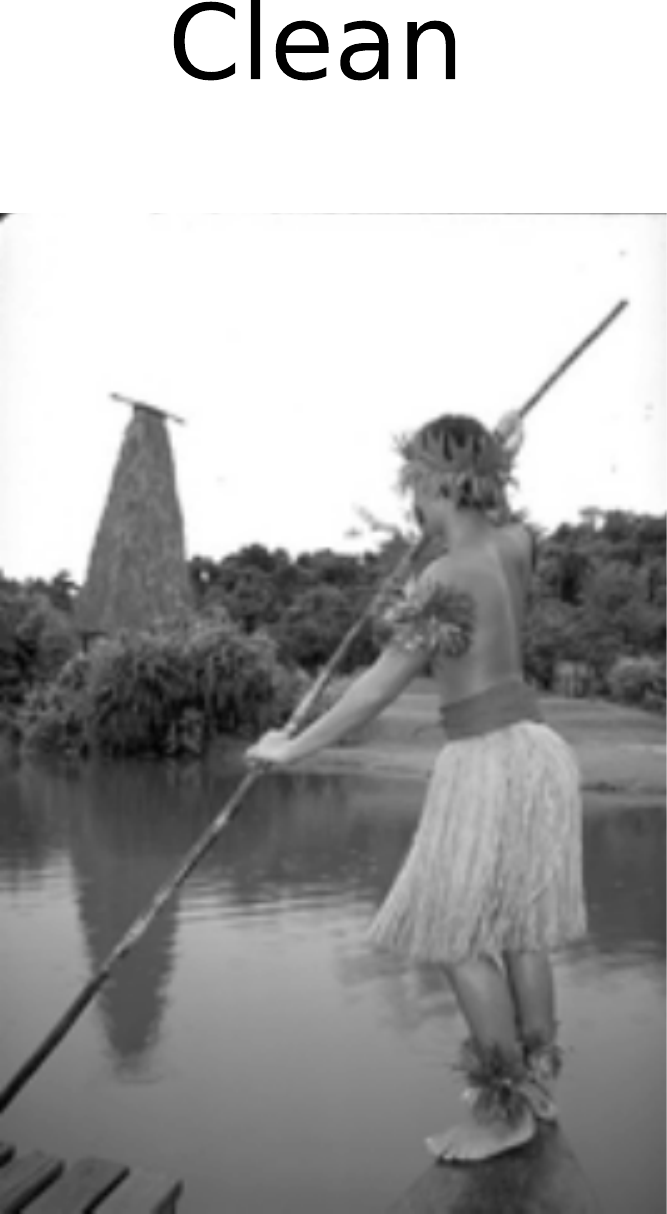}
	\includegraphics[width=0.16\textwidth]{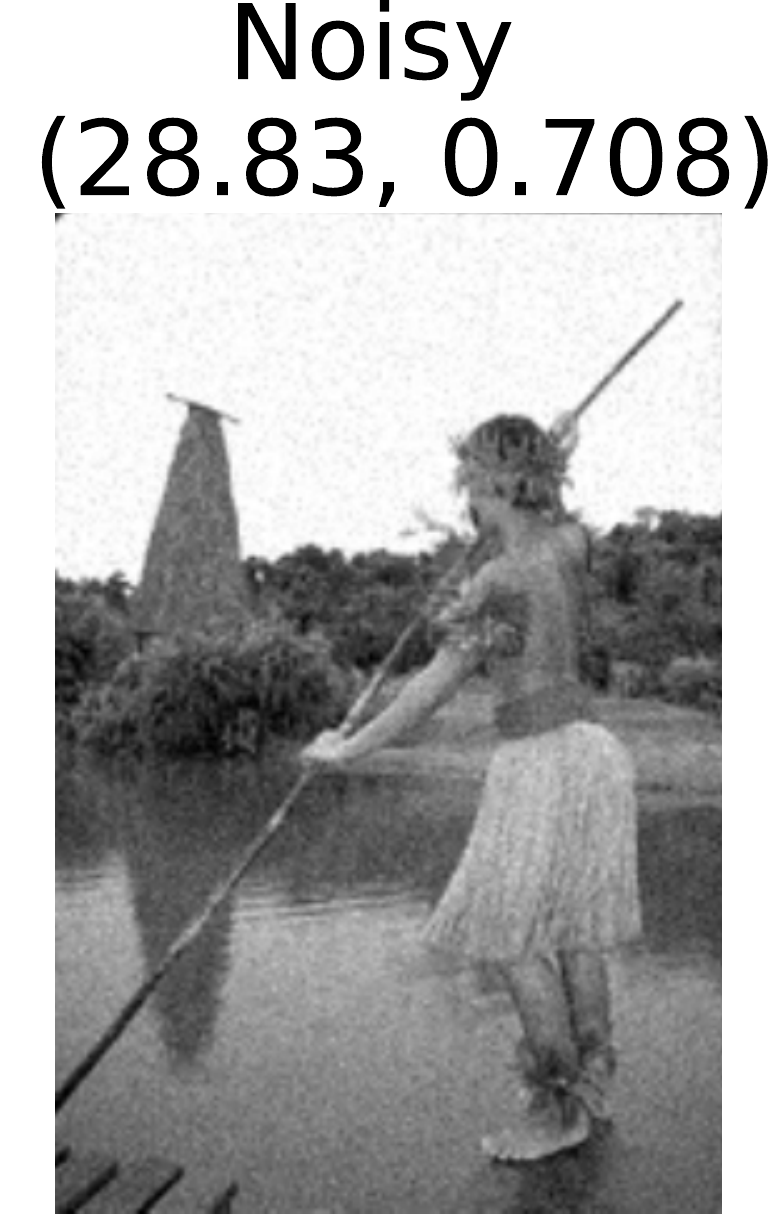}
	\includegraphics[width=0.16\textwidth]{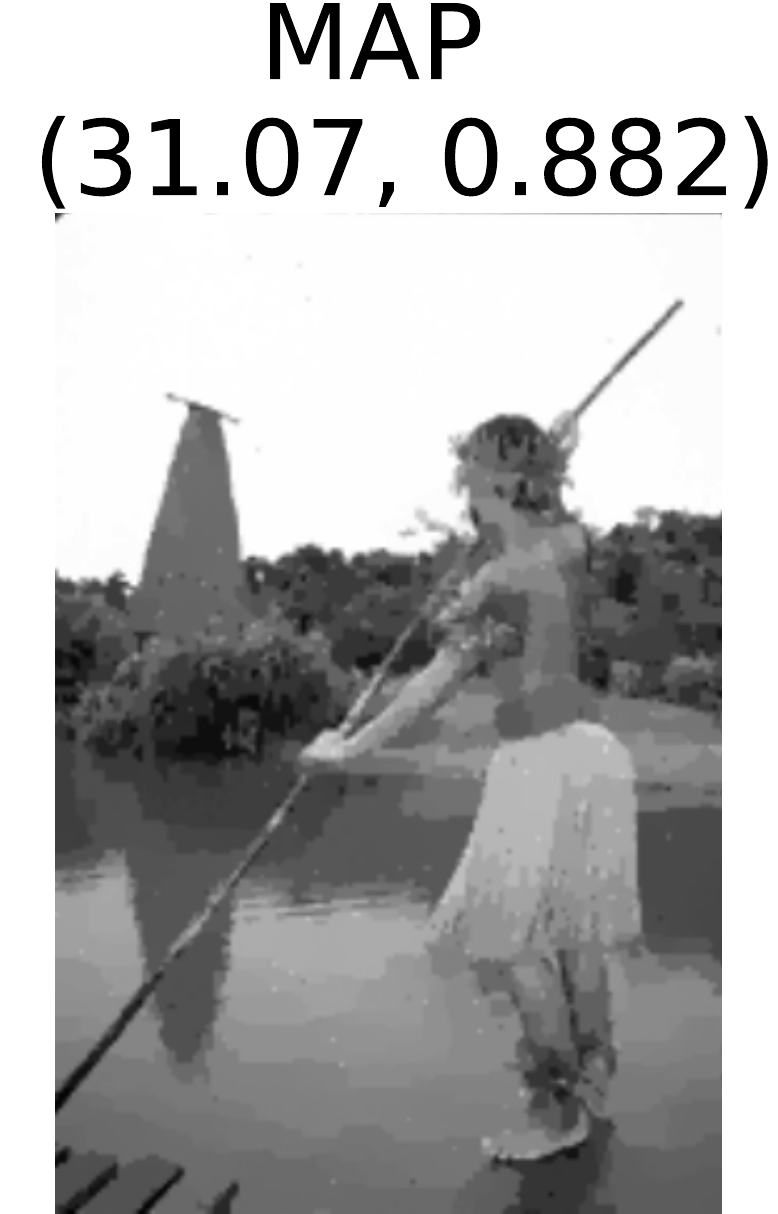}
	\includegraphics[width=0.16\textwidth]{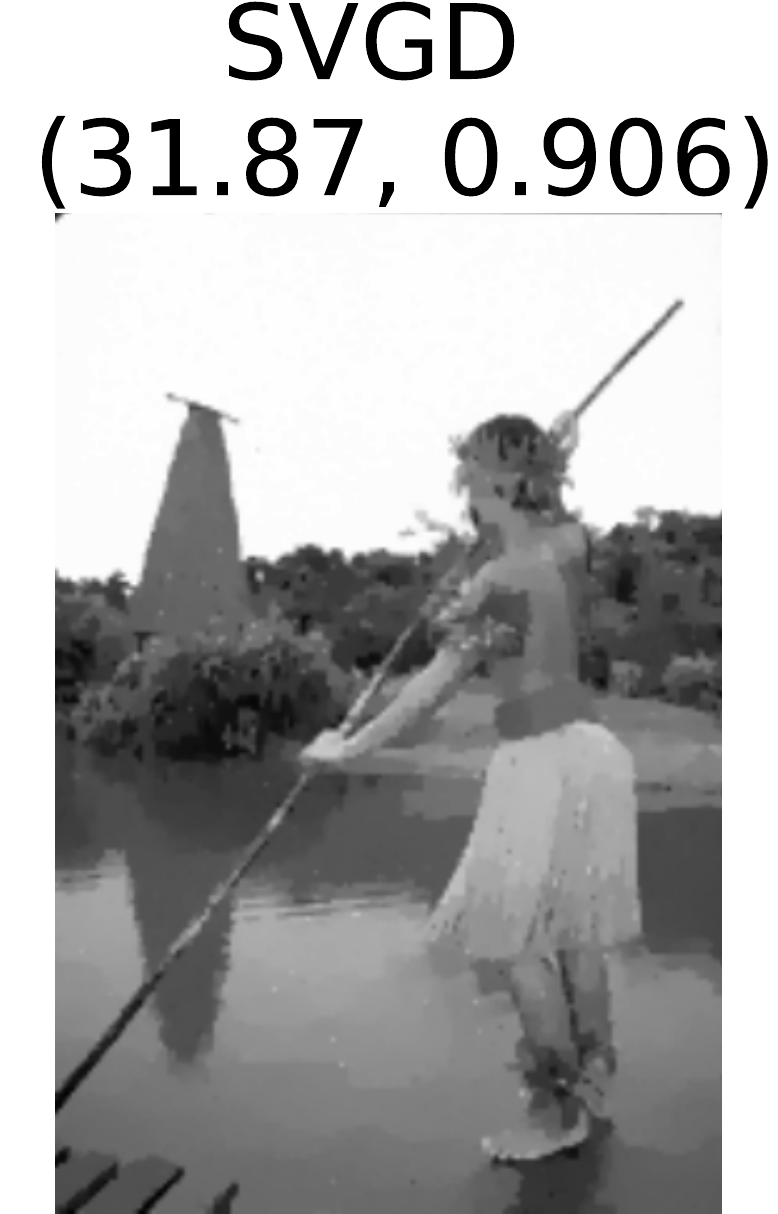}
	\includegraphics[width=0.16\textwidth]{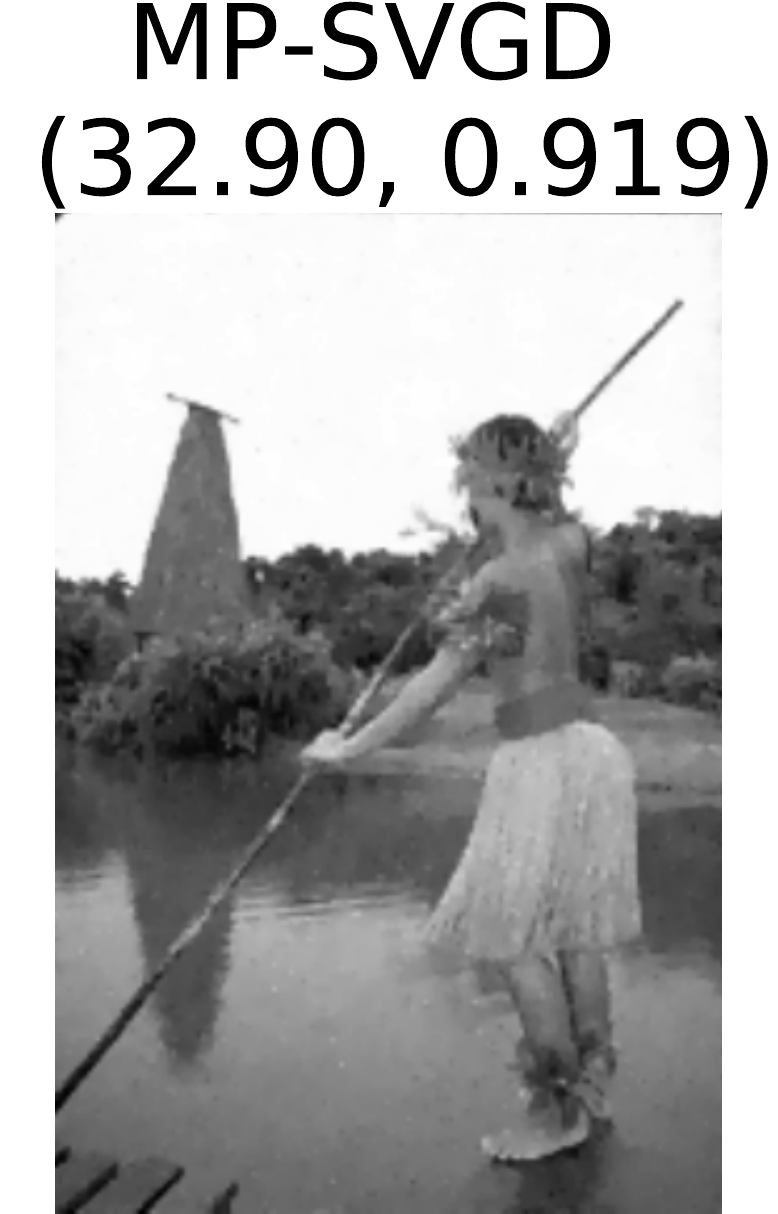}
	\includegraphics[width=0.16\textwidth]{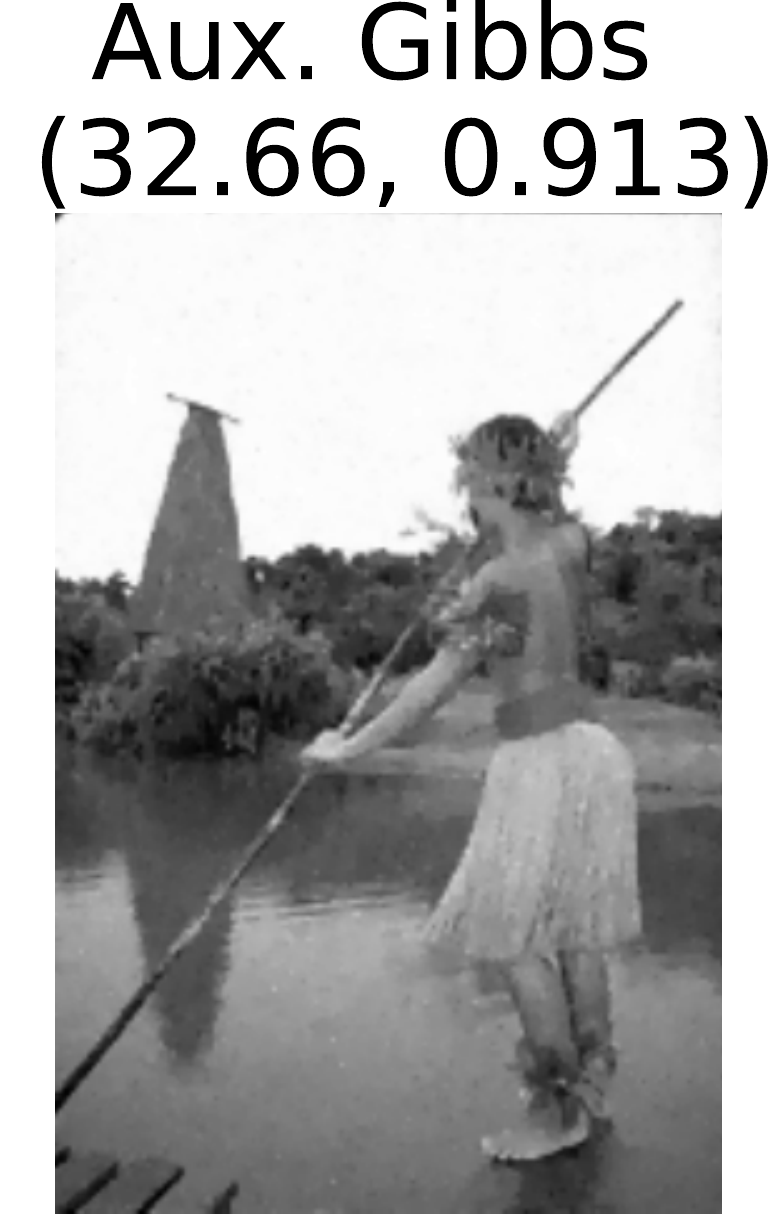}	
	\\
    \vspace{0.3cm}
	\includegraphics[width=0.1373\textwidth]{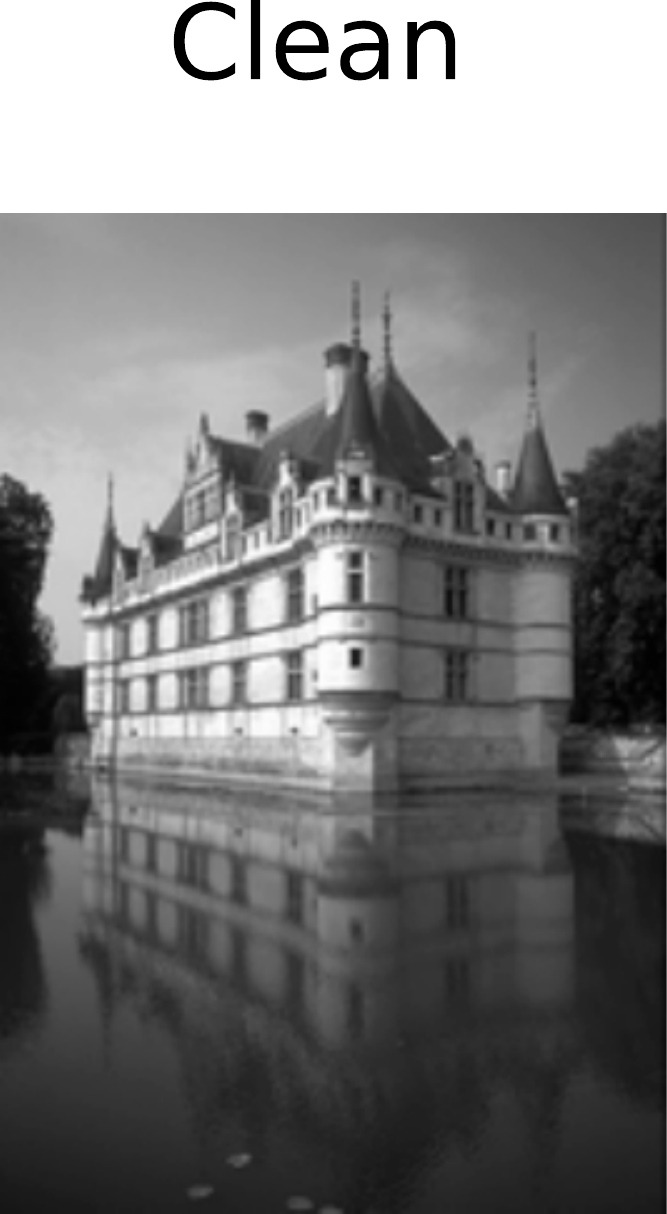}
	\includegraphics[width=0.16\textwidth]{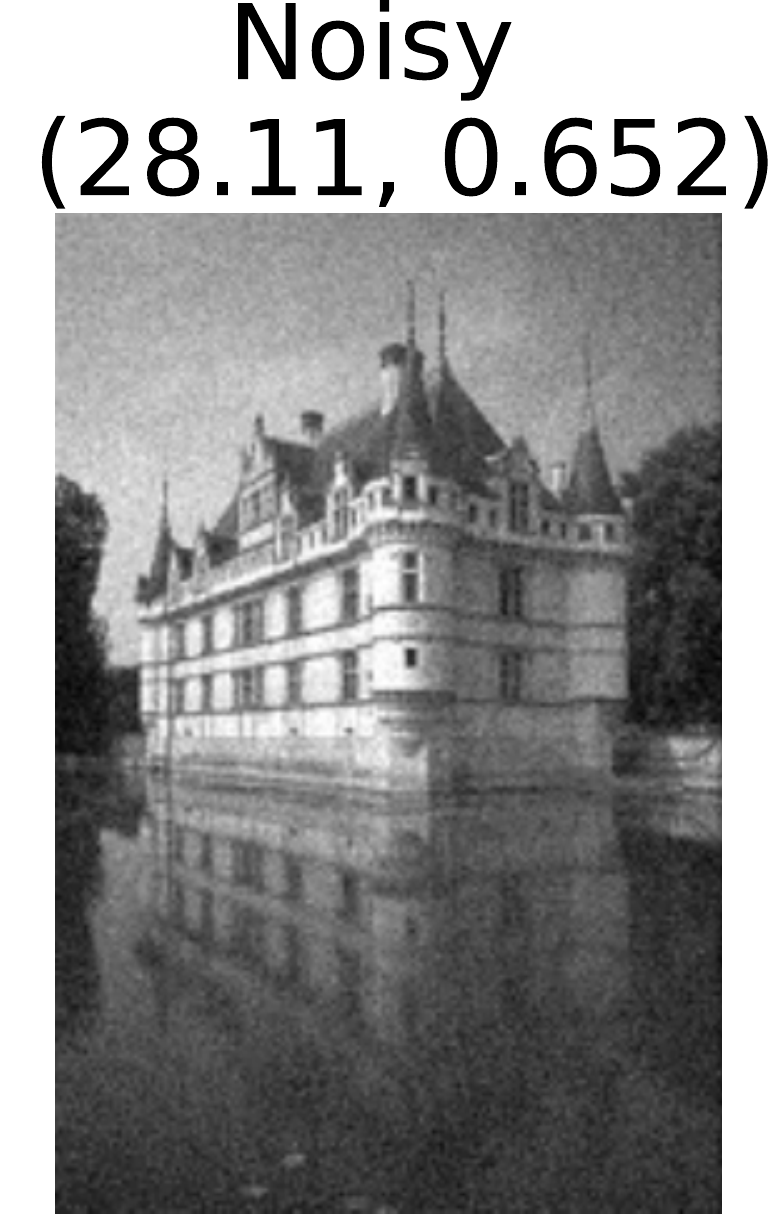}
	\includegraphics[width=0.16\textwidth]{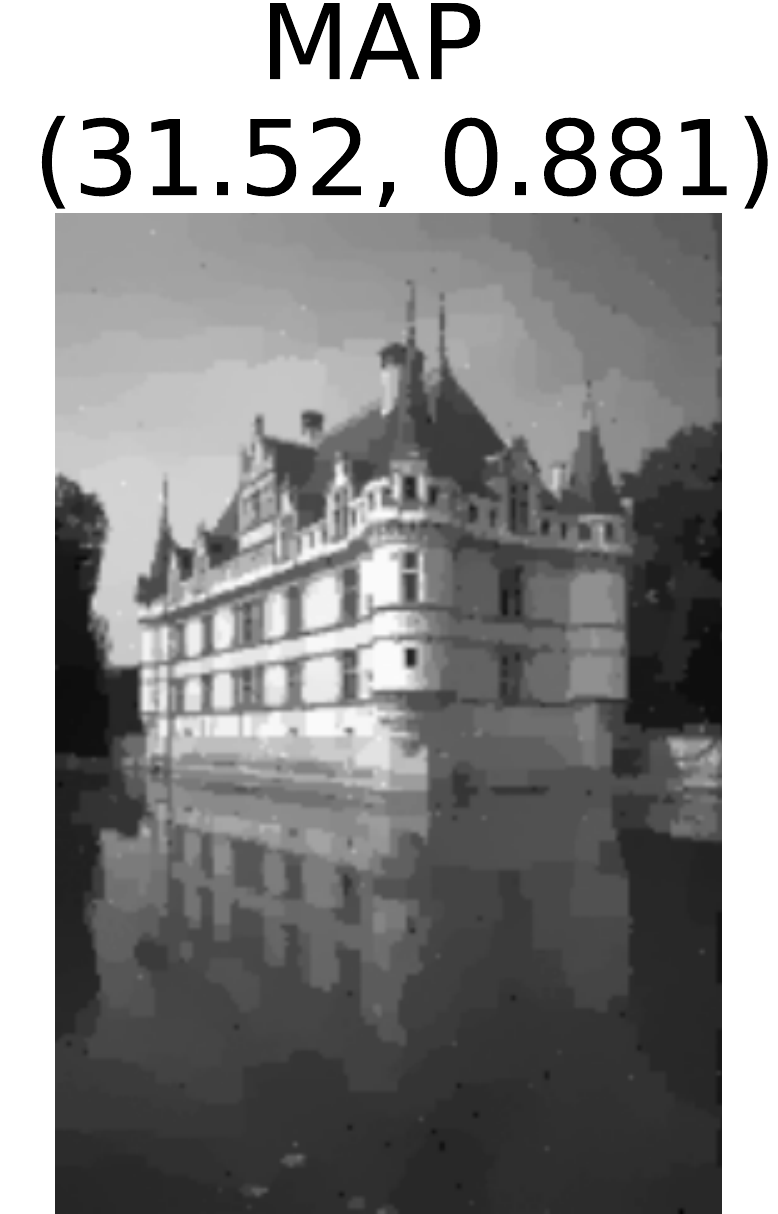}
	\includegraphics[width=0.16\textwidth]{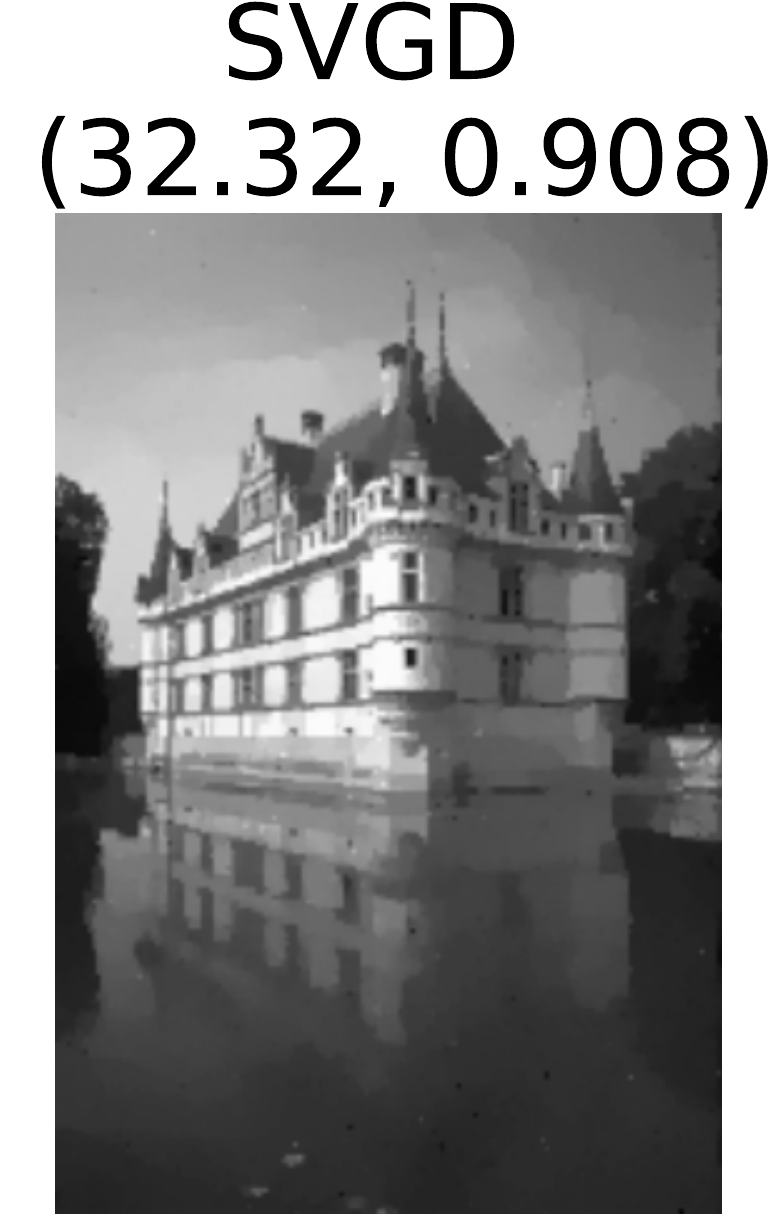}
	\includegraphics[width=0.16\textwidth]{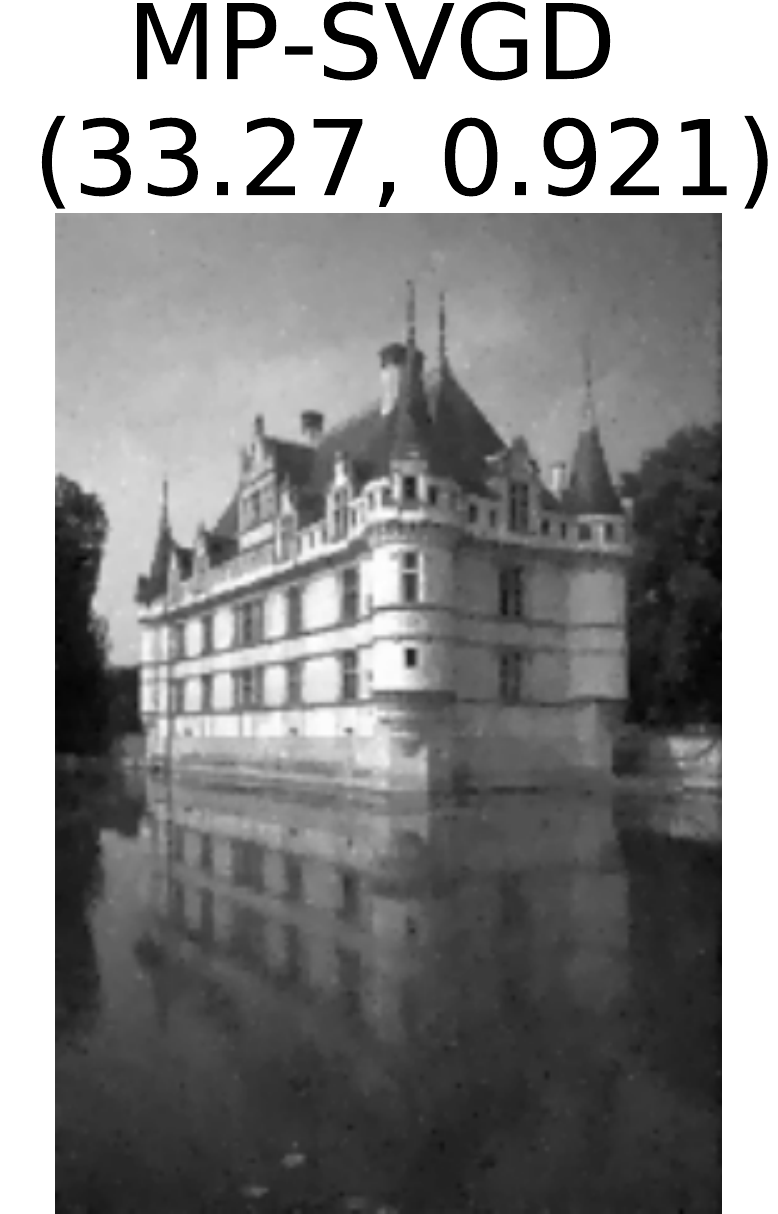}
	\includegraphics[width=0.16\textwidth]{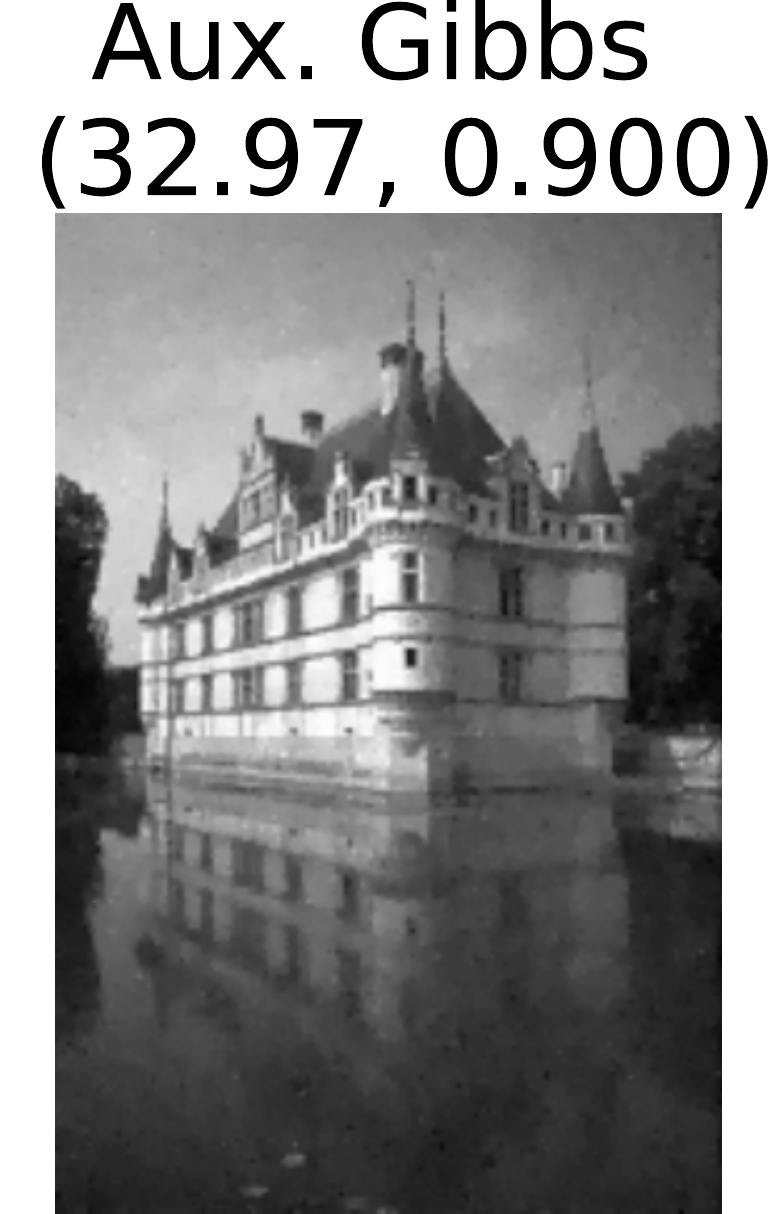}				
	\caption{Extra denoising results on BSD dataset using 50 particles, $240 \times 160$ pixels, $\sigma_n = 10$. The number in bracket is PSNR and SSIM.}
	\label{Fig5}
\end{figure*}

\subsection{Image Denoising}
Fig. \ref{Fig5} shows more denoising examples apart from {\it Lena} in the main body.

\bibliography{ref}
\bibliographystyle{icml2018}